\theoremstyle{plain}
\newtheorem{thm}{Theorem}[section]
\newtheorem{lem}{Lemma}
\newtheorem{assumption}{Assumption}
\theoremstyle{definition}
\newtheorem{defn}{Definition}
\newtheorem{cor}{Corollary}
\newtheorem{example}{Example}
\newtheorem{rmk}{Remark}
\newcommand*{\KeepStyleUnderBrace}[1]{
  \mathop{%
    \mathchoice
    {\underbrace{\displaystyle#1}}%
    {\underbrace{\textstyle#1}}%
    {\underbrace{\scriptstyle#1}}%
    {\underbrace{\scriptscriptstyle#1}}%
  }\limits
}
\def\ma{\bm{a}}
\def\mb{\bm{b}}
\def\mr{\bm{r}}
\def\mx{\bm{x}}
\def\mz{\bm{z}}
\def\mA{\bm{A}}
\def\mB{\bm{B}}
\def\mH{\bm{H}}
\def\mI{\bm{I}}
\def\mI{\bm{I}}
\def\mM{\bm{M}}
\def\mO{\bm{O}}
\def\mU{\bm{U}}
\def\mV{\bm{V}}
\def\mX{\bm{X}}
\def\mx{\bm x}
\def\mA{\bm A}
\def\mB{\bm B}
\def\mI{\bm I}
\def\mX{\bm X}
\def\mM{\bm M}
\def\mSigma{\bm \Sigma}
\def\tZ{\mathcal{Z}}
\def\tA{\mathcal{A}}
\def\tB{\mathcal{B}}
\def\tC{\mathcal{C}}
\def\tE{\mathcal{E}}
\def\tJ{\mathcal{J}}
\def\tL{\mathcal{L}}
\def\tO{\mathcal{O}}
\def\tP{\mathcal{P}}
\def\tS{\mathcal{S}}
\def\tT{\mathcal{T}}
\def\tX{\mathcal{X}}
\def\tY{\mathcal{Y}}
\def\tZ{\mathcal{Z}}
\def\trueT{\Theta^{\mathrm{true}}}
\def\trueb{\mb^{\mathrm{true}}}
\def\trueZ{\tZ^{\mathrm{true}}}
\def\plogZ{\nabla_{\tZ}\tL_{\tY}}
\def\pplogZ{\nabla^2_{\tZ}\tL_{\tY}}
\def\logl{\tL_{\tY,\Omega}}
\def\plogl{\nabla_{\Theta}\tL_{\tY,\Omega}}
\def\pplogl{\nabla^2_{\Theta}\tL_{\tY,\Omega}}
\def\flogl{\tL_{\tY}}
\def\fplogl{\nabla_{\Theta}\tL_{\tY}}
\def\fpplogl{\nabla^2_{\Theta}\tL_{\tY}}
\newcommand{\MFnormSize}[2]{#1\lVert#2#1\rVert_{F,\Omega}}
\def\entry#1{\llbracket #1 \rrbracket}
\def\entry#1{\llbracket #1 \rrbracket}
\def\Vec{\operatorname{Vec}}
\newcommand{\zeronormSize}[2]{#1\lVert#2#1\rVert_0}
\newcommand{\nnormSize}[2]{#1\lVert#2#1\rVert_*}
\newcommand{\anormSize}[2]{#1\lVert#2#1\rVert_M}
\newcommand{\vnormSize}[2]{#1\lVert#2#1\rVert_2}
\newcommand{\snormSize}[2]{#1\lVert#2#1\rVert_\sigma}
\newcommand{\FnormSize}[2]{#1\lVert#2#1\rVert_F} 
\newcommand{\mnormSize}[2]{#1\lVert#2#1\rVert_\infty}
\newcommand{\PiFnormSize}[2]{#1\lVert#2#1\rVert_{F,\Pi}} 
\DeclareMathOperator*{\argmin}{arg\,min}
\DeclareMathOperator*{\argmax}{arg\,max}
\newcommand{\cmark}{\ding{51}}%
\newcommand{\xmark}{\ding{55}}%
\title{Tensor denoising and completion based on ordinal observations}
\date{}
\author{%
Chanwoo Lee \\
University of Wisconsin -- Madison\\
\texttt{chanwoo.lee@wisc.edu} \\
\and
Miaoyan Wang \\
University of Wisconsin -- Madison\\
\texttt{miaoyan.wang@wisc.edu} \\
}
\begin{document}
\maketitle
\begin{abstract}
Higher-order tensors arise frequently in applications such as neuroimaging, recommendation system, and social network analysis. We consider the problem of low-rank tensor estimation from possibly incomplete, ordinal-valued observations. Two related problems are studied, one on tensor denoising and the other on tensor completion. We propose a multi-linear cumulative link model, develop a rank-constrained M-estimator, and obtain theoretical accuracy guarantees. Our mean squared error bound enjoys a faster convergence rate than previous results, and we show that the proposed estimator is minimax optimal under the class of low-rank models. Furthermore, the procedure developed serves as an efficient completion method which guarantees consistent recovery of an order-$K$ $(d,\ldots,d)$-dimensional low-rank tensor using only $\tilde \tO(Kd)$ noisy, quantized observations. We demonstrate the outperformance of our approach over previous methods on the tasks of clustering and collaborative filtering.
\end{abstract}

\section{Introduction}
Multidimensional arrays, a.k.a.\ tensors, arise in a variety of applications including recommendation systems~\citep{baltrunas2011incarmusic}, social networks~\citep{nickel2011three}, genomics~\citep{wang2019three}, and neuroimaging~\citep{zhou2013tensor}. There is a growing need to develop general methods that can address two main problems for analyzing these noisy, high-dimensional datasets. The first problem is tensor denoising which aims to recover a signal tensor from its noisy entries~\citep{hong2020generalized,wang2019multiway}. The second problem is tensor completion which examines the minimum number of entries needed for a consistent recovery~\citep{ghadermarzy2018learning,montanari2018spectral}. Low-rankness is often imposed to the signal tensor, thereby efficiently reducing the intrinsic dimension in both problems.

A number of low-rank tensor estimation methods have been proposed~\citep{hong2020generalized,wang2017tensor}, revitalizing classical methods such as CANDECOMP/PARAFAC (CP) decomposition~\citep{hitchcock1927expression} and Tucker decomposition~\citep{tucker1966some}. These tensor methods treat the entries as continuous-valued. In many cases, however, we encounter datasets of which the entries are qualitative. For example, the Netflix problem records the ratings of users on movies over time. Each data entry is a rating on a nominal scale \{{\it very like, like, neutral, dislike, very dislike}\}. Another example is in the signal processing, where the digits are frequently rounded or truncated so that only integer values are available. The qualitative observations take values in a limited set of categories, making the learning problem harder compared to  continuous observations.

Ordinal entries are categorical variables with an ordering among the categories; for example, {\it very like} $\prec$ {\it like} $\prec$ {\it neutral} $\prec \cdots$. The analyses of tensors with the ordinal entries are mainly complicated by two key properties needed for a reasonable model. First, the model should be invariant under a reversal of categories, say, from the Netflix example, {\it very like} $\succ$ {\it like} $\succ$ {\it neutral} $\succ \cdots$, but not under arbitrary label permutations. Second, the parameter interpretations should be consistent under merging or splitting of contiguous categories. The classical continuous tensor model~\citep{kolda2009tensor, ghadermarzy2019near} fails in the first aspect, whereas the binary tensor model~\citep{ghadermarzy2018learning} lacks the second property. An appropriate model for ordinal tensors has yet to be studied.

\begin{table}[ht]
\resizebox{\textwidth}{!}{
\begin{tabular}{l|ccc}
&\cite{bhaskar2016probabilistic}&\cite{ghadermarzy2018learning} &This paper\\
\hline 
Higher-order tensors ($K\geq 3$) & \xmark&\cmark& \cmark\\
\hline
Multi-level categories ($L\geq 3$)& \cmark& \xmark&\cmark\\
 \hline
 Error rate for tensor denoising&$d^{-1}$ for $K=2$& $d^{-(K-1)/2}$&$d^{-(K-1)}$\\
\hline
Optimality guarantee under low-rank models& unknown & \xmark & \cmark\\
\hline
Sample complexity for tensor completion&$d^K$& $Kd$&$Kd$
\end{tabular}
}
\caption{Comparison with previous work. For ease of presentation, we summarize the error rate and sample complexity assuming equal tensor dimension in all modes. $K$: tensor order; $L$: number of ordinal levels; $d$: dimension at each mode. }\label{tab:compare}
\end{table}

{\bf Our contributions.} We establish the recovery theory for signal tensors and quantization operators simultaneously from a limited number of highly discrete entries. Our main contributions are summarized in Table~\ref{tab:compare}. We propose a cumulative link model for higher-order tensors, develop a rank-constrained M-estimator, and obtain theoretical accuracy guarantees. The mean squared error bound is established, and we show that the obtained bound has minimax optimal rate in high dimensions under the low-rank model. Furthermore, our proposal guarantees consistent recovery of an order-$K$ $(d,\ldots,d)$-dimensional low-rank tensor using only $\tilde \tO(Kd)$ noisy, quantized observations.

Our work is connected to non-Gaussian tensor decomposition. Existing work focuses exclusively on univariate observations such as binary- or continuous-valued entries~\citep{wang2020learning,ghadermarzy2018learning}.  The problem of ordinal/quantized tensor is fundamentally more challenging than previously well-studied tensors for two reasons: (a) the entries do not belong to exponential family distribution, and (b) the observation contains much less information, because neither the underlying signal nor the quantization operator is known. The limited information makes the statistical recovery notably hard.

We address the challenge by proposing a cumulative link model that enjoys palindromic invariance~\citep{mccullagh1980regression}. A distinctive non-monotonic, phase-transition pattern is demonstrated, as we show in Section~\ref{sec:experiment}. We prove that the recovery from quantized tensors achieves equally good information-theoretical convergence as the continuous tensors. These results fills the gap between classical and non-classical (ordinal) observations, thereby greatly enriching the tensor model literature. 

From algorithm perspective, we address the challenge using the (non-convex) alternating algorithm. Earlier work has proposed an approximate (convex) algorithm for binary tensor completion~\citep{ghadermarzy2018learning}. Unlike matrix problems, convex-relaxation for low-rank tensors suffers from both computational intractability~\citep{hillar2013most} and statistical suboptimality. We improve the error bound from $\tO(d^{-(K-1)/2})$ in~\citet{ghadermarzy2018learning} to $\tO(d^{-(K-1)})$ and numerically compare the two approaches. 

We also highlight the challenge associated with higher-order tensors. Matrix completion has been proposed for binary observations~\citep{cai2013max,davenport2014,bhaskar20151} and for ordinal observations~\citep{bhaskar2016probabilistic}. We show that, applying existing matrix methods to higher-order tensors results in suboptimal estimates. A full exploitation of the higher-order structure is needed; this is another challenge we address in this paper.

\section{Preliminaries}
Let $\tY\in\mathbb{R}^{d_1\times \cdots \times d_K}$ denote an order-$K$ $(d_1,\ldots,d_K)$-dimensional tensor. We use $y_\omega$ to denote the tensor entry indexed by $\omega$, where $\omega\in[d_1]\times\cdots\times[d_K]$.  The Frobenius norm of $\tY$ is defined as $\FnormSize{}{\tY}=\sum_\omega y^2_\omega$ and the infinity norm is defined as $\mnormSize{}{\tY}=\max_{\omega}|y_\omega|$. We use $\tY_{(k)}$ to denote the unfolded matrix of size $d_k$-by-$\prod_{i\neq k}d_i$, obtained by reshaping the tensor along the mode $k\in[K]$. The Tucker rank of $\tY$ is defined as a length-$K$ vector $\mr = (r_1,\ldots,r_K)$, where $r_k$ is the rank of matrix $\tY_{(k)}$ for $k \in[K]$.  

We use lower-case letters ($a, b, \ldots$) for scalars/vectors, upper-case boldface letters ($\mA, \mB,  \ldots$) for matrices, and calligraphy letters ($\tA,\tB,\ldots$) for tensors of order three or greater.  An event $A$ is said to occur ``with very high probability'' if $\mathbb{P}(A)$ tends to 1 faster than any polynomial of tensor dimension $d_{\min}=\min\{d_1,\ldots,d_K\} \to\infty$. The indicator function of an event $A$ is denoted as $\mathds{1}\{A\}$. For ease of notation, we allow basic arithmetic operators (e.g., $\leq, +, -$) to be applied to pairs of tensors in an element-wise manner. We use the shorthand $[n]$ to denote $\{1,\ldots,n\}$ for $n \in N_{+}$.

\section{Model formulation and motivation}
\subsection{Observation model}
Let $\tY$ denote an order-$K$ $(d_1,\ldots,d_K)$-dimensional data tensor. Suppose the entries of $\tY$ are ordinal-valued, and the observation space consists of $L$ ordered levels, denoted by $[L]=\{1,\ldots,L\}$. We propose a cumulative link model for the ordinal tensor $\tY=\entry{y_\omega}\in[L]^{d_1\times \cdots\times d_K}$. Specifically, assume the entries $y_\omega$ are (conditionally) independently distributed with cumulative probabilities,
\begin{equation}\label{eq:model}
\mathbb{P}(y_\omega\leq \ell)=f(b_\ell-\theta_\omega),\ \text{for all}\ \ell\in[L-1],
\end{equation}
where $\mb=(b_1,\ldots,b_{L-1})$ is a set of unknown scalars satisfying $b_1<\cdots <b_{L-1}$, $\Theta=\entry{\theta_\omega}\in\mathbb{R}^{d_1\times \cdots \times d_K}$ is a continuous-valued parameter tensor satisfying certain low-dimensional structure (to be specified later), and $f(\cdot)\colon \mathbb{R}\mapsto[0,1]$ is a known, strictly increasing function. We refer to $\mb$ as the cut-off points and $f$ the link function.

The formulation~\eqref{eq:model} imposes an additive model to the transformed probability of cumulative categories. This modeling choice is to respect the ordering structure among the categories. For example, if we choose the inverse link $f^{-1}(x)=\log {x\over 1-x}$ to be the log odds, then the model~\eqref{eq:model} implies a linear spacing between the proportional odds,
\begin{equation}\label{eq:logodd}
\log {\mathbb{P}(y_{\omega}\leq \ell) \over \mathbb{P}(y_\omega >  \ell) } - \log {\mathbb{P}(y_{\omega}\leq {\ell-1}) \over \mathbb{P}(y_\omega >  {\ell-1}) } = b_\ell-b_{\ell-1},
\end{equation}
for all tensor entries $y_\omega$. When there are only two categories in the observation space (e.g., for binary tensors), the cumulative model~\eqref{eq:model} is equivalent to the usual binomial link model. In general, however, when the number of categories $L\geq 3$, the proportional odds assumption~\eqref{eq:logodd} is more parsimonious, in that, the ordered categories can be envisaged as contiguous intervals on the continuous scale, where the points of division are exactly $b_1<\cdots <b_{L-1}$. This interpretation will be made explicit in the next section.

\subsection{Latent-variable interpretation}\label{sec:latent}
The ordinal tensor model~\eqref{eq:model} with certain types of link $f$ has the equivalent representation as an $L$-level quantization model. 

Specifically, the entries of tensor $\tY=\entry{y_\omega}$ are modeled from the following generative process, 
\begin{align}\label{eq:quantization}
y_\omega&=
\begin{cases}
1,& \text{if $y^*_\omega\in(-\infty, b_1]$},\\
2,& \text{if $y^*_\omega\in(b_1, b_2]$},\\
\vdots  &\vdots\\
L,& \text{if $y^*_\omega\in(b_{L-1}, \infty)$},\\
\end{cases}
\end{align}
for all $\omega\in[d_1]\times \cdots \times [d_k]$. Here, $\tY^*=\entry{y^*_\omega}$ is a latent continuous-valued tensor following an additive noise model,
\begin{equation}\label{eq:latent}
\KeepStyleUnderBrace{\tY^*}_{\text{latent continuous-valued tensor}}=\KeepStyleUnderBrace{\Theta}_{\text{signal tensor}}+\KeepStyleUnderBrace{\tE}_{\text{i.i.d.\ noise}},
\end{equation}
where $\tE=\entry{\varepsilon_\omega}\in\mathbb{R}^{d_1\times \cdots \times d_K}$ is a noise tensor with independent and identically distributed (i.i.d.) entries according to distribution $\mathbb{P}(\varepsilon)$. From the viewpoint of~\eqref{eq:latent}, the parameter tensor $\Theta$ can be interpreted as the latent signal tensor prior to contamination and quantization.

The equivalence between the latent-variable model~\eqref{eq:quantization} and the cumulative link model~\eqref{eq:model} is established if the link $f$ is chosen to be the cumulative distribution function of noise $\varepsilon$, i.e., $f(\theta)=\mathbb{P}(\varepsilon\leq \theta)$. We describe two common choices of link $f$, or equivalently, the distribution of $\varepsilon$.

\begin{example}[Logistic model] The logistic model is characterized by~\eqref{eq:model} with $f(\theta)=(1+e^{-\theta/\sigma})^{-1}$, where $\sigma>0$ is the scale parameter. Equivalently, the noise $\varepsilon_\omega$ in~\eqref{eq:quantization} follows i.i.d.\ logistic distribution with scale parameter $\sigma$.
\end{example}
\begin{example}[Probit model] The probit model is characterized by~\eqref{eq:model} with
$f(\theta)=\mathbb{P}(z\leq \theta/\sigma)$, where $z\sim N(0,1)$. Equivalently, the noise $\varepsilon_\omega$ in~\eqref{eq:quantization} follows i.i.d.\ $N(0,\sigma^2)$.
\end{example}
Other link functions are also possible, such as Laplace, Cauchy, etc~\citep{mccullagh1980regression}. These latent variable models share the property that the ordered categories can be thought of as contiguous intervals on some continuous scale. We should point out that, although the latent-variable interpretation is incisive, our estimation procedure does not refer to the existence of $\tY^*$. Therefore, our model~\eqref{eq:model} is general and still valid in the absence of quantization process. More generally, we make the following assumptions about the link $f$.

\begin{assumption}\label{ass:link}
The link function $f$ is assumed to satisfy:
\begin{enumerate}[label=\textup{(\roman*)}]
\item The function $f(\theta)$ is strictly increasing and twice-differentiable in $\theta\in \mathbb{R}$.
\item The derivative $f'(\theta)$ is strictly log-concave and symmetric with respect to $\theta=0$.
\end{enumerate}
\end{assumption}

\subsection{Problem 1: Tensor denoising}~\label{sec:denoising}
The first question we aim to address is tensor denoising:\\

(P1) Given the quantization process induced by $f$ and the cut-off points $\mb$, how accurately can we estimate the latent signal tensor $\Theta$ from the ordinal observation $\tY$?\\

Clearly, the problem (P1) cannot be solved uniformly for all possible $\Theta$ with no assumptions. We focus on a class of ``low-rank'' and ``flat'' signal tensors, which is a plausible assumption in practical applications~\citep{zhou2013tensor,bhaskar20151}. Specifically, we consider the parameter space,
\begin{equation}\label{eq:space}
\tP=\left\{\Theta\in\mathbb{R}^{d_1\times \cdots \times d_K} \colon \text{rank}(\Theta)\leq \mr,\ \mnormSize{}{\Theta}\leq \alpha\right\},
\end{equation}
where $\mr=(r_1,\ldots,r_K)$ denotes the Tucker rank of $\Theta$.

The parameter tensor of our interest satisfies two constraints. The first is that $\Theta$ is a low-rank tensor, with $r_k=\tO(1)$ as $d_{\min}\to \infty$ for all $k\in[K]$. Equivalently, $\Theta$ admits the Tucker decomposition:
\begin{equation}\label{eq:Tucker}
\Theta=\tC\times_1\mM_1\times_1 \cdots \times_K \mM_K,
\end{equation}
where $\tC\in\mathbb{R}^{r_1\times \cdots \times r_K}$ is a core tensor, $\mM_k\in\mathbb{R}^{d_k\times r_k}$ are factor matrices with orthogonal columns, and $\times_k$ denotes the tensor-by-matrix multiplication~\citep{kolda2009tensor}. The Tucker low-rankness is popularly imposed in tensor analysis, and the rank determines the tradeoff between model complexity and model flexibility. Note that, unlike matrices, there are various notions of tensor low-rankness, such as CP rank~\citep{hitchcock1927expression} and train rank~\citep{oseledets2011tensor}. Some notions of low-rankness may lead to mathematically ill-posed optimization; for example, the best low CP-rank tensor approximation may not exist~\citep{de2008tensor}. We choose Tucker representation for well-posedness of optimization and easy interpretation.

The second constraint is that the entries of $\Theta$ are uniformly bounded in magnitude by a constant $\alpha \in \mathbb{R}_{+}$. In view of~\eqref{eq:latent}, we refer to $\alpha$ as the signal level. The boundedness assumption is a technical condition that avoids the degeneracy in probability estimation with ordinal observations.

\subsection{Problem 2: Tensor completion}
Motivated by applications in collaborative filtering, we also consider a more general setup when only a subset of tensor entries $y_\omega$ are observed. Let $\Omega\subset[d_1]\times \cdots\times[d_K]$ denote the set of observed indices. The second question we aim to address is stated as follows:\\

(P2) Given an incomplete set of ordinal observations $\{y_{\omega}\}_{\omega\in\Omega}$, how many sampled entries do we need to consistently recover $\Theta$ based on the model~\eqref{eq:model}?\\

The answer to (P2) depends on the choice of $\Omega$. We consider a general model on $\Omega$ that allows both uniform and non-uniform sampling. Specifically, let $\Pi=\{\pi_{i_1,\ldots,i_K}\}$ denote a predefine probability distribution over the index set such that $\sum_{\omega\in[d_1]\times \cdots \times [d_K]} \pi_\omega =1$. We assume that each index in $\Omega$ is drawn with replacement using distribution $\Pi$. This sampling model relaxes the uniform sampling in literature and is arguably a better fit in applications.

We consider the same parameter space~\eqref{eq:space} for the completion problem. In addition to the reasons mentioned in Section~\ref{sec:denoising}, the entrywise bound assumption also serves as the incoherence requirement for completion. In classical matrix completion, the incoherence is often imposed on the singular vectors. This assumption is recently relaxed for ``flat'' matrices with bounded magnitude~\citep{negahban2011estimation,cai2013max,bhaskar20151}. We adopt the same assumption for higher-order tensors.

\section{Rank-constrained M-estimator}\label{sec:theory}
We present a general treatment to both problems mentioned above. With a little abuse of notation, we use $\Omega$ to denote either the full index set $\Omega=[d_1]\times \cdots \times [d_K]$ (for the tensor denoising) or a random subset induced from the sampling distribution $\Pi$ (for the tensor completion). Define $b_0=-\infty$, $b_L=\infty$, $f(-\infty)=0$ and $f(\infty)=1$. The log-likelihood associated with the observed entries is
\begin{equation}\label{eq:objective}
\logl(\Theta, \mb)=\sum_{\omega\in\Omega}\sum_{\ell\in[L]} \Big\{\mathds{1}\{y_\omega=\ell\} \log \big[f(b_\ell-\theta_\omega)- f(b_{\ell-1}-\theta_\omega)\big]\Big\}.
\end{equation}
We propose a rank-constrained maximum likelihood estimator (a.k.a.\ M-estimator) for $\Theta$,
\begin{align}\label{eq:estimator}
\hat \Theta&=\argmax_{\Theta\in \tP}\logl(\Theta, \mb),\ \text{where}\notag \\
\tP&=\left\{\Theta\in\mathbb{R}^{d_1\times \cdots \times d_K} \colon \text{rank}(\Theta)\leq \mr,\ \mnormSize{}{\Theta}\leq \alpha\right\}.
 \end{align}
In practice, the cut-off points $\mb$ are unknown and should be jointly estimated with $\Theta$. We will present the theory and algorithm with known $\mb$ in the main paper. The adaptation for unknown $\mb$ is addressed in Section~\ref{sec:algorithm} and the Supplement.

We define a few key quantities that will be used in our theory. Let $g_\ell=f(\theta+b_\ell)-f(\theta+b_{\ell-1})$ for all $\ell\in[L]$, and
\begin{equation}\label{eq:regular}
A_\alpha = \min_{\ell\in[L], |\theta|\leq \alpha} g_\ell(\theta),\quad  U_\alpha=\max_{\ell\in[L], |\theta|\leq \alpha } {|\dot{g}_\ell(\theta)|\over g_\ell(\theta)},\quad L_\alpha = \min_{\ell\in[L], |\theta|\leq \alpha }\left[{\dot{g}_\ell^2 (\theta)\over g_\ell^2(\theta)} -{\ddot{g}_\ell(\theta)\over g_\ell(\theta)}\right],
\end{equation}
where $\dot{g}(\theta)=dg(\theta)/d\theta$, and $\alpha$ is the entrywise bound of $\Theta$. In view of equation~\eqref{eq:latent}, these quantities characterize the geometry including flatness and convexity of the latent noise distribution. Under Assumption~\ref{ass:link}, all these quantities are strictly positive and independent of tensor dimension.

\subsection{Estimation error for tensor denoising}\label{sec:denosing}
For the tensor denoising problem, we assume that the full set of tensor entries are observed. We assess the estimation accuracy using the mean squared error (MSE):
\[
\text{MSE}(\hat \Theta, \trueT)={1\over \prod_k d_k}\FnormSize{}{\Theta-\trueT}^2.
\]
The next theorem establishes the upper bound for the MSE of the proposed $\hat \Theta$ in~\eqref{eq:estimator}.

\begin{thm}[Statistical convergence] \label{thm:rate}
Consider an ordinal tensor $\tY\in[L]^{d_1\times\dots\times d_K}$ generated from model~\eqref{eq:model}, with the link function $f$ and the true coefficient tensor $\trueT\in\tP$. Define $r_{\max}=\max_k r_k$. Then, with very high probability, the estimator in~\eqref{eq:estimator} satisfies
\begin{equation}\label{eq:rate}
\mathrm{MSE}(\hat \Theta, \trueT) \leq \min\left( 4\alpha^2,\ {c_1  U^2_\alpha r_{\max}^{K-1}  \over  L^2_\alpha } {\sum_kd_k\over  \prod_k d_k} \right),
\end{equation}
where $c_1 >0$ is a constant that depends only on $K$.
\end{thm}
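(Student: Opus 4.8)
The plan is to reduce the statement to a single quadratic inequality in $\FnormSize{}{\Delta}$, where $\Delta:=\hat\Theta-\trueT$, by pairing a deterministic curvature bound against a stochastic low-rank bound. The first term of the minimum is immediate: both $\hat\Theta$ and $\trueT$ lie in $\tP$, so $\mnormSize{}{\Delta}\leq 2\alpha$ and $\FnormSize{}{\Delta}^2\leq 4\alpha^2\prod_k d_k$, i.e.\ $\mathrm{MSE}(\hat\Theta,\trueT)\leq 4\alpha^2$. For the second term I would begin from the basic inequality $\logl(\hat\Theta,\mb)\geq\logl(\trueT,\mb)$, which holds because $\hat\Theta$ maximizes the objective over $\tP$ and $\trueT\in\tP$.

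Next I would expand $\logl(\cdot,\mb)$ to second order along the segment from $\trueT$ to $\hat\Theta$. Writing the per-entry log-mass as $\log g_{y_\omega}(\cdot)$, the Hessian of $\logl$ is diagonal with entries $\partial^2_\theta\log g_{y_\omega}(\tilde\theta_\omega)$ evaluated at an intermediate point $\tilde\Theta=\trueT+t\Delta$, $t\in[0,1]$. Since $\mnormSize{}{\trueT}\leq\alpha$ and $\mnormSize{}{\hat\Theta}\leq\alpha$, convexity of the interval forces $|\tilde\theta_\omega|\leq\alpha$; and because $L_\alpha$ in~\eqref{eq:regular} is a pointwise minimum of $\dot g_\ell^2/g_\ell^2-\ddot g_\ell/g_\ell=-\partial_\theta^2\log g_\ell$ over all levels $\ell$ and all $|\theta|\leq\alpha$, the sample log-likelihood is $L_\alpha$-strongly concave on this region \emph{deterministically}, with no expectation required. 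Assumption~\ref{ass:link} is what guarantees $L_\alpha>0$ and dimension-free: log-concavity of $f'$ makes each $g_\ell$ log-concave, and strict log-concavity makes the curvature strictly positive. The Taylor expansion then gives $0\leq\logl(\hat\Theta,\mb)-\logl(\trueT,\mb)\leq\langle\mathcal{G},\Delta\rangle-\tfrac{L_\alpha}{2}\FnormSize{}{\Delta}^2$, where $\mathcal{G}$ is the score tensor (the gradient of $\logl$ at $\trueT$), hence $\tfrac{L_\alpha}{2}\FnormSize{}{\Delta}^2\leq\langle\mathcal{G},\Delta\rangle$.

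It remains to bound the stochastic inner product $\langle\mathcal{G},\Delta\rangle$, and this is where the main difficulty lies. The entries of $\mathcal{G}$ are independent across $\omega$, mean zero by the score identity, and bounded in magnitude by $U_\alpha$, hence sub-Gaussian with parameter $U_\alpha$. Because $\Delta$ is a difference of two tensors of Tucker rank $\leq\mr$, it has multilinear rank at most $(2r_1,\ldots,2r_K)$, so $\langle\mathcal{G},\Delta\rangle\leq\FnormSize{}{\Delta}\cdot\sup_{\mathcal{Z}}\langle\mathcal{G},\mathcal{Z}\rangle$, the supremum taken over unit-Frobenius tensors $\mathcal{Z}$ of multilinear rank $\leq(2r_1,\ldots,2r_K)$. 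Controlling this supremum sharply is the crux. I would peel off one mode at a time: taking the SVD of a mode-$k$ unfolding of $\mathcal{Z}$ and using $\|\cdot\|_*\leq\sqrt{2r_k}\,\FnormSize{}{\cdot}$ on the resulting rank-$2r_k$ matrix extracts a factor $\sqrt{2r_k}$ and lowers the tensor order by one. After $K-1$ such reductions the base case is the ordinary tensor spectral norm $\|\mathcal{G}\|_{\textrm{op}}$, which for sub-Gaussian$(U_\alpha)$ entries concentrates at $\lesssim U_\alpha\sqrt{\sum_k d_k}$ by an $\varepsilon$-net union bound over the $K$ unit spheres. Assembling the factors yields $\sup_{\mathcal{Z}}\langle\mathcal{G},\mathcal{Z}\rangle\lesssim U_\alpha\,r_{\max}^{(K-1)/2}\sqrt{\sum_k d_k}$ with very high probability.

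Finally I would combine the two bounds. Substituting the stochastic bound into $\tfrac{L_\alpha}{2}\FnormSize{}{\Delta}^2\leq\langle\mathcal{G},\Delta\rangle\lesssim U_\alpha r_{\max}^{(K-1)/2}\sqrt{\sum_k d_k}\,\FnormSize{}{\Delta}$ and cancelling one power of $\FnormSize{}{\Delta}$ gives $\FnormSize{}{\Delta}\lesssim (U_\alpha/L_\alpha)\,r_{\max}^{(K-1)/2}\sqrt{\sum_k d_k}$; squaring and dividing by $\prod_k d_k$ produces $\mathrm{MSE}(\hat\Theta,\trueT)\lesssim (U_\alpha^2/L_\alpha^2)\,r_{\max}^{K-1}\,\sum_k d_k/\prod_k d_k$, with the hidden constant depending only on $K$ through the $K-1$ peeling steps and the spectral-norm net. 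Taking the minimum with the trivial bound $4\alpha^2$ yields~\eqref{eq:rate}. I expect the mode-by-mode peeling bound on $\sup_{\mathcal{Z}}\langle\mathcal{G},\mathcal{Z}\rangle$ to be the delicate step, both because it is responsible for the $r_{\max}^{K-1}$ factor and because the nets over the $K$ Stiefel and sphere factors must be handled uniformly while preserving the very-high-probability guarantee.
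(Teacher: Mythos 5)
Your proposal is correct and follows essentially the same route as the paper: a second-order Taylor expansion combined with the optimality of $\hat\Theta$, deterministic $L_\alpha$-strong concavity for the quadratic term, and a spectral-norm/nuclear-norm duality bound $\langle\mathcal{G},\Delta\rangle\leq\snormSize{}{\mathcal{G}}\,\nnormSize{}{\Delta}$ with $\nnormSize{}{\Delta}\lesssim r_{\max}^{(K-1)/2}\FnormSize{}{\Delta}$ for the rank-$\leq 2\mr$ difference and an $\varepsilon$-net concentration bound $\snormSize{}{\mathcal{G}}\lesssim U_\alpha\sqrt{\sum_k d_k}$. Your mode-by-mode peeling is exactly how the paper's nuclear-norm-to-Frobenius lemma is established, so the two arguments coincide step for step.
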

Theorem~\ref{thm:rate} establishes the statistical convergence for the estimator~\eqref{eq:estimator}. In fact, the proof of this theorem (see the Supplement) shows that the same statistical rate holds, not only for the global optimizer~\eqref{eq:estimator}, but also for any local optimizer $\check \Theta$ in the level set $\{\check\Theta\in\tP\colon \logl(\check\Theta)\geq \logl(\trueT)\}$. This suggests that the local optimality itself is not necessarily a severe concern in our context, as long as the convergent objective is large enough. In Section ~\ref{sec:algorithm}, we perform empirical studies to assess the algorithmic stability.

To gain insight into the bound~\eqref{eq:rate}, we consider a special setting with equal dimension in all modes, i.e., $d_1=\cdots=d_K=d$. In such a case, our bound \eqref{eq:rate} reduces to
\begin{equation}\label{eq:ours}
\text{MSE}(\hat \Theta, \trueT) \asymp d^{-(K-1)}, \quad \text{as}\ d\to \infty.
\end{equation}
Hence, our estimator achieves consistency with polynomial convergence rate. We compare the bound with existing literature. In the special case $L=2$, \citet{ghadermarzy2018learning} proposed a max-norm constrained estimator $\tilde{\Theta}$ with $\text{MSE}(\tilde{\Theta}, \trueT) \asymp  d^{-(K-1)/2}$. In contrast, our estimator converges at a rate of $d^{-(K-1)}$, which is substantially faster than theirs. This provides a positive answer to the open question posed in~\citet{ghadermarzy2018learning} whether the square root in the bound is removable. The improvement stems from the fact that we have used the exact low-rankness of $\Theta$, whereas the surrogate rank measure employed in~\citet{ghadermarzy2018learning} is scale-sensitive.

Our bound also generalizes the previous results on ordinal matrices. The convergence rate for rank-constrained matrix estimation is $\tO(1/\sqrt{d})$~\citep{bhaskar2016probabilistic}, which fits into our special case when $K=2$. Furthermore, our result~\eqref{eq:rate} reveals that the convergence becomes favorable as the order of data tensor increases. Intuitively, the sample size for analyzing a data tensor is the number of entries, $\prod_k d_k$, and the number of free parameters is roughly on the order of $\sum_{k}d_k$, assuming $r_{\max}=\tO(1)$. A higher tensor order implies higher effective sample size per parameter, thus achieving a faster convergence rate in high dimensions.

A similar conclusion is obtained for the prediction error, measured in Kullback-Leibler (KL) divergence, between the categorical distributions in the observation space.
\begin{cor}[Prediction error]~\label{cor:prediction}
Assume the same set-up as in Theorem~\ref{thm:rate}. Let $\mathbb{P}_{\tY}$ and $\hat{\mathbb{P}}_{\tY}$ denote the distributions generating the $L$-level ordinal tensor $\tY$, given the true parameter $\Theta$ and its estimator $\hat \Theta$, respectively. Assume $L\geq 2$. Then, with very high probability,
\begin{equation}\label{eq:KLrate}
\text{KL}(\mathbb{P}_{\tY} || \hat{\mathbb{P}}_{\tY}) \leq  {c_1 U^2_\alpha r_{\max}^{K-1}  \over L^2_\alpha } {(4L-6)\dot{f}^2(0)  \over A_\alpha} {\sum_kd_k\over  \prod_k d_k},
\end{equation}
where $c_1 >0$ is the same constant as in Theorem~\ref{thm:rate}.
\end{cor}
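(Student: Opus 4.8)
The plan is to derive the KL bound as a direct consequence of the MSE bound in Theorem~\ref{thm:rate}, by controlling the KL divergence between the two categorical distributions in terms of the Frobenius distance $\FnormSize{}{\hat\Theta-\trueT}$. Since the entries $y_\omega$ are conditionally independent across $\omega$, the overall KL divergence tensorizes: $\text{KL}(\mathbb{P}_{\tY} \| \hat{\mathbb{P}}_{\tY}) = \sum_\omega \text{KL}(p_\omega \| \hat p_\omega)$, where $p_\omega$ and $\hat p_\omega$ are the length-$L$ categorical probability vectors with entries $g_\ell(\theta_\omega)$ and $g_\ell(\hat\theta_\omega)$ respectively, and $g_\ell(\theta)=f(b_\ell-\theta)-f(b_{\ell-1}-\theta)$. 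So it suffices to bound the \emph{per-entry} KL divergence by a multiple of $(\theta_\omega-\hat\theta_\omega)^2$ and then sum.

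\medskip

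The key step is the per-entry estimate $\text{KL}(p_\omega \| \hat p_\omega) \leq C\,(\theta_\omega-\hat\theta_\omega)^2$ for an explicit constant $C$ depending on the regularity quantities $A_\alpha$ and $\dot f(0)$. First I would use a second-order Taylor expansion of the map $\theta \mapsto \text{KL}(p_\omega(\theta_\omega)\| p_\omega(\theta))$ about $\theta=\theta_\omega$; the zeroth and first-order terms vanish at the truth, so the leading contribution is quadratic and governed by the Fisher information $\sum_\ell \dot g_\ell^2(\theta)/g_\ell(\theta)$ evaluated at an intermediate point. I would bound each $\dot g_\ell(\theta)^2$ from above using $|\dot g_\ell(\theta)| = |\dot f(b_{\ell-1}-\theta)-\dot f(b_\ell-\theta)|$ together with the fact that $\dot f$ attains its maximum $\dot f(0)$ at $0$ (a consequence of $\dot f$ being symmetric and log-concave under Assumption~\ref{ass:link}(ii)), and bound each $g_\ell(\theta)$ from below by $A_\alpha$ from the definition in~\eqref{eq:regular}. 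Summing the Fisher-information terms over the $L$ levels produces a factor on the order of $(4L-6)\dot f^2(0)/A_\alpha$, matching the constant that appears in~\eqref{eq:KLrate}; the counting that yields exactly $4L-6$ (rather than a crude $L$ or $2L$) comes from the telescoping structure of $\dot g_\ell$ across consecutive cut-off points, so that the cross terms in $\sum_\ell \dot g_\ell^2$ collapse.

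\medskip

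With the per-entry bound in hand, summing over $\omega\in\Omega=[d_1]\times\cdots\times[d_K]$ gives
\begin{equation}
\text{KL}(\mathbb{P}_{\tY}\| \hat{\mathbb{P}}_{\tY}) \leq \frac{(4L-6)\dot f^2(0)}{2A_\alpha}\,\FnormSize{}{\hat\Theta-\trueT}^2,
\end{equation}
and then substituting $\FnormSize{}{\hat\Theta-\trueT}^2 = (\prod_k d_k)\,\text{MSE}(\hat\Theta,\trueT)$ and invoking the MSE bound~\eqref{eq:rate} from Theorem~\ref{thm:rate} yields~\eqref{eq:KLrate} directly, absorbing the factor $1/2$ into the constant $c_1$. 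I expect the main obstacle to be the second step: getting the sharp level-counting constant $4L-6$ rather than a loose bound requires careful handling of the telescoping sum $\sum_\ell \dot g_\ell^2/g_\ell$ near the boundary levels $\ell=1$ and $\ell=L$, where $b_0=-\infty$ and $b_L=\infty$ force some of the derivative terms to vanish; a naive application of $|\dot g_\ell|\leq 2\dot f(0)$ to every level would overcount the endpoints. The boundedness $\mnormSize{}{\trueT}\leq\alpha$ and $\mnormSize{}{\hat\Theta}\leq\alpha$ (both enforced by $\tP$) is what lets me apply the uniform lower bound $g_\ell\geq A_\alpha$ along the entire Taylor segment, so I would verify that the intermediate point from the mean-value form of the remainder also has magnitude at most $\alpha$.
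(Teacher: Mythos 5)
Your overall architecture is exactly the paper's: the KL tensorizes over entries, each per-entry KL is bounded by $C\,(\theta_\omega-\hat\theta_\omega)^2$ with $C$ on the order of $(4L-6)\dot f^2(0)/A_\alpha$, and then Theorem~\ref{thm:rate} is substituted in. The gap is in the mechanism you propose for the per-entry bound. You Taylor-expand $\theta\mapsto \mathrm{KL}\bigl(p(\theta_\omega)\,\|\,p(\theta)\bigr)$ to second order and identify the remainder with the Fisher information $\sum_\ell \dot g_\ell^2/g_\ell$. That identification only holds when the second derivative is evaluated \emph{at the truth}. At the intermediate point $\tilde\theta$ supplied by the mean-value form of the remainder, the second derivative is
\begin{equation}
\sum_{\ell} g_\ell(\theta_\omega)\left[\frac{\dot g_\ell^2(\tilde\theta)}{g_\ell^2(\tilde\theta)}-\frac{\ddot g_\ell(\tilde\theta)}{g_\ell(\tilde\theta)}\right],
\end{equation}
and the $\ddot g_\ell$ term does not cancel (the identity $\sum_\ell \ddot g_\ell=0$ is destroyed by the weights $g_\ell(\theta_\omega)/g_\ell(\tilde\theta)$). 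Bounding it from above would drag $\sup|\ddot f|$ into your constant, and the stated $(4L-6)\dot f^2(0)/A_\alpha$ would not emerge from this route as written. Relatedly, your attribution of the count $4L-6$ to ``cross terms collapsing in a telescoping sum'' is not where it comes from.

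The paper avoids all of this with a cruder but cleaner two-step estimate (Lemmas~\ref{lem:KLentry} and~\ref{lem:KL}): first the elementary chi-square bound $\mathrm{KL}(p\|q)\le \sum_\ell (p_\ell-q_\ell)^2/q_\ell$, obtained from $\log x\le x-1$ with no Taylor expansion of the KL itself; then a \emph{first}-order mean-value expansion of each difference $p_\ell-q_\ell$, so that only $\dot f$ ever appears, bounded by its peak $\dot f(0)$, with each denominator bounded below by $A_\alpha$. The constant is then literal counting: each of the $L-2$ interior levels contributes $(|\dot f|+|\dot f|)^2\le 4\dot f^2(0)$ and each of the two boundary levels (where $b_0=-\infty$, $b_L=\infty$ kill one derivative term) contributes $\dot f^2(0)$, giving $4(L-2)+2=4L-6$. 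You correctly flagged the special role of the endpoints, so the fix is to replace your second-order expansion of the KL with this chi-square-plus-first-order-expansion argument; the rest of your proof (tensorization, the entrywise bound $\alpha$ ensuring the intermediate points stay in the admissible range, and the final substitution of the MSE bound) then goes through as in the paper.
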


We next show the statistical optimality of our estimator $\hat \Theta$. The result is based on the information theory and applies to all estimators in $\tP$, including but not limited to $\hat \Theta$ in~\eqref{eq:estimator}.

\begin{thm}[Minimax lower bound]\label{thm:minimax}
Assume the same set-up as in Theorem~\ref{thm:rate}, and $d_{\max}=\max_k d_k \geq 8$. Let $\inf_{\hat \Theta}$ denote the infimum over all estimators $\hat \Theta\in\tP$ based on the ordinal tensor observation $\tY\in[L]^{d_1\times \cdots \times d_K}$. Then, under the model~\eqref{eq:model},
\begin{equation}\label{eq:lower}
\inf_{\hat \Theta }\sup_{\trueT\in\tP} \mathbb{P}\Big\{ \textup{MSE}(\hat \Theta, \trueT) \geq c\min\left( \alpha^2, \ { Cr_{\max}d_{\max} \over \prod_k d_k} \right) \Big\} \geq {1\over 8},
\end{equation}
where $C=C(\alpha, L, f,\mb)>0$ and $c>0$ are constants independent of tensor dimension and the rank.
\end{thm}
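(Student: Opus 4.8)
The plan is to establish the bound by the standard reduction from estimation to multiple hypothesis testing, combined with Fano's inequality (Tsybakov's method). The idea is to exhibit a finite family $\{\trueT^{(1)},\dots,\trueT^{(N)}\}\subset \tP$ that is simultaneously (i) pairwise well separated in the normalized Frobenius metric at the target radius $\delta^2\asymp \min\!\big(\alpha^2,\ Cr_{\max}d_{\max}/\prod_k d_k\big)$, and (ii) statistically indistinguishable, in the sense that the induced $L$-level ordinal laws $\mathbb{P}_{\trueT^{(i)}}$ have pairwise Kullback--Leibler divergence small relative to $\log N$. If such a family exists, Fano's lemma shows that no measurable estimator can identify the correct hypothesis with probability exceeding a fixed constant, which transfers to the claimed $1/8$ lower bound on $\sup_{\trueT}\mathbb{P}\{\mathrm{MSE}(\hat\Theta,\trueT)\ge \delta^2/4\}$.

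For the construction I would encode the hardness in a single factor matrix. Assume mode $b$ attains $d_b=d_{\max}$; hold the core $\tC$ and the factors $\mM_k$ for $k\ne b$ fixed, taking them to be suitably scaled, balanced group-indicator matrices, and let only $\mM_b\in\{-1,+1\}^{d_b\times r_b}$ vary over a Varshamov--Gilbert sign net. This choice has three virtues. First, because every competing factor is fixed and $\mM_b$ has exactly $r_b$ columns, the Tucker rank of each $\trueT^{(i)}=\gamma\,\tC\times_1\mM_1\times_2\cdots\times_K\mM_K$ is automatically at most $\mr$. Second, arranging the fixed factors so that the mode-$b$ unfolding of $\tC\times_{k\ne b}\mM_k$ has scaled orthonormal rows forces each $\trueT^{(i)}$ to have entries of magnitude $\gamma$, so $\mnormSize{}{\trueT^{(i)}}=\gamma$ and the signal is spread across all $\prod_k d_k$ coordinates; the pairwise Hamming separation $\gtrsim r_b d_b$ from Varshamov--Gilbert then lifts to $\FnormSize{}{\trueT^{(i)}-\trueT^{(j)}}^2\gtrsim \gamma^2\prod_k d_k$ with $\log N\gtrsim r_b d_b\asymp r_{\max}d_{\max}$. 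Third, choosing $\gamma^2=\min\!\big(\alpha^2,\ c_0\, r_{\max}d_{\max}/\prod_k d_k\big)$ keeps the whole family inside $\tP$ and realizes the radius $\delta^2\asymp\gamma^2$.

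It remains to control the divergences. Since the entries of $\tY$ are independent across $\omega$, the KL divergence factorizes as a sum over coordinates of the per-entry divergence between the two $L$-category distributions determined by $\theta_\omega^{(i)}$ and $\theta_\omega^{(j)}$. On the compact set $|\theta|\le\alpha$, Assumption~\ref{ass:link} guarantees that each $g_\ell(\theta)$ is smooth and bounded below by $A_\alpha>0$, so a second-order Taylor expansion (equivalently a $\chi^2$ bound) yields the quadratic control $\mathrm{KL}\big(\mathbb{P}_{\theta}\,\|\,\mathbb{P}_{\theta'}\big)\le C'(\theta-\theta')^2$ with $C'=C'(\alpha,L,f,\mb)$ built from the regularity constants in~\eqref{eq:regular} and Corollary~\ref{cor:prediction}. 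Summing gives $\mathrm{KL}(\mathbb{P}_{\trueT^{(i)}}\,\|\,\mathbb{P}_{\trueT^{(j)}})\le C'\FnormSize{}{\trueT^{(i)}-\trueT^{(j)}}^2\le 4C'\gamma^2\prod_k d_k$, which by the choice of $\gamma$ is at most a small multiple of $r_{\max}d_{\max}\lesssim\log N$; tuning $c_0$ makes the average divergence at most $\tfrac18\log N$, and Fano's inequality then delivers the $1/8$ bound. The hypothesis $d_{\max}\ge 8$ enters precisely to ensure $N$ (hence $\log N$) is large enough for the Fano estimate to be non-vacuous.

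I expect the main obstacle to be the packing construction rather than the divergence bound, since the quadratic KL estimate is a routine consequence of Assumption~\ref{ass:link} and $|\theta|\le\alpha$. The delicate point is producing a family that at once (a) respects the exact Tucker-rank budget $\mr$, (b) keeps $\mnormSize{}{\trueT^{(i)}}\le\alpha$, and (c) attains both the exponent $\log N\asymp r_{\max}d_{\max}$ and a constant-fraction separation $\asymp\gamma^2\prod_k d_k$: requirements (a) and (c) pull against each other, because spreading $r_{\max}d_{\max}$ independent sign bits over all $\prod_k d_k$ entries while preserving low rank in \emph{every} mode forces a carefully balanced block/indicator structure, and matching the exponent to the product $r_{\max}d_{\max}$ rather than merely $\max_k r_k d_k$ requires choosing the free mode and the fixed-factor partition so as to exhaust the available degrees of freedom, exploiting the rank-compatibility constraint $r_k\le\prod_{j\ne k}r_j$.
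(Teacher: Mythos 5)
Your proposal follows essentially the same route as the paper's proof: a Varshamov--Gilbert packing obtained by varying a single factor matrix along the mode of largest dimension while replicating it across the remaining modes (the paper's block construction $(\mM|\cdots|\mM|\mO)\otimes\mathbf{1}_{d_3}\otimes\cdots\otimes\mathbf{1}_{d_K}$ in Lemma~\ref{lem:construction} is exactly your ``fixed balanced indicator factors''), a quadratic KL bound $\mathrm{KL}(\mathbb{P}_{\Theta}\,\|\,\mathbb{P}_{\Theta'})\le C\|\Theta-\Theta'\|_F^2$ from smoothness of the link on $|\theta|\le\alpha$ (Lemma~\ref{lem:KL}), and Tsybakov's Fano-type lemma with the scaling $\gamma^2\asymp\min(\alpha^2,\ r_{\max}d_{\max}/\prod_k d_k)$. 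The only cosmetic difference is your $\{\pm 1\}$ alphabet versus the paper's $\{0,\delta\}$ entries, and the rank-budget tension you flag in your last paragraph is present (and essentially glossed over) in the paper's own construction as well.
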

We see that the lower bound matches the upper bound in~\eqref{eq:rate} on the polynomial order of tensor dimension. Therefore, our estimator~\eqref{eq:estimator} is rate-optimal.

\subsection{Sample complexity for tensor completion}
We now consider the tensor completion problem, when only a subset of entries $\Omega$ are observed. We consider a general sampling procedure induced by $\Pi$. The recovery accuracy is assessed by the weighted squared error,
\begin{equation}\label{eq:weighted}
\PiFnormSize{}{\Theta-\hat \Theta}^2\stackrel{\text{def}}{=}
{1\over |\Omega|}\mathbb{E}_{\Omega\sim \Pi}\FnormSize{}{\Theta-\hat \Theta}^2=\sum_{\omega\in[d_1]\times \cdots \times [d_K]} \pi_{\omega}(\Theta_{\omega}-\hat \Theta_{\omega})^2.
\end{equation}

Note that the recovery error depends on the distribution $\Pi$. In particular, tensor entries with higher sampling probabilities have more influence on the recovery accuracy, compared to the ones with lower sampling probabilities.
\begin{rmk} If we assume each entry is sampled with some strictly positive probability; i.e.,\ there exits a constant $\mu> 0$ such that
\[
\ \pi_\omega\geq {1\over \mu \prod_k d_k},\quad \text{for all}\ \omega\in[d_1]\times\cdots \times [d_K],
\]
then the error in~\eqref{eq:weighted} provides an upper bound for MSE:
\[
\PiFnormSize{}{\Theta-\hat \Theta}^2 \geq {\FnormSize{}{\Theta-\hat \Theta}^2\over \mu \prod_kd_k}={1\over \mu}\text{MSE}(\hat \Theta, \trueT).
\]
The equality is attained under uniform sampling with $\mu=1$.
\end{rmk}

\begin{thm}~\label{thm:completion}
Assume the same set-up as in Theorem~\ref{thm:rate}. Suppose that we observe a subset of tensor entries $\{y_\omega\}_{\omega\in\Omega}$, where $\Omega$ is chosen at random with replacement according to a probability distribution $\Pi$. Let $\hat \Theta$ be the solution to~\eqref{eq:estimator}, and assume $r_{\max}=\tO(1)$. Then, with very high probability,
\[
\PiFnormSize{}{\Theta-\hat \Theta}^2 \to 0,\quad \text{ as }\quad {|\Omega|\over \sum_k d_k} \ \to \infty.
\]
\end{thm}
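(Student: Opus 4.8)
I will prove the stronger nonasymptotic statement that, with very high probability,
\[
\PiFnormSize{}{\Theta-\hat\Theta}^2 \;\lesssim\; \frac{U_\alpha^2}{L_\alpha^2}\cdot\frac{r_{\max}^{K-1}\sum_k d_k}{|\Omega|}
\]
up to logarithmic factors; the claimed limit then follows at once, since $r_{\max}=\tO(1)$ forces the right-hand side to vanish exactly when $|\Omega|/\sum_k d_k\to\infty$. Because $\Omega$ is drawn with replacement, the data consist of $|\Omega|$ i.i.d.\ pairs (index, ordinal value), so the sampled log-likelihood $\logl(\Xi)=\sum_{\omega\in\Omega}\ell_\omega(\Xi)$, with $\ell_\omega(\Xi)=\sum_{\ell\in[L]}\mathds{1}\{y_\omega=\ell\}\log g_\ell(\xi_\omega)$ and $g_\ell$ as in~\eqref{eq:regular}, is a sum of i.i.d.\ terms whose population counterpart is the $\Pi$-weighted excess Kullback--Leibler risk $D(\Xi)=\mathbb{E}_{\omega\sim\Pi,\,y}[\ell_\omega(\Theta)-\ell_\omega(\Xi)]=\sum_\omega\pi_\omega\,\mathrm{KL}(p_\omega(\Theta)\,\|\,p_\omega(\Xi))$. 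The task reduces to showing $D(\hat\Theta)$ is small and that $D$ controls $\PiFnormSize{}{\cdot}^2$ from below.

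\emph{Step 1 (restricted strong convexity).}
By optimality of $\hat\Theta$ in~\eqref{eq:estimator} and feasibility of $\Theta$, the basic inequality $\logl(\hat\Theta)\ge\logl(\Theta)$ holds. A second-order Taylor expansion of $-\log g_\ell$, combined with the strict log-concavity and symmetry of $f'$ from Assumption~\ref{ass:link} and the definition of $L_\alpha$ in~\eqref{eq:regular}, yields the curvature lower bound $D(\Xi)\ge\tfrac12 L_\alpha\PiFnormSize{}{\Theta-\Xi}^2$ for every $\Xi\in\tP$. This is the weighted analogue of the curvature estimate underlying Theorem~\ref{thm:rate}. It therefore suffices to bound $D(\hat\Theta)$ from above.

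\emph{Step 2 (localized empirical process).}
Rewriting the basic inequality as a comparison of empirical and population risks gives $D(\hat\Theta)\le\big(\mathbb{E}\,\ell_\omega(\Theta)-\tfrac1{|\Omega|}\logl(\Theta)\big)-\big(\mathbb{E}\,\ell_\omega(\hat\Theta)-\tfrac1{|\Omega|}\logl(\hat\Theta)\big)$, an empirical fluctuation evaluated at $\Delta=\hat\Theta-\Theta$, a tensor of Tucker rank at most $2\mr$ with $\mnormSize{}{\Delta}\le 2\alpha$. I bound this fluctuation by symmetrization followed by the Ledoux--Talagrand contraction inequality, valid because $\xi_\omega\mapsto\ell_\omega(\Xi)$ is $U_\alpha$-Lipschitz on $[-\alpha,\alpha]$ by the definition of $U_\alpha$ in~\eqref{eq:regular}. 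This reduces matters to the symmetrized process $\langle\mathcal{S},\Delta\rangle$ with $\mathcal{S}=\tfrac1{|\Omega|}\sum_i\epsilon_i\mathds{1}_{\omega_i}$; the key point is that each summand has variance $\PiFnormSize{}{\Delta}^2/|\Omega|$ and magnitude $O(\alpha/|\Omega|)$. Running a peeling argument over shells $\{\Delta:\PiFnormSize{}{\Delta}\le t\}$ intersected with the rank-$2\mr$ class, controlling $\sup|\langle\mathcal{S},\Delta\rangle|$ via the tensor spectral norm of $\mathcal{S}$ (matrix Bernstein on the unfoldings $\mathcal{S}_{(k)}$) together with the metric-entropy bound for Tucker rank-$2\mr$ tensors from the proof of Theorem~\ref{thm:rate}, produces a critical radius $t_*^2\asymp r_{\max}^{K-1}\sum_k d_k/|\Omega|$. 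Combining $D(\hat\Theta)\lesssim L_\alpha t_*^2$ with Step~1 gives $\PiFnormSize{}{\Theta-\hat\Theta}^2\lesssim t_*^2$, which is the asserted bound.

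\emph{Main obstacle.}
The crux is the localization under the non-uniform design $\Pi$. A crude global bound is hopeless: the Frobenius diameter of $\tP$ is of order $\sqrt{\prod_k d_k}$, which dwarfs the target error whenever $|\Omega|\approx\sum_k d_k\ll\prod_k d_k$, so the deviation must be measured against the actual size $\PiFnormSize{}{\Delta}$ of the estimation error rather than the worst case over $\tP$. Making this work requires a Talagrand-type concentration inequality for the localized empirical process whose variance proxy is exactly the weighted norm $\PiFnormSize{}{\Delta}^2$ of~\eqref{eq:weighted}, so that the fluctuation scale matches the strong-convexity scale of Step~1 and the self-bounding inequality closes at the critical radius $t_*$. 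Aligning the $\Pi$-weighted variance geometry with the low-rank metric entropy, uniformly over the shells, is the delicate technical step; the sampling-with-replacement assumption is what makes the underlying summands genuinely i.i.d.\ and thus amenable to these tools.
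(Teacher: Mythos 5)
Your overall architecture (basic inequality, restricted strong convexity of the population KL risk in the weighted norm, then a uniform bound on the empirical fluctuation) is a legitimate alternative to the paper's route, and Step 1 is sound. The gap is in Step 2: the localization you invoke does not close at the claimed critical radius $t_*^2\asymp r_{\max}^{K-1}\sum_k d_k/|\Omega|$. After symmetrization and contraction you must bound $\sup_\Delta|\langle\mathcal{S},\Delta\rangle|$ over the shell $\{\Delta=\Xi-\Theta:\ \Xi\in\tP,\ \PiFnormSize{}{\Delta}\le t\}$. Your two available duality pairings both fail to localize: pairing with the nuclear norm requires $\nnormSize{}{\Delta}\lesssim r_{\max}^{(K-1)/2}\FnormSize{}{\Delta}$, but the \emph{unweighted} Frobenius norm is not controlled by $\PiFnormSize{}{\Delta}$ under a general (possibly degenerate) $\Pi$, so the shell constraint buys nothing; pairing with the max-type (atomic M-) norm uses only $\mnormSize{}{\Delta}\le 2\alpha$ and yields a $t$-independent additive term of order $\alpha\, r_{\max}^{3(K-1)/2}\sqrt{\sum_k d_k/|\Omega|}$. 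This is the familiar spikiness obstruction in weighted-sampling completion: low-rank tensors with bounded entries but mass concentrated on a few fibers have $\PiFnormSize{}{\Delta}$ arbitrarily small while their empirical-process fluctuation stays at the $\sqrt{\sum_k d_k/|\Omega|}$ scale, so the variance proxy of the localized process cannot be taken to be $\PiFnormSize{}{\Delta}^2$ uniformly over the class. Consequently your claimed rate $\PiFnormSize{}{\Theta-\hat\Theta}^2\lesssim r_{\max}^{K-1}\sum_k d_k/|\Omega|$ is not established by the sketch; you flag exactly this alignment of the $\Pi$-weighted geometry with the low-rank entropy as ``the delicate technical step,'' but it is asserted rather than proved, and without an additional non-spikiness or $\pi_\omega$-lower-bound assumption it is not available.

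For comparison, the paper avoids localization entirely: it runs the Taylor-expansion argument of Theorem~\ref{thm:rate} on the sampled log-likelihood to get $\frac{1}{|\Omega|}\MFnormSize{}{\hat\Theta-\trueT}^2\lesssim r_{\max}^{K-1}\sum_k d_k/|\Omega|$ for the \emph{empirical} squared error, and then transfers to the weighted population norm via Lemma~\ref{lem:convexity} (a uniform law over the atomic-M-norm ball), paying precisely the additive penalty $c\beta\sqrt{\sum_k d_k/|\Omega|}$ with $\beta\asymp\alpha r_{\max}^{3(K-1)/2}$. The resulting bound decays only like $\sqrt{\sum_k d_k/|\Omega|}$, which is weaker than what you claim but still suffices for the stated conclusion $\PiFnormSize{}{\Theta-\hat\Theta}^2\to0$. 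If you replace your Step 2 with this empirical-to-population transfer (or simply accept the additive term your M-norm pairing produces), your argument becomes correct and proves the theorem, at the paper's rate rather than the stronger one you announce.
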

Theorem~\ref{thm:completion} shows that our estimator achieves consistent recovery using as few as $\tilde{\tO}(Kd)$ noisy, quantized observations from an order-$K$ $(d,\ldots,d)$-dimensional tensor. Note that $\tilde \tO(Kd)$ roughly matches the degree of freedom for an order-$K$ tensor of fixed rank $\mr$, suggesting the optimality of our sample requirement. This sample complexity substantially improves over earlier result $\tO(d^{\lceil K/2\rceil})$ based on square matricization~\citep{mu2014square}, or $\tO(d^{K/2})$ based on tensor nuclear-norm regularization~\citep{yuan2016tensor}. Existing methods that achieve $\tilde \tO(Kd)$ sample complexity require either a deterministic cross sampling design~\citep{zhang2019cross} or univariate measurements~\citep{ghadermarzy2018learning}. Our method extends the conclusions to multi-level measurements under a broader class of sampling schemes.

\section{Numerical Implementation}\label{sec:algorithm}
We describe the algorithm to seek the optimizer of~\eqref{eq:objective}. In practice, the cut-off points $\mb$ are often unknown, so we choose to maximize $\logl$ jointly over $(\Theta,\mb)\in\tP\times \tB$ (see the Supplement for details). The objective $\logl$ is concave in $(\Theta,\mb)$ whenever $f'$ is log-concave. However, the feasible set $\tP$ is non-convex, which makes the optimization~\eqref{eq:objective} a non-convex problem. We employ the alternating optimization approach by utilizing the Tucker representation of $\Theta$. Specifically, based on~\eqref{eq:Tucker} and~\eqref{eq:objective}, the objective function consists of $K+2$ blocks of variables, one for the cut-off points $\mb$, one for the core tensor $\tC$, and $K$ for the factor matrices $\mM_k$'s. The optimization is a simple convex problem if any $K+1$ out of the $K+2$ blocks are fixed. We update one block at a time while holding others fixed, and we alternate the optimization throughout the iteration. The convergence is guaranteed whenever $\logl$ is bounded from above, since the alternating procedure monotonically increases the objective. The Algorithm~\ref{alg} gives the full description.

\begin{algorithm}[t]
        \caption{Ordinal tensor decomposition}\label{alg}
        \begin{algorithmic}[]
            \STATE{\bfseries Input:}  \text{ Ordinal data tensor }
            $\mathcal{Y}\in [L]^{d_1\times\cdots\times d_K}$, rank $\mr\in \mathbb{N}_{+}^{K}$, entry-wise bound $\alpha\in \mathbb{R_+}$.
            \STATE{\bfseries Output:} $(\hat\Theta,\hat{\mb}) =  \argmax_{(\Theta,\mb)\in \mathcal{P}\times\mathcal{B}}  \mathcal{L}_{\mathcal{Y},\Omega}(\Theta,\mb).$
            \STATE Random initialization of core tensor $\mathcal{C}^{(0)}$, factor matrices $ \{\mM_k^{(0)}\}$, and cut-off points $\mb^{(0)}$.
            \FOR{$t = 1,2,\cdots,$}
            \FOR{$k = 1,2,\cdots,K$}
            \STATE{Update $\mM^{(t+1)}_k$ while fixing other blocks:}
            \STATE $\mM_k^{(t+1)}\gets\argmax_{\mM_k\in\mathbb{R}^{d_k\times r_K}}\mathcal{L}_{\mathcal{Y},\Omega}(\mM_k)$,
            \newline
             s.t. $\mnormSize{}{\Theta^{(t+1)}}\leq \alpha$, where $\Theta^{(t+1)}$ is the parameter tensor based on the current block estimates.
            \ENDFOR
            \STATE {Update $\mathcal{C}^{(t+1)}$ while fixing other blocks:}
              \newline$\mathcal{C}^{(t+1)} \gets \argmax_{\mathcal{C}\in \mathbb{R}^{r_1\times\cdots\times r_K}}\mathcal{L}_{\mathcal{Y},\Omega}(\mathcal{C})$, s.t. $\mnormSize{}{\Theta^{(t+1)}}\leq \alpha.$
               \STATE {Update $\Theta^{(t+1)}$ based on the current block estimates:}
                   \newline
                   $\Theta^{(t+1)} \gets \mathcal{C}^{(t+1)}\times_1\mM_1^{(t+1)}\cdots\times_K\mM_K^{(t+1)}.$
            \STATE {Update $\mb^{(t+1)}$ while fixing $\Theta^{(t+1)}$:}
              \newline
              $\mb^{(t+1)} \gets \argmax_{\mb\in \tB}\mathcal{L}_{\mathcal{Y},\Omega}\big(\Theta^{(t+1)},\mb\big).$
            \ENDFOR
            \STATE \textbf{return}
            $(\hat \Theta,\hat \mb)$
        \end{algorithmic}
 \end{algorithm}
We comment on two implementation details before concluding this section. First, the problem~\eqref{eq:estimator} is non-convex, so Algorithm 1 usually has no theoretical guarantee on global optimality. Nevertheless, as shown in Section~\ref{sec:denosing}, the desired rate holds not only for the global optimizer, but also for the local optimizer with $\logl(\hat \Theta)\geq \logl(\trueT)$. In practice, we find the convergence point $\hat \Theta$ upon random initialization is often satisfactory, in that the corresponding objective $\logl(\hat \Theta)$ is close to and actually slightly larger than the objective evaluated at the true parameter $\logl(\trueT)$. Figure~\ref{fig:stability} shows the trajectory of the objective function that is output in the default setting of Algorithm 1, with the input tensor generated from probit model~\eqref{eq:model} with $d_1=d_2=d_3=d$ and $r_1=r_2=r_3=r$. The dashed line is the objective value at the true parameter $\logl(\trueT)$. We find that the algorithm generally converges quickly to a desirable value in reasonable number of steps. The actual running time per iteration is shown in the plot legend.

\begin{figure}[ht]
\centering
\includegraphics[width=13cm]{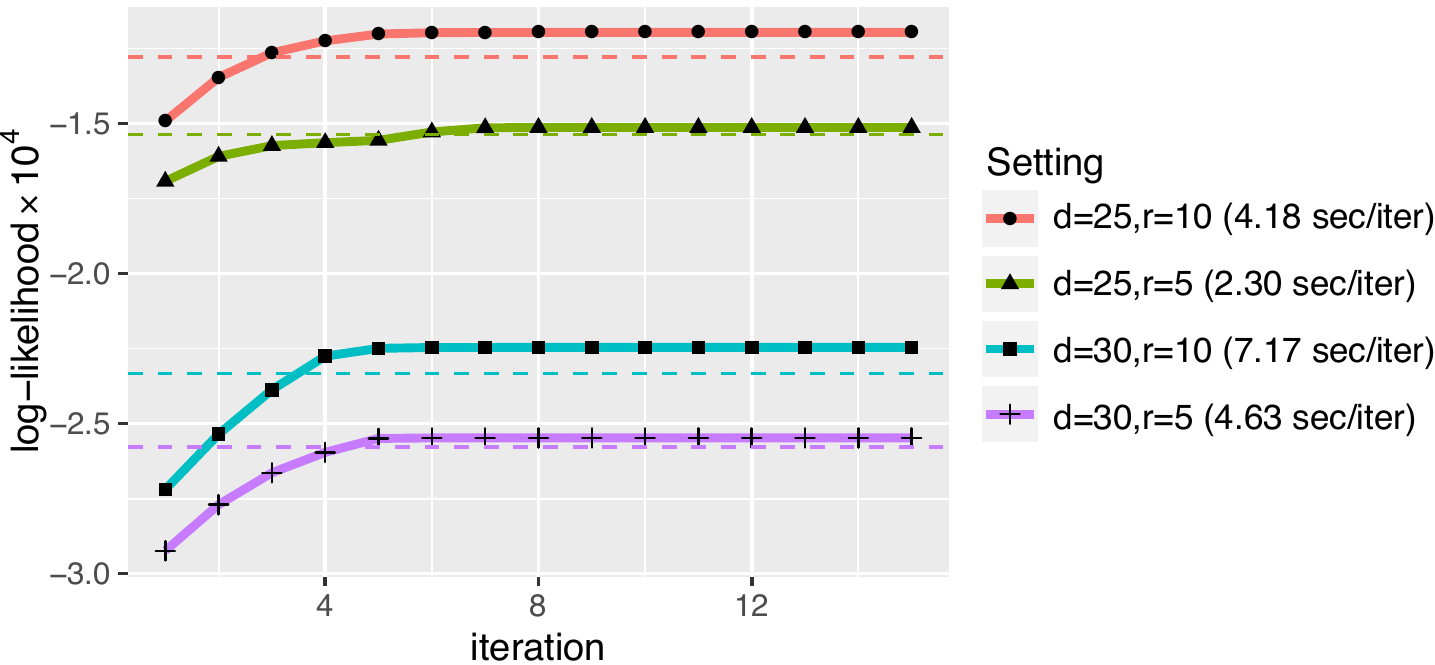}
\caption{Trajectory of objective function with various $d$ and $r$.}\label{fig:stability}
\end{figure}

Second, the algorithm takes the rank $\mr$ as an input. In practice, the rank $\mr$ is hardly known and needs to be estimated from the data. We use Bayesian information criterion (BIC) and choose the rank that minimizes BIC; i.e.,
\begin{align}\label{eq:BIC}
\hat \mr&=\argmin_{\mr\in\mathbb{N}^K_{+}} \text{BIC}(\mr)\\
&=\argmin_{\mr\in\mathbb{N}^K_{+}}\{-2\tL_{\tY}(\hat \Theta(\mr),\hat \mb(\mr))+p_e(\mr)\log (\prod_k d_k)\},
\end{align}
where $\hat \Theta(\mr), \hat\mb(\mr)$ are the estimates given the rank $\mr$, and $p_e(\mr)\stackrel{\text{def}}{=}\sum_k (d_k-r_k)r_k+\prod_k r_k$ is the effective number of parameters in the model. We select $\hat \mr$ that minimizes BIC through a grid search. The choice of BIC is intended to balance between the goodness-of-fit for the data and the degrees of freedom in the population model.

\section{Experiments}\label{sec:experiment}
In this section, we evaluate the empirical performance of our methods\footnote{Software package: \url{https://CRAN.R-project.org/package=tensorordinal}}. We investigate both the complete and the incomplete settings, and we compare the recovery accuracy with other tensor-based methods. Unless otherwise stated, the ordinal data tensors are generated from model~\eqref{eq:model} using standard probit link $f$. We consider the setting with $K=3$, $d_1=d_2=d_3=d$, and $r_1=r_2=r_3=r$. The parameter tensors are simulated based on~\eqref{eq:Tucker}, where the core tensor entries are i.i.d.\ drawn from $N(0,1)$, and the factors $\mM_k$ are uniformly sampled (with respect to Haar measure) from matrices with orthonormal columns. We set the cut-off points $b_\ell=f^{-1}(\ell/L)$ for $\ell\in[L]$, such that $f(b_\ell)$ are evenly spaced from 0 to 1. In each simulation study, we report the summary statistics across $n_{\text{sim}}$ = 30 replications.

\subsection{Finite-sample performance}\label{sec:simulation}
The first experiment examines the performance under complete observations. We assess the empirical relationship between the MSE and various aspects of model complexity, such as dimension $d$, rank $r$, and signal level $\alpha=\mnormSize{}{\Theta}$. Figure~\ref{fig:finite}a plots the estimation error versus the tensor dimension $d$ for three different ranks $r\in\{3,5,8\}$. The decay in the error appears to behave on the order of $d^{-2}$, which is consistent with our theoretical results~\eqref{eq:rate}. We find that a higher rank leads to a larger error, as reflected by the upward shift of the curve as $r$ increases. Indeed, a higher rank implies the higher number of parameters to estimate, thus increasing the difficulty of the estimation. Figure~\ref{fig:finite}b shows the estimation error versus the signal level under $d=20$. Interestingly, a larger estimation error is observed when the signal is either too small or too large. The non-monotonic behavior may seem surprising, but this is an intrinsic feature in the estimation with ordinal data. In view of the latent-variable interpretation (see Section~\ref{sec:latent}), estimation from ordinal observation can be interpreted as an inverse problem of quantization. Therefore, the estimation error diverges in the absence of noise $\tE$, because it is impossible to distinguish two different signal tensors, e.g., $\Theta_1=\ma_1\otimes \ma_2\otimes \ma_3$ and $\Theta_2=\text{sign}(\ma_1)\otimes \text{sign}(\ma_2)\otimes \text{sign}(\ma_3)$, from the quantized observations. This phenomenon~\citep{davenport2014,sur2019modern} is clearly contrary to the classical continuous-valued tensor problem.

The second experiment investigates the incomplete observations. We consider $L$-level tensors with $d=20$, $\alpha=10$ and choose a subset of tensor entries via uniform sampling. Figure~\ref{fig:finite}c shows the estimation error of $\hat \Theta$ versus the fraction of observation $\rho=|\Omega|/d^K$. As expected, the error reduces with increased $\rho$ or decreased $r$. Figure~\ref{fig:finite}d evaluates the impact of ordinal levels $L$ to estimation accuracy, under the setting $\rho=0.5$. An improved performance is observed as $L$ grows, especially from binary observations ($L=2$) to multi-level ordinal observations ($L\geq 3$). The result showcases the benefit of multi-level observations compared to binary observations.

\begin{figure}[http]
\includegraphics[width=\textwidth]{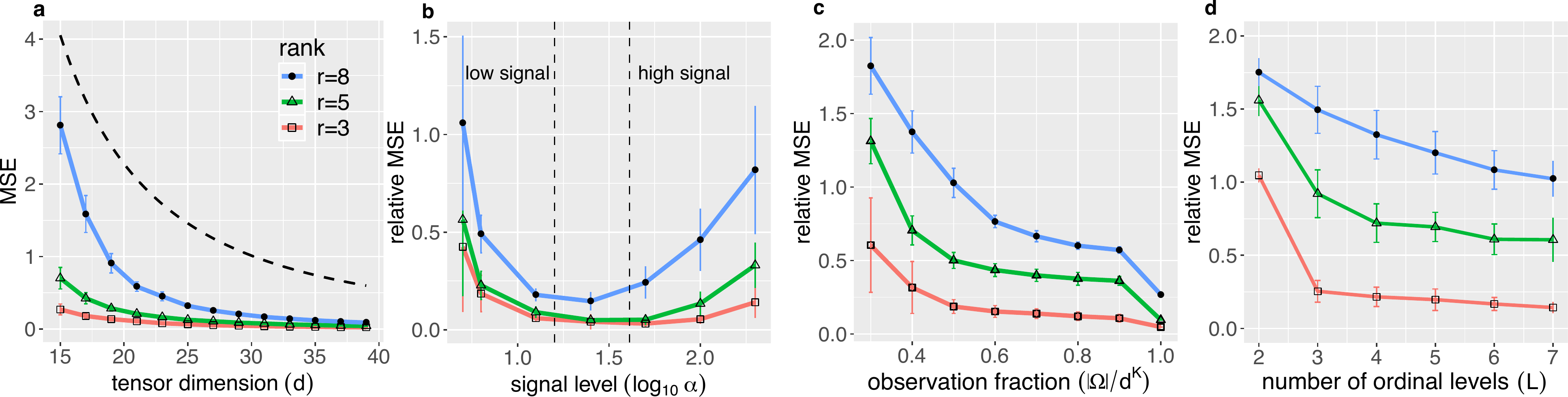}
\caption{Empirical relationship between (relative) MSE versus (a) dimension $d$, (b) signal level $\alpha$, (c) observation fraction $\rho$, and (d) number of ordinal levels $L$. In panels (b)-(d), we plot the relative MSE $=\FnormSize{}{\hat \Theta-\trueT}^2/\FnormSize{}{\trueT}^2$ for better visualization.}
\label{fig:finite}
\end{figure}

\subsection{Comparison with alternative methods}
Next, we compare our ordinal tensor method ({\bf Ordinal-T}) with three popular low-rank methods.
\begin{itemize}
\item Continuous tensor decomposition ({\bf Continuous-T})~\citep{acar2010scalable} is a low-rank approximation method based on classical Tucker model.
\item One-bit tensor completion ({\bf 1bit-T})~\citep{ghadermarzy2018learning} is a max-norm penalized tensor learning method based on partial binary observations.
\item Ordinal matrix completion ({\bf Ordinal-M})~\citep{bhaskar2016probabilistic} is a rank-constrained matrix estimation method based on noisy, quantized observations.
\end{itemize}

We apply each of the above methods to $L$-level ordinal tensors $\tY$ generated from model~\eqref{eq:model}. The {\bf Continuous-T} is applied directly to $\tY$ by treating the $L$ levels as continuous observations. The {\bf Ordinal-M} is applied to the 1-mode unfolding matrix $\tY_{(1)}$. The {\bf 1bit-T} is applied to $\tY$ in two ways. The first approach, denoted {\bf 1bit-sign-T}, follows from~\citet{ghadermarzy2018learning} that transforms
$\tY$ to a binary tensor, by taking the entrywise sign of the mean-adjusted tensor, $\tY- |\Omega|^{-1}\sum_\omega y_\omega$. The second approach, denoted {\bf 1bit-category-T}, transforms the order-3 ordinal tensor $\tY$ to an order-4 binary tensor, $\tY^{\sharp}=\entry{y^{\sharp}_{ijkl}}$, via dummy variable encoding; i.e., $y^{\sharp}_{ijk\ell}=\mathds{1}\{y_{ijk}=\ell\}$ for $\ell\in[L-1]$. We evaluate the methods by their capabilities in predicting the most likely labels, $y_\omega^{\text{mode}}=\arg\max_\ell\mathbb{P}(y_\omega=\ell)$. Two performance metrics are considered: mean absolute deviation (MAD) $=d^{-K}\sum_\omega |y_\omega^{\text{mode}}-\hat y_\omega^{\text{mode}}|$, and misclassification rate (MCR) $=d^{-K}\sum_\omega\mathds{1}\{y_\omega^{\text{mode}}\neq\text{round}(\hat y_\omega^{\text{mode}})\}$,
where $\text{round}(\cdot)$ denotes the nearest integer. Note that MAD penalizes the large deviation more heavily than MCR.

Figure~\ref{fig:compare} compares the prediction accuracy under the setting $\alpha=10$, $d=20$, and $r=5$. We find that our method outperforms the others in both MAD and MCR. In particular, methods built on multi-level observations ({\bf Ordinal-T}, {\bf Ordinal-M}, {\bf 1bit-category-T}) exhibit stable MCR over $\rho$ and $L$, whereas the others two methods ({\bf Continuous-T}, {\bf 1bit-sign-T}) generally fail except for $L=2$ (Figures~\ref{fig:compare}a-b). This observation highlights the benefits of multi-level modeling in the classification task. Although {\bf 1bit-category-T} and our method {\bf Ordinal-T} behave similarly for binary tensors ($L=2$), the improvement of our method is substantial as $L$ increases (Figures~\ref{fig:compare}a and~\ref{fig:compare}c). One possible reason is that our method incorporates the intrinsic ordering among the $L$ levels via proportional odds assumption~\eqref{eq:logodd}, whereas {\bf 1bit-category-T} ignores the ordinal structure and dependence among the induced binary entries. 

Figures~\ref{fig:compare}c-d assess the prediction accuracy with sample size. We see a clear advantage of our method ({\bf Ordinal-T}) over the matricization ({\bf Ordinal-M}) in both complete and non-complete observations. When the observation fraction is small, e.g., ${|\Omega|/ d^K}=0.4$, the tensor-based completion shows $\sim$ 30\% reduction in error compared to the matricization.

\begin{figure}[ht]
\includegraphics[width=\textwidth]{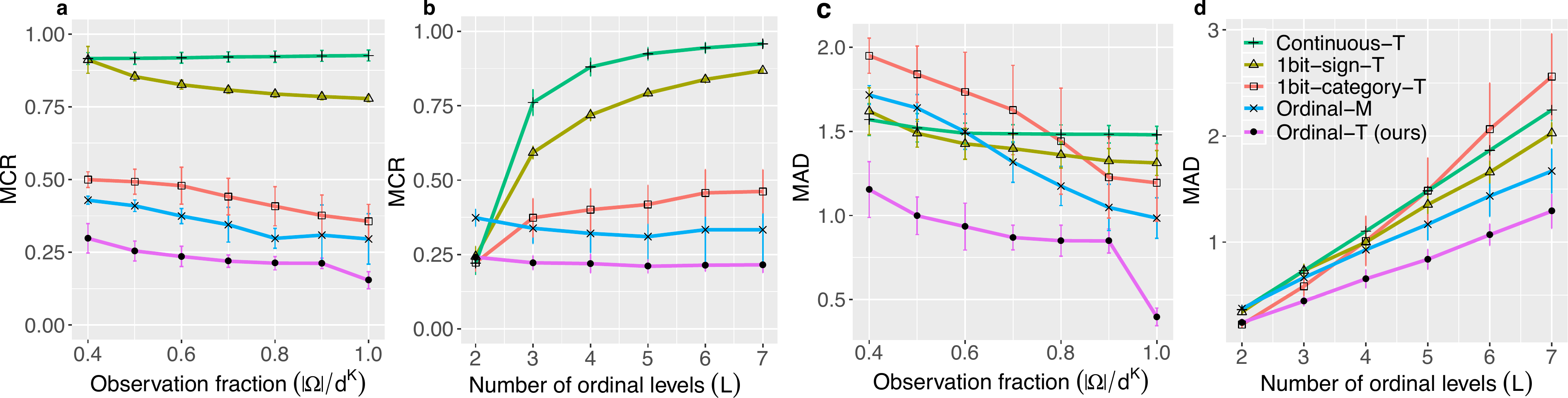}
\caption{Performance comparison for predicting most likely labels. (a, c) Prediction errors versus sample complexity $\rho=|\Omega|/d^K$ when $L=5$. (b, d) Prediction errors versus the number of ordinal levels $L$ when $\rho=0.8$. }\label{fig:compare}
\end{figure}

\begin{figure}[ht]
\includegraphics[width=1\textwidth]{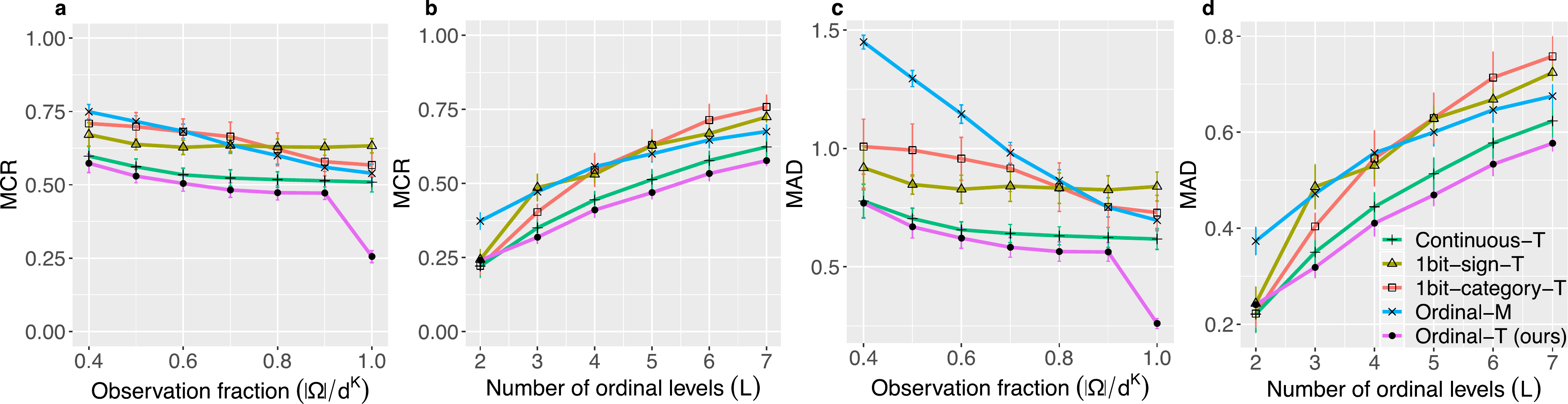}
\caption{Performance comparison for predicting median labels. (a, c) Prediction error versus sample complexity $\rho=|\Omega|/d^K$ when $L=5$. (b, d) Prediction error versus the number of ordinal levels $L$, when $\rho=0.8.$}\label{fig:compare2}
\end{figure}

We also compare the methods by their performance in predicting the median labels, $y_\omega^{\text{median}}=\min\{\ell\colon\mathbb{P}(y_\omega=\ell)\geq 0.5\}$. Under the latent variable model~\eqref{eq:latent} and Assumption~\ref{ass:link}, the median label is the quantized $\theta_\omega$ without noise; i.e.,\ $y_\omega^{\text{median}}=\sum_\ell \mathds{1}\{\theta_\omega\in(b_{\ell-1},b_\ell]\}$.
 We utilize the same simulation setting as in the earlier experiment. Figure~\ref{fig:compare2} shows that our method outperforms the others in both MCR and MAD. The improved accuracy comes from the incorporation of multilinear low-rank structure, multi-level observations, and the ordinal structure. Interestingly, the median estimator tends to yield smaller MAD than the mode estimator, $\text{MAD}(\tY^{\text{median}}, \hat \tY^{\text{median}})\leq \text{MAD}(\tY^{\text{mode}}, \hat \tY^{\text{mode}})$  (Figures~\ref{fig:compare}a-b vs.\ Figures~\ref{fig:compare2}a-b), for the three multilevel methods ({\bf 1bit-sign-T}, {\bf Ordinal-M}, and {\bf Ordinal-T}). The mode estimator, on the other hand, tends to yield smaller MCR than the median estimator, $\text{MCR}(\tY^{\text{mode}}, \hat \tY^{\text{mode}})\leq\text{MCR}(\tY^{\text{median}}, \hat \tY^{\text{median}})$ (Figures~\ref{fig:compare}c-d vs.\ Figures~\ref{fig:compare2}c-d). This tendency is from the property that the median estimator $\hat y^{(\text{median})}_\omega$ minimizes $R_1(z)=\mathbb{E}_{y_\omega}|y_\omega-z|$, whereas the mode estimator $\hat y^{(\text{mode})}_\omega $ minimizes $R_2(z)=\mathbb{E}_{y_\omega}\mathds{1}\{y_\omega=z\}$. Here the expectation is taken over the testing data $y_\omega$, conditional on the training data and thus parameters $\hat \Theta$ and~$\hat \mb$.

\section{Data Applications}\label{sec:dataapplication}
We apply our ordinal tensor method to two real-world datasets. In the first application, we use our model to analyze an ordinal tensor consisting of structural connectivities among 68 brain regions for 136 individuals from Human Connectome Project (HCP)~\citep{van2013wu}. In the second application, we perform tensor completion to an ordinal dataset with missing values. The data tensor records the ratings of 139 songs on a scale of 1 to 5 from 42 users on 26 contexts~\citep{baltrunas2011incarmusic}.

\subsection{Human Connectome Project (HCP)}
Each entry in the HCP dataset takes value on a nominal scale, \{{\it high, moderate, low}\}, indicating the strength level of fiber connection. We convert the dataset to a 3-level ordinal tensor $\tY\in[3]^{68\times 68\times 136}$ and apply the ordinal tensor method with a logistic link function. The BIC suggests $\mr = (23,23,8)$ with $\logl(\hat{\Theta},\hat{\mb}) = -216,646$. Based on the estimated Tucker factors $\{\hat \mM_k\}$, we perform a clustering analysis via K-mean on the brain nodes (see detailed procedure in the Supplement). We find that the clustering successfully captures the spatial separation between brain regions (Table~\ref{table:clustering}).
In particular, cluster I represents the connection between the left and right hemispheres, whereas clusters II-III represent the connection within each of the half brains (Figure~\ref{figure:brain image}). Other smaller clusters represent local regions driving by similar nodes (Table~\ref{table:clustering}). For example, the cluster IV/VII consists of nodes in the supramarginal gyrus region in the left/right hemisphere. This region is known to be involved in visual word recognition and reading~\citep{stoeckel2009supramarginal}. The identified similarities among nodes without external annotations illustrate the applicability of our method to clustering analysis.

\begin{figure}[ht]
\centering
\includegraphics[width = .8\textwidth]{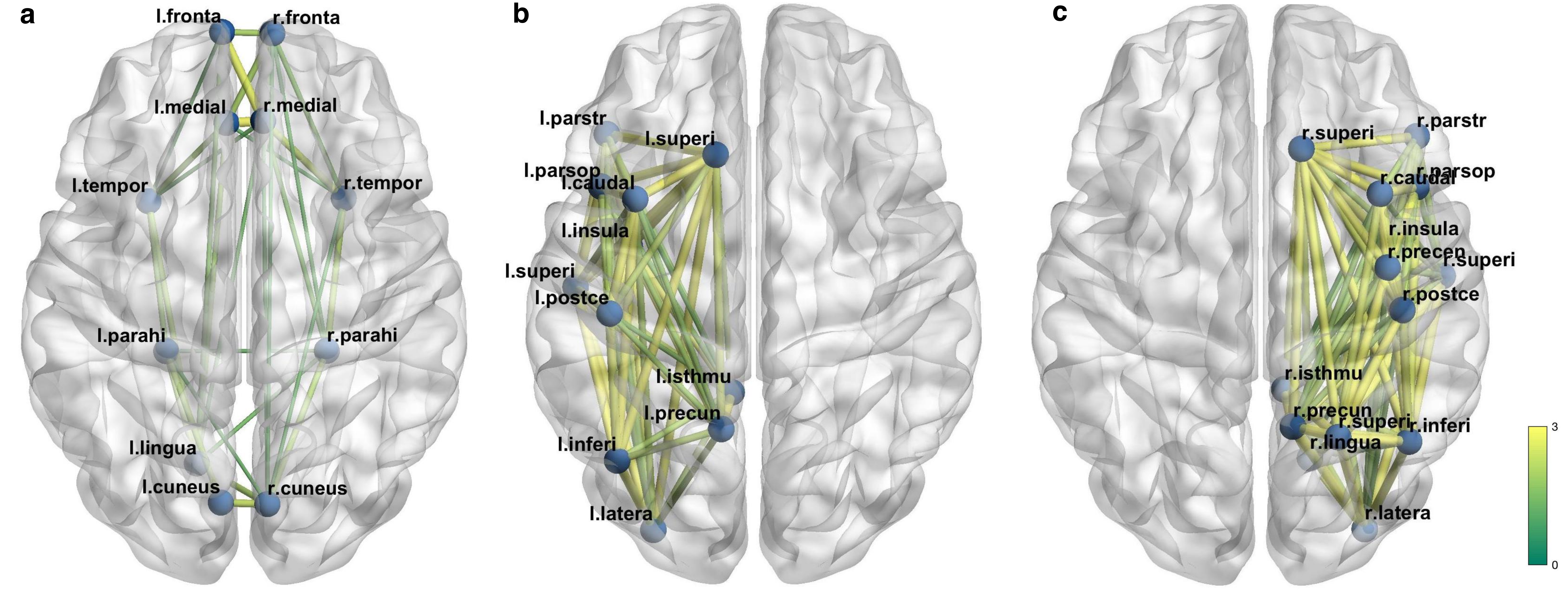}
\caption{Top three clusters in the HCP analysis. (a) Cluster I reflects the connections between two brain hemispheres. (b)-(c) Cluster II/III consists of nodes within left/right hemisphere only. Node names are shown in abbreviation. Edges are colored based on estimated connection averaged across individuals. }  \label{figure:brain image}
\end{figure}

\begin{table}[ht]
\center
\begin{sc}
\resizebox{\columnwidth}{!}{
\begin{tabular}{|c|c|c|c|}
\hline
Cluster     & \multicolumn{3}{c|}{I}\\ \hline
Brain nodes & \multicolumn{3}{l|}{\begin{tabular}[c]{@{}l@{}} l.frontalpole, l.temporalpole, l.medialorbitofrontal,  l.cuneus, l.parahippocampal, l.lingual,            \\
r.frontalpole, r.temporalpole,  r.medialorbitofrontal,  r.cuneus,  r.parahippocampal \\
\end{tabular}} \\
\hline
Cluster & \multicolumn{3}{c|}{II}\\ \hline
Brain nodes & \multicolumn{3}{l|}{\begin{tabular}[c]{@{}l@{}}
l.caudalmiddlefrontal,  l.inferiorparietal,  l.insula,  l.isthmuscingulate, l.lateraloccipital(2), \\
l.parsopercularis, l.parstriangularis, l.postcentral, l.precuneus, l.superiorfrontal, l.superiortemporal(3)
\end{tabular}} \\ 
\hline
Cluster & \multicolumn{3}{c|}{III}\\ \hline
Brain nodes & \multicolumn{3}{l|}{\begin{tabular}[c]{@{}l@{}}
r.caudalmiddlefrontal, r.inferiorparietal, r.insula, r.isthmuscingulate, r.lateraloccipital(2), r.lingual, \\
r.parsopercularis, r.parstriangularis, r.postcentral, r.precentral, r.precuneus, r.superiorfrontal(3), \\
r.superiorparietal,  r.superiortemporal(3)
\end{tabular}} \\ 
\hline
Cluster      & IV    & V  & VI\\
\hline
Brain nodes & \begin{tabular}[c]{@{}c@{}}l.supramarginal(4)\end{tabular}&
\begin{tabular}[c]{@{}c@{}}l.inferiortemporal(3)\end{tabular}&
\begin{tabular}[c]{@{}c@{}}l.middletemporal(3)\end{tabular}\\
\hline
Cluster & VII  & VIII & VIIII\\
\hline
Brain nodes &
\begin{tabular}[c]{@{}c@{}}r.supramarginal(4)\end{tabular}&
\begin{tabular}[c]{@{}c@{}}r.inferiortemporal(3)\end{tabular}&
\begin{tabular}[c]{@{}c@{}}r.middletemporal(3)\end{tabular}\\
\hline
Cluster &X & \multicolumn{2}{c|}{XI}  \\
\hline
Brain nodes &
\begin{tabular}[c]{@{}c@{}}l.superiorfrontal(2)\end{tabular}&
 \multicolumn{2}{c|}{l.precentral,  l.superiorparietal}\\
 \hline
\end{tabular}
}
\end{sc}
\caption{Node clusters in the HCP analysis. The first alphabet in the node name indicates the left (L) or right (R) hemisphere. The number in the parentheses indicates the node count in each cluster. }  \label{table:clustering}
\end{table}

We compare the goodness-of-fit of various tensor methods on the HCP data. Table~\ref{table:CV} summarizes the prediction error via 5-fold stratified cross-validation averaged over 10 runs. Our method outperforms the others, especially in MAD.
\subsection{InCarMusic recommendation system}

We apply ordinal tensor completion to a recommendation system {\it InCarMusic}. {\it InCarMusic} is a mobile application that offers music recommendation to passengers of cars based on contexts~\citep{baltrunas2011incarmusic}. We conduct tensor completion on the 42-by-139-by-26 tensor with 2,844 observed entries only. Table~\ref{table:CV} shows the averaged prediction error via 5-fold cross validation. The high missing rate makes the accurate classification challenging. Nevertheless, our method achieves the best performance among the three.

\begin{table}[ht]
\resizebox{\textwidth}{!}{
\begin{tabular}{c|c|c|c|c|c}
\Xhline{2\arrayrulewidth}
\multicolumn{3}{c|}{Human Connectome Project (HCP) dataset}&\multicolumn{3}{c}{InCarMusic dataset}\\
\Xhline{2\arrayrulewidth}
Method                    & MAD & MCR&Method                    & MAD & MCR\\
\hline
Ordinal-T (ours) & 0.1607 (0.0005)& 0.1606 (0.0005)&Ordinal-T (ours) & 1.37 (0.039) &  0.59 (0.009)\\
 Continuous-T&0.2530 (0.0002)&0.1599 (0.0002)&Continuous-T & 2.39 (0.152)&0.94 (0.027)\\
 1bit-sign-T&0.3566 (0.0010)&0.1563 (0.0010)&1bit-sign-T&1.39 (0.003)& 0.81 (0.005)\\
\hline
\end{tabular}
}
\caption{Comparison of prediction error in the HPC and {\it InCarMusic} analyses. Standard errors are reported in parentheses.}\label{table:CV}
\end{table}

\section{Proofs}
Here, we provide additional theoretical results and proofs presented in Sections~\ref{sec:theory}.

\subsection{Extension of Theorem~\ref{thm:rate} to unknown cut-off points}\label{sec:extention}
We now extend Theorem~\ref{thm:rate} to the case of unknown cut-off points $\mb$. Assume that the true parameters $(\trueT, \trueb)\in \tP\times \tB$, where the feasible sets are defined as
\begin{align}
\tP&=\{\Theta\in\mathbb{R}^{d_1\times \cdots\times d_K}\colon \text{rank}(\tP)\leq \mr,\ \langle \Theta, \tJ \rangle=0,\ \mnormSize{}{\Theta}\leq \alpha\},\\
 \tB&=\{\mb\in\mathbb{R}^{L-1}\colon \mnormSize{}{\mb}\leq \beta,\ \min_{\ell}(b_\ell-b_{\ell-1}) \geq \Delta\},
\end{align}
with positive constants $\alpha,\beta,\Delta>0$ and a given rank $\mr\in\mathbb{N}^K_{+}$. Here, $\tJ=\entry{1}\in\mathbb{R}^{d_1\times \cdots \times d_K}$ denotes a tensor of all ones. The constraint $\langle \Theta, \tJ \rangle=0$ is imposed to ensure the identifiability of $\Theta$ and $\mb$. We propose the constrained M-estimator
\begin{equation}\label{eq:joint}
(\hat \Theta,\hat \mb)=\argmax_{(\Theta,\mb)\in\tP\times \tB}\flogl(\Theta,\mb).
\end{equation}
The estimation accuracy is assessed using the mean squared error (MSE):
\[
\text{MSE}\left( \hat \Theta, \trueT \right)={1\over \prod_k d_k}\FnormSize{}{\hat \Theta-\trueT}^2,\quad \text{MSE}\left( \hat \mb, \trueb \right)={1\over L-1}\FnormSize{}{\hat \mb-\trueb}^2.
\]

To facilitate the examination of MSE, we define an order-$(K+1)$ tensor, $\tZ=\entry{z_{\omega,\ell}}\in\mathbb{R}^{d_1\times \cdots \times d_K\times (L-1)}$, by stacking the parameters $\Theta=\entry{\theta_\omega}$ and $\mb=\entry{b_\ell}$ together. Specifically, let $z_{\omega,\ell}=-\theta_\omega+b_\ell$ for all $\omega\in[d_1]\times \cdots \times [d_K]$ and $\ell\in[L-1]$; that is,
\[
\tZ=-\Theta\otimes \mathbf{1}+\tJ\otimes \mb,
\]
where $\mathbf{1}$ denotes a length-$(L-1)$ vector of all ones. Under the identifiability constraint $\langle \Theta, \tJ \rangle=0$, there is an one-to-one mapping between $\tZ$ and $(\Theta, \mb)$, with $\text{rank}(\tZ)\leq (\text{rank}(\Theta)+1,\ 2)^T$. 
Furthermore, 
\begin{equation}\label{eq:decomp}
\FnormSize{}{\hat \tZ-\trueZ}^2=\FnormSize{}{\hat \Theta-\trueT}^2 (L-1)+\FnormSize{}{\hat \mb-\trueb}^2 \left(\prod_kd_k\right),
\end{equation}
where $\trueZ=-\trueT\otimes \mathbf{1}+\tJ\otimes \trueb$ and $\hat \tZ=-\hat \Theta\otimes \mathbf{1}+\tJ\otimes \hat \mb$. 

We make the following assumptions about the link function.
\begin{assumption}\label{ass:joint}
The link function $f\colon \mathbb{R}\mapsto [0,1]$ satisfies the following properties:
\begin{enumerate}[label=\textup{(\roman*)}]
\item $f(z)$ is twice-differentiable and strictly increasing in $z$.
\item $\dot{f}(z)$ is strictly log-concave and symmetric with respect to $z=0$.
\end{enumerate}
\end{assumption}

We define the following constants that will be used in the theory:
\begin{align}\label{eq:constantZ}
C_{\alpha,\beta,\Delta}&=\max_{|z|\leq \alpha+\beta}\max_{\substack{z'\leq z-\Delta\\z''\geq z+\Delta}}\max\left\{ {\dot{f}(z) \over f(z)-f(z')},\ {\dot{f}(z) \over f(z'')-f(z)}\right\},\\
D_{\alpha,\beta,\Delta}&=\min_{|z|\leq \alpha+\beta}\min_{\substack{z'\leq z-\Delta\\z''\geq z+\Delta}}\min\left\{- {\partial\over \partial z}\left({\dot{f}(z) \over f(z)-f(z')}\right),\  {\partial\over \partial z}\left({\dot{f}(z) \over f(z'')-f(z)}\right) \right\},\\
A_{\alpha,\beta,\Delta}&=\min_{|z|\leq \alpha+\beta}\min_{z'\leq z-\Delta} \left(f(z)-f(z')\right).
\end{align}

\begin{rmk}
The condition $\Delta=\min_{\ell}(b_{\ell}-b_{\ell-1})>0$ on the feasible set $\tB$ guarantees the strict positiveness of $f(z)-f(z')$ and $f(z'')-f(z)$. Therefore, the denominators in the above quantities $C_{\alpha,\beta,\Delta},\ D_{\alpha,\beta,\Delta}$ are well-defined. Furthermore, by Theorem~\ref{thm:convexity}, $f(z)-f(z')$ is strictly log-concave in $(z,z')$ for $z\leq z'-\Delta,\ z,z'\in[-\alpha-\beta,\ \alpha+\beta]$. Based on Assumption~\ref{ass:joint} and closeness of the feasible set, we have $C_{\alpha,\beta,\Delta}>0$, $D_{\alpha,\beta,\Delta}>0$, $A_{\alpha,\beta,\Delta}>0$.
\end{rmk}
\begin{rmk} In particular, for logistic link $f(x)= \frac{1}{1+e^{-x}}$, we have
\begin{align}
C_{\alpha,\beta,\Delta}&=\max_{|z|\leq \alpha+\beta}\max_{\substack{z'\leq z-\Delta\\z''\geq z+\Delta}}\max\left\{ \frac{1}{e^\Delta-1}\left(\frac{1+e^{-z'}}{1+e^{-z}}\right),\ \frac{1}{1-e^{-\Delta}}\left(\frac{1+e^{-z''}}{1+e^{-z}}\right)\right\}>0,\\
D_{\alpha,\beta,\Delta}&=\min_{|z|\leq \alpha+\beta}\frac{e^{-z}}{(1+e^{-z})^2}>0.\end{align}
\end{rmk}
\begin{thm}[Statistical convergence with unknown $\mb$]\label{thm:ratejoint}
Consider an ordinal tensor $\tY\in[L]^{d_1\times \cdots \times d_K}$ generated from model~\eqref{eq:model} with the link function $f$ and parameters $(\trueT, \trueb)\in\tP\times \tB$. Suppose the link function $f$ satisfies Assumption~\ref{ass:joint}. Define $r_{\max}=\max_k r_k+1$, and assume $r_{\max}=\tO(1)$. 

Then with very high probability, the estimator in~\eqref{eq:joint} satisfies
\begin{equation}\label{eq:scoreZ}
\FnormSize{}{\hat \tZ-\trueZ}^2 \leq {c_1r^{K}_{\max}C^2_{\alpha,\beta,\Delta}\over A^2_{\alpha,\beta,\Delta}D^2_{\alpha,\beta,\Delta}}\left(L-1+\sum_k d_k\right),
\end{equation}
In particular,
\begin{equation}\label{eq:jointTheta}
\mathrm{MSE}\left(\hat \Theta, \trueT \right)\leq\min\left\{ 4\alpha^2,\  {c_1 r_{\max}^{K}  C_{\alpha,\beta,\Delta}^2 \over A^2_{\alpha,\beta,\Delta}D_{\alpha,\beta,\Delta}^2}\left({ L-1+\sum_k d_k \over \prod_k d_k}\right)\right\},
\end{equation}
and
\begin{equation}\label{eq:jointb}
\mathrm{MSE}\left(\hat \mb, \trueb \right)\leq \min\left\{4\beta^2,\ {c_1 r^{K}_{\max} C^2_{\alpha,\beta,\Delta} \over A^2_{\alpha,\beta,\Delta}D_{\alpha,\beta,\Delta}^2}\left({L-1+\sum_kd_k \over \prod_k d_K}\right)\right\},
\end{equation}
where $c_1, C_{\alpha,\beta,\Delta}, D_{\alpha,\beta,\Delta}$ are positive constants independent of the tensor dimension, rank, and number of ordinal levels. 
\end{thm}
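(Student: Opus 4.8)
The plan is to reduce the joint estimation of $(\Theta,\mb)$ to a single rank-constrained estimation problem for the augmented order-$(K+1)$ tensor $\tZ=-\Theta\otimes\mathbf{1}+\tJ\otimes\mb$, and then rerun the M-estimation argument behind Theorem~\ref{thm:rate} at order $K+1$. The excerpt already supplies the two structural facts that make this reduction clean: under the identifiability constraint $\langle\Theta,\tJ\rangle=0$ the correspondence $(\Theta,\mb)\leftrightarrow\tZ$ is a bijection with $\text{rank}(\tZ)\leq(r_1+1,\dots,r_K+1,2)$, and the Frobenius error decomposes exactly as in~\eqref{eq:decomp}. Since $\flogl$ depends on $(\Theta,\mb)$ only through the entries $z_{\omega,\ell}=b_\ell-\theta_\omega$, maximizing $\flogl$ over $\tP\times\tB$ is precisely an M-estimation over order-$(K+1)$ tensors of Tucker rank at most $(r_{\max},2)$ whose entries obey $|z_{\omega,\ell}|\leq\alpha+\beta$. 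It therefore suffices to establish~\eqref{eq:scoreZ}, i.e. to bound $\FnormSize{}{\hat\tZ-\trueZ}^2$.

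To bound this, I would start from the basic inequality $\flogl(\hat\Theta,\hat\mb)\geq\flogl(\trueT,\trueb)$ together with a second-order expansion of the (entrywise separable) log-likelihood in the $z$-coordinates. Each summand $\log[f(z_{\omega,\ell})-f(z_{\omega,\ell-1})]$ is concave by the strict log-concavity of $f(z)-f(z')$ (Theorem~\ref{thm:convexity}), with curvature bounded below by $D_{\alpha,\beta,\Delta}$ on the feasible box; its gradient at the truth has entries controlled through $C_{\alpha,\beta,\Delta}$, and the minimum category probability $A_{\alpha,\beta,\Delta}$ enters the concentration of the score. Combining the resulting quadratic lower bound with the basic inequality yields an estimate of the shape $D_{\alpha,\beta,\Delta}\FnormSize{}{\hat\tZ-\trueZ}^2\lesssim|\langle S,\hat\tZ-\trueZ\rangle|$, where $S$ is the centered score tensor evaluated at $(\trueT,\trueb)$, with bounded and independent entries.

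Next I would control the stochastic term uniformly. Writing $\langle S,\hat\tZ-\trueZ\rangle\leq\FnormSize{}{\hat\tZ-\trueZ}\sup_{U}\langle S,U\rangle$ over unit-Frobenius $U$ in the (doubled-rank) difference class, a covering-number bound for order-$(K+1)$ tensors of rank at most $(r_{\max},2)$ yields an effective degrees-of-freedom count of order $r_{\max}^{K}(L-1+\sum_k d_k)$: one extra power of $r_{\max}$ from the additional core dimension and the additive $L-1$ from the new mode, relative to the order-$K$ count $r_{\max}^{K-1}\sum_kd_k$ in Theorem~\ref{thm:rate}. A short self-bounding (peeling) step cancels the linear factor $\FnormSize{}{\hat\tZ-\trueZ}$, and collecting the constants $C_{\alpha,\beta,\Delta},A_{\alpha,\beta,\Delta},D_{\alpha,\beta,\Delta}$ produces exactly~\eqref{eq:scoreZ}.

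Finally, the two MSE statements follow mechanically from~\eqref{eq:decomp}. Dropping the nonnegative $\mb$-term gives $\FnormSize{}{\hat\Theta-\trueT}^2(L-1)\leq\FnormSize{}{\hat\tZ-\trueZ}^2$; dividing by $\prod_kd_k$ and using $L-1\geq1$ yields~\eqref{eq:jointTheta}, while the symmetric argument (dropping the $\Theta$-term) yields~\eqref{eq:jointb}. The caps $4\alpha^2$ and $4\beta^2$ come from the triangle inequality against the $\ell_\infty$ constraints built into $\tP$ and $\tB$. I expect the main obstacle to be the empirical-process step: getting the covering bound for the augmented rank-$(r_{\max},2)$ class to deliver precisely the $r_{\max}^{K}(L-1+\sum_kd_k)$ scaling, together with verifying that the curvature constant $D_{\alpha,\beta,\Delta}$ — which now encodes a genuinely bivariate log-concavity in $(z,z')$ rather than the univariate convexity used when $\mb$ is known — stays bounded away from zero uniformly on the feasible box.
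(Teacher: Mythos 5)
Your reduction to an order-$(K+1)$ M-estimation problem for $\tZ$, the score bound via $C_{\alpha,\beta,\Delta}$, and the final passage from \eqref{eq:scoreZ} to the two MSE bounds via \eqref{eq:decomp} all match the paper. The gap is in the curvature step. You assert that each summand $\log[f(z_{\omega,\ell})-f(z_{\omega,\ell-1})]$ has curvature bounded below by $D_{\alpha,\beta,\Delta}$ on the feasible box, and you build the quadratic lower bound $D_{\alpha,\beta,\Delta}\FnormSize{}{\hat\tZ-\trueZ}^2\lesssim|\langle S,\hat\tZ-\trueZ\rangle|$ on that. This is false as stated: for each $\omega$ only the single term with $\ell=y_\omega$ appears in the likelihood, so only the two coordinates $z_{\omega,y_\omega-1}$ and $z_{\omega,y_\omega}$ out of the $L-1$ coordinates attached to $\omega$ receive any curvature, and each such summand couples two adjacent coordinates, making the per-$\omega$ Hessian block a tridiagonal matrix with strictly positive off-diagonal entries rather than a diagonal one. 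The Hessian over the full $\tZ$-space is therefore only negative semidefinite with a large null space, and no uniform eigenvalue bound $-D_{\alpha,\beta,\Delta}$ holds.

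The paper repairs this in two steps that your outline does not contain. First, it exploits the tridiagonal structure together with the row-sum inequality $\mH_\omega(\ell,\ell)+\mH_\omega(\ell,\ell-1)+\mH_\omega(\ell,\ell+1)\leq -D_{\alpha,\beta,\Delta}\mathds{1}\{y_\omega=\ell\text{ or }\ell+1\}$ to obtain $\mz^T\mH\mz\leq -D_{\alpha,\beta,\Delta}\sum_{\omega,\ell}z_{\omega,\ell}^2\mathds{1}\{y_\omega=\ell\text{ or }\ell+1\}$, i.e.\ curvature only on the random set of active coordinates. Second, it restricts $\mz$ to the structured subspace $\tS=\{\Vec(\tZ)\colon\tZ=-\Theta\otimes\mathbf{1}+\tJ\otimes\mb\}$ and shows, via the law of large numbers and the uniform lower bound $\mathbb{P}(y_\omega=\ell\text{ or }\ell+1)\geq 2A_{\alpha,\beta,\Delta}$, that with high probability the active coordinates capture a constant fraction of $\FnormSize{}{\mz}^2$. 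This is where $A_{\alpha,\beta,\Delta}$ enters --- in the restricted eigenvalue bound $\lambda_1(\mH)\leq -cA_{\alpha,\beta,\Delta}D_{\alpha,\beta,\Delta}$, not in the concentration of the score as you propose --- and it is why the theorem carries $A^2_{\alpha,\beta,\Delta}D^2_{\alpha,\beta,\Delta}$ in the denominator; your version would produce only $D^2_{\alpha,\beta,\Delta}$, which does not match \eqref{eq:scoreZ}. (Your covering/peeling treatment of the stochastic term is a workable substitute for the paper's spectral-norm-plus-nuclear-norm duality, so that part is a difference of route, not a gap.)
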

\begin{proof} 
The log-likelihood associated with the observed entries in terms of $\tZ$ is
\begin{equation}\label{eq:loglz}
\flogl(\tZ) = \sum_{\omega\in\Omega}\sum_{\ell\in[L]}\mathds{1}\{y_\omega=\ell\}\log\left[f(z_{\omega,\ell})-f(z_{\omega,\ell-1})\right].
\end{equation}

Let $\plogZ=\entry{{\partial \tL_\tY\over \partial z_{\omega,\ell}}}\in\mathbb{R}^{d_1\times \cdots \times d_K\times[L-1]}$ denote the score function, and $\mH=\pplogZ$ the Hession matrix. 
Based on the definition of $\hat\tZ$, we have the following inequality:
\begin{equation}\label{eq:ineql}
\flogl(\hat\tZ) \geq \flogl(\trueZ).
\end{equation}

Following the similar argument in Theorem \ref{thm:rate} and the inequality \eqref{eq:ineql}, we obtain that
\begin{equation}\label{eq:boundZ}
\FnormSize{}{\hat \tZ-\trueZ}^2 \leq c_1r^{K}_{\max}{\snormSize{}{\plogZ(\trueZ)}^2\over \lambda^2_{1} \left(\mH(\check\tZ)\right)},
\end{equation}
where $\plogZ(\trueZ)$ is the score evaluated at $\trueZ$, $\mH(\check \tZ)$ is the Hession evaluated at $\check\tZ$, for some $\check\tZ$ between $\hat \tZ$ and $\trueZ$,  and $\lambda_{1}(\cdot)$ is the largest matrix eigenvalue. 

We bound the score and the Hessian to obtain \eqref{eq:scoreZ}.
\begin{enumerate}[itemsep=0pt,topsep=0pt,leftmargin=*,partopsep=0pt]
\item (Score.) The $(\omega,\ell)$-th entry in $\plogZ$ is 
\[
{\partial \tL_\tY\over \partial z_{\omega,\ell}}=\mathds{1}\{y_\omega=\ell\}{\dot{f}(z)\over f(z)-f(z')}\Bigg|_{(z,\ z')=(z_{\omega,\ell},\ z_{\omega,\ell-1})} - \mathds{1}\{y_\omega=\ell+1\}{\dot{f}(z)\over f(z'')-f(z)}\Bigg|_{(z'',\ z)=(z_{\omega,\ell+1},\ z_{\omega,\ell})},
\]
which is upper bounded in magnitude by $C_{\alpha,\beta,\Delta}>0$ with zero mean. By Lemma \ref{lem:noisytensor}, with probability at least $1-\exp\left(-c_2'
\left(\sum_k d_k+ L-1\right)\right)$, we have
\begin{equation}\label{eq:scorebound}
\snormSize{}{\plogZ(\trueZ)}\leq c_2C_{\alpha,\beta,\Delta}\sqrt{L-1+\sum_k d_k},
\end{equation}

where $c_2,c_2'$ are two positive constants that depend only on $K$.
\item (Hession.) The entries in the Hession matrix are
\begin{align}
\text{Diagonal: }&{\partial^2 \tL_\tY\over \partial z^2_{\omega,\ell}}=\mathds{1}\{y_\omega=\ell\}{\ddot{f}(z)\left(f(z)-f(z')\right)-\dot{f}^2(z)\over \left(f(z)-f(z')\right)^2}\Bigg|_{(z,\ z')=(z_{\omega,\ell},\ z_{\omega,\ell-1})}-\\
&\hspace{.65in}\mathds{1}\{y_\omega=\ell+1\}{\ddot{f}(z)\left(f(z'')-f(z)\right)+\dot{f}^2(z)\over \left(f(z'')-f(z)\right)^2}\Bigg|_{(z'',\ z)=(z_{\omega,\ell+1},\ z_{\omega,\ell})},\\
\text{Off-diagonal: }&
{\partial^2 \tL_\tY\over \partial z_{\omega,\ell}z_{\omega,\ell+1}}=\mathds{1}\{y_\omega=\ell+1\}{\dot{f}(z_{\omega,\ell})\dot{f}(z_{\omega,\ell+1})\over \left(f(z_{\omega,\ell+1})-f(z_{\omega,\ell})\right)^2}\quad \text{and}\quad {\partial^2 \tL_\tY\over \partial z_{\omega,\ell}z_{\omega',\ell'}}=0 \text{ otherwise}. 
\end{align}

Based on Assumption~\ref{ass:joint}, the Hession matrix $\mH$ has the following three properties:
\begin{enumerate}[label=\textup{(\roman*)}]
\item The Hession matrix is a block matrix, $\mH=\text{diag}\{\mH_\omega\colon \omega\in[d_1]\times \cdots \times [d_K]\}$, and each block $\mH_{\omega}\in\mathbb{R}^{(L-1)\times (L-1)}$ is a tridiagonal matrix. 
\item The off-diagonal entries are either zero or strictly positive.
\item The diagonal entries are either zero or strictly negative. Furthermore, 
\begin{align}
&\mH_\omega(\ell,\ell)+ \mH_\omega(\ell,\ell-1)+\mH_\omega(\ell,\ell+1)
\\
=&\ {\partial^2 \tL_\tY\over \partial z^2_{\omega,\ell}}+{\partial^2 \tL_\tY\over \partial z_{\omega,\ell}z_{\omega,\ell+1}}+{\partial^2 \tL_\tY\over \partial z_{\omega,\ell-1}z_{\omega,\ell}}\\
=&\ \mathds{1}\{y_\omega=\ell\}{\partial\over \partial z}\left({\dot{f}(z) \over f(z)-f(z')}\right)\Bigg|_{(z,\ z')=(z_{\omega,\ell},\ z_{\omega,\ell-1})} \\& -\mathds{1}\{y_\omega=\ell+1\}{\partial\over \partial z}\left({\dot{f}(z) \over f(z)-f(z')}\right)\Bigg|_{(z'',\ z)=(z_{\omega,\ell+1},\ z_{\omega,\ell})}\\
\leq &\ -D_{\alpha,\beta,\Delta}\mathds{1}\{y_\omega = \ell \text{ or } \ell+1\}.
\end{align}
\end{enumerate}
We will show that, with very high probability over $\tY$, $\mH$ is negative definite in that
\begin{equation}\label{eq:HessionZ}
\lambda_{1}(\mH)=\max_{\mz\neq 0}{\mz^T\mH\mz\over\FnormSize{}{\mz}^2} \leq - cA_{\alpha,\beta,\Delta}D_{\alpha,\beta,\Delta},
\end{equation}
where $A_{\alpha,\beta,\Delta},\ D_{\alpha,\beta,\Delta}>0$ are constants defined in~\eqref{eq:constantZ}, and $c>0$ is a constant. 

Let $\mz_\omega=(z_{\omega,1},\ldots,z_{\omega,L-1})^T\in\mathbb{R}^{L-1}$ and $\mz=(\mz_{1,\ldots,1,1},\ldots,\mz_{d_1,\ldots,d_K,L-1})^T\in\mathbb{R}^{(L-1)\prod_k d_k}$. It follows from property (i) that
\[
\mz^T\mH\mz =\sum_{\omega}\mz_\omega^T \mH_{\omega} \mz_\omega.
\]
Furthermore, from properties (ii) and (iii) we have
\begin{align}
\mz_\omega^T \mH_{\omega} \mz_\omega &=
\sum_{\ell\in[L-1]}\mH_\omega(\ell,\ell)z_{\omega,\ell}^2+\sum_{\ell\in[L-1]/\{1\}}2\mH_{\omega}(\ell,\ell-1)z_{\omega,\ell}z_{\omega,\ell-1}\\
&\leq \sum_{\ell\in[L-1]}\mH(\ell,\ell)z^2_{\omega,\ell}+\sum_{\ell\in[L-1]/\{1\}}\mH(\ell,\ell-1)\left[z_{\omega,\ell}^2+z_{\omega,\ell-1}^2\right]\\
&= \left(\mH(1,1)+\mH(1,2)\right)z_{\omega,1}^2+ \left(\mH(L-1,L-1)+\mH(L-1,L-2)\right)z_{\omega,L-1}^2\\&\quad\quad+\sum_{\ell\in[L-2]/\{1\}}\left(\mH(\ell,\ell)+\mH(\ell,\ell-1)+\mH(\ell,\ell+1)\right)z_{\omega,\ell}^2\\
& \leq -D_{\alpha,\beta,\Delta} \sum_{\ell}z^2_{\omega,\ell}\mathds{1}\{y_\omega=\ell \text{ or }\ell+1\}.
\end{align}
Therefore,
\begin{equation}\label{eq:sum}
\mz^T\mH\mz =\sum_{\omega}\mz_\omega^T \mH_{\omega} \mz_\omega \leq -D_{\alpha,\beta,\Delta}\sum_{\omega}\sum_{\ell} z^2_{\omega,\ell}\mathds{1}\{y_\omega=\ell \text{ or }\ell+1\}.
\end{equation}

Define the subspace:
\[
\tS=\{\Vec(\tZ): \tZ=-\Theta\otimes \mathbf{1}+\tJ\otimes \mb,\ (\Theta,\mb)\in(\tP,\tB)\}.
\]
It suffices to prove the negative definiteness of Hession when restricted in the subspace $\tS$. Specifically, for any vector $\mz=\entry{z_{\omega,\ell}}\in\tS$,
\begin{align}
\sum_{\omega,\ell} z^2_{\omega,\ell}\mathds{1}\{y_\omega=\ell \text{ or }\ell+1\}&=\sum_{\omega,\ell}(-\theta_\omega+b_\ell)^2\mathds{1}\{y_\omega=\ell \text{ or }\ell+1\}\\
&=\sum_{\omega,\ell}(\theta^2_\omega-2\theta_\omega b_\ell + b^2_\ell)\mathds{1}\{y_\omega=\ell \text{ or }\ell+1\}\\
&\geq \sum_{\omega}\theta^2_\omega -2\sum_{\omega,\ell}\theta_\omega b_\ell+\sum_{\ell}b_\ell^2\left(n_{\ell}+n_{\ell+1}\right)\\
&\geq \sum_{\omega}\theta^2_\omega+\min_\ell\left(n_{\ell}+n_{\ell+1}\right)\sum_{\ell}b_\ell^2.
\end{align}

On the other hand,
\[
\FnormSize{}{\mz}^2=\sum_{\omega,\ell}z^2_{\omega,\ell}=\sum_{\omega,\ell}(-\theta_\omega+b_\ell)^2=L_{\text{total}}\sum_{\omega}\theta^2_\omega+d_{\text{total}}\sum_\ell b^2_\ell,
\]
where $L_{\text{total}}:=(L-1)$ and $d_{\text{total}}:=\prod_k d_k$.

Therefore, we have
\begin{align}\label{eq:negativeH}
\max_{\mz\in\tS,\mz\neq \mathbf{0}}{\sum_{\omega,\ell} z^2_{\omega,\ell}\mathds{1}\{\{y_\omega=\ell \text{ or }\ell+1\}\} \over \FnormSize{}{\mz}^2}&\geq {\sum_{\omega}\theta^2_\omega+\min_\ell\left(n_{\ell}+n_{\ell+1}\right)\sum_{\ell}b_\ell^2\over L_{\text{total}} \sum_{\omega}\theta^2_\omega+d_{\text{total}}\sum_\ell b^2_\ell}
\geq{\min_{\ell}(n_\ell+n_{\ell+1})\over (1+{\alpha^2\over c\Delta^2})d_{\text{total}}}\notag\\
&\geq {2A_{\alpha,\beta,\Delta}\over 1+{\alpha^2\over c\Delta^2}}\quad \text{in high probability as $d_{\min}\to \infty$}. 
\end{align}
The second inequality in \eqref{eq:negativeH} is from the conditions that 
\[
\sum_{\omega}\theta_\omega^2\in [0,\ \alpha^2 d_{\text{total}}]\quad \text{and}\quad \sum_\ell b_\ell^2 \in[c L_{\text{total}}\Delta^2,\  L_{\text{total}}\beta^2],
\]
for some universal constant $c>0$. 
The last inequality in \eqref{eq:negativeH} follows by applying the law of large numbers and the uniform bound $\min_{z_{\omega,\ell}}\mathbb{P}(y_{\omega}=\ell \text{ or }\ell+1|z_{\omega,\ell})\geq 2A_{\alpha,\beta,\Delta}$ to the empirical ratio:
\[
{\min_{\ell}(n_\ell+n_{\ell+1})\over d_{\text{total}}}\overset{p}{\to}\min_{\ell} \mathbb{P}(y_{\omega}=\ell \text{ or }\ell+1|z_{\omega,\ell}) \geq 2A_{\alpha,\beta,\Delta}, \quad \text{ in high probability as $d_{\min}\to \infty$}.
\] 

By~\eqref{eq:sum} and~\eqref{eq:negativeH}, we have
\begin{equation}\label{eq:boundH}
\mz^T\mH\mz \leq- c'A_{\alpha,\beta,\Delta}D_{\alpha,\beta,\Delta}\FnormSize{}{\mz}^2,
\end{equation}
for some constant $c'>0$, therefore~\eqref{eq:HessionZ} is proved.
\end{enumerate}
Finally, plugging~\eqref{eq:scorebound} and~\eqref{eq:HessionZ} into~\eqref{eq:boundZ} yields
\[
\FnormSize{}{\hat \tZ-\trueZ}^2 \leq {c_1r^{K}_{\max}C^2_{\alpha,\beta,\Delta}\over A^2_{\alpha,\beta,\Delta}D^2_{\alpha,\beta,\Delta}}\left(L-1+\sum_k d_k\right).
\]
The MSEs for $\hat \Theta$ and $\hat \mb$ readily follow from~\eqref{eq:decomp}. 
\end{proof}

\subsection{Proof of Theorem~\ref{thm:rate}}\label{sec:proofMSE}
\begin{proof}[Proof of Theorem~\ref{thm:rate}]
We suppress the subscript $\Omega$ in the proof, because the tensor denoising assumes complete observation $\Omega=[d_1]\times \cdots \times [d_K]$. It follows from the expression of $\flogl(\Theta)$ that
\begin{align}\label{eq:property}
{\partial \flogl\over \partial \theta_\omega}&=\sum_{\ell\in[L]}\mathds{1}\{y_{\omega}=\ell\}
{\dot{g}_\ell(\theta_\omega)\over g_\ell(\theta_\omega)},\notag\\
{\partial^2 \flogl\over \partial \theta_\omega^2}&=\sum_{\ell\in[L]}\mathds{1}\{y_\omega=\ell\}{\ddot{g}_\ell(\theta_\omega)g_\ell(\theta_\omega)-\dot{g}^2_\ell(\theta_\omega)\over g^2_\ell(\theta_\omega)}\ \text{and}\quad
{\partial^2 \flogl\over \partial \theta_\omega \theta_\omega'}=0\ \text{if}\ \omega\neq \omega',
\end{align}
for all $\omega\in[d_1]\times \cdots \times [d_K]$.
Define $d_{\text{total}}=\prod_k d_k$. Let $\fplogl\in\mathbb{R}^{d_1\times\cdots\times d_K}$ denote the tensor of gradient with respect to $\Theta\in\mathbb{R}^{d_1\times \cdots\times d_K}$, and $\fpplogl$ the corresponding Hession matrix of size $d_\text{total}$-by-$d_{\text{total}}$. Here, $\Vec(\cdot)$ denotes the operation that turns a tensor into a vector. By~\eqref{eq:property}, $\fpplogl$ is a diagonal matrix. Recall that
\begin{equation}\label{eq:bound}
U_\alpha=\max_{\ell\in[L],|\theta|\leq \alpha}{|\dot{g}_\ell(\theta)|\over g_\ell(\theta)}>0 \quad \text{and}\quad
L_\alpha=\min_{\ell\in[L],|\theta|\leq \alpha} {\dot{g}^2_\ell(\theta)-\ddot{g}_\ell(\theta)g_\ell(\theta)\over g^2_\ell(\theta)}>0.
\end{equation}
Therefore, the entries in $\fplogl$ are upper bounded in magnitude by $U_\alpha>0$, and all diagonal entries in $\fpplogl$ are upper bounded by $-L_{\alpha}<0$.

By the second-order Taylor's expansion of $\flogl(\Theta)$ around $\trueT$, we obtain
\begin{equation}\label{eq:taylor}
\flogl(\Theta)=\flogl(\trueT)+\langle\Vec(\fplogl(\trueT)),\ \Vec(\Theta-\trueT)\rangle+{1\over 2}\Vec(\Theta-\trueT)^T\fpplogl(\check\Theta)\Vec(\Theta-\trueT),
\end{equation}
where $\check\Theta=\gamma\trueT+(1-\gamma)\Theta$ for some $\gamma\in[0,1]$, and $\fpplogl(\check\Theta)$ denotes the $d_{\text{total}}$-by-$d_\text{total}$ Hession matrix evaluated at $\check\Theta$.

We first bound the linear term in~\eqref{eq:taylor}. Note that, by Lemma~\ref{lem:inq},
\begin{equation}\label{eq:linear}
|\langle\Vec(\fplogl(\trueT), \Vec(\Theta-\trueT)  \rangle|\leq \snormSize{}{\fplogl(\trueT)} \nnormSize{}{\Theta-\trueT},
\end{equation}
where $\snormSize{}{\cdot}$ denotes the tensor spectral norm and $\nnormSize{}{\cdot}$ denotes the tensor nuclear norm. Define
\[
s_\omega={\partial \tL_\tY\over \partial \theta_\omega}\Big|_{\Theta=\trueT} \;\; \textrm{ for all } \; \omega\in[d_1]\times\cdots\times [d_K].
\]
Based on~\eqref{eq:property} and the definition of $U_\alpha$, $\fplogl(\trueT)=\entry{s_{\omega}}$ is a random tensor whose entries are independently distributed satisfying
\begin{equation}\label{eq:norm}
\mathbb{E}(s_\omega)=0,\quad |s_\omega|\leq U_\alpha, \quad \text{for all }\omega\in[d_1]\times \cdots \times [d_K].
\end{equation}
By lemma~\ref{lem:noisytensor}, with probability at least $1-\exp(-C_1 \sum_kd_k)$, we have
\begin{equation}\label{eq:normrandom}
\snormSize{}{\fplogl(\trueT)} \leq C_2 U_\alpha\sqrt{\sum_k d_k},
\end{equation}
where $C_1, C_2$ are two positive constants that depend only on $K$. Furthermore, note that $\text{rank}(\Theta)\leq \mr$, $\text{rank}(\trueT)\leq \mr$, so $\text{rank}(\Theta-\trueT)\leq 2\mr$. By lemma~\ref{lem:nuclear}, $\nnormSize{}{\Theta-\trueT}\leq (2r_{\max})^{K-1\over 2}\FnormSize{}{\Theta-\trueT}$. Combining~\eqref{eq:linear}, \eqref{eq:norm} and \eqref{eq:normrandom}, we have that, with probability at least $1-\exp(-C_1 \sum_kd_k)$,
\begin{equation}\label{eq:linearconclusion}
|\langle \Vec(\fplogl(\trueT)), \Vec(\Theta-\trueT)  \rangle | \leq C_2 U_\alpha  \sqrt{r_{\max}^{K-1} \sum_k d_k}  \FnormSize{}{\Theta-\trueT}.
\end{equation}

We next bound the quadratic term in \eqref{eq:taylor}. Note that
\begin{align}\label{eq:quadratic}
 \Vec(\Theta-\trueT)^T \fpplogl(\check{\Theta})\Vec(\Theta-\trueT)&=\sum_\omega \left( {\partial^2\tL_{\tY}\over \partial \theta^2_\omega} \Big|_{\Theta=\check\Theta} \right)(\theta_\omega-\theta_{{\text{true}},\omega})^2 \nonumber \\
&\leq - L_\alpha\sum_{\omega}(\Theta_{\omega}-\Theta_{\text{true},\omega})^2 \nonumber \\
&=-L_\alpha\FnormSize{}{\Theta-\trueT}^2,
\end{align}
where the second line comes from the fact that  $\mnormSize{}{\check\Theta}\leq \alpha$ and the definition of $L_\alpha$.

Combining~\eqref{eq:taylor}, \eqref{eq:linearconclusion} and~\eqref{eq:quadratic}, we have that, for all $\Theta\in\tP$, with probability at least $1-\exp(-C_1 \sum_kd_k)$,
\[
\tL_\tY(\Theta)\leq \tL_{\tY}(\trueT)+C_2U_\alpha  \left(r_{\max}^{K-1}\sum_k d_k\right)^{1/2}  \FnormSize{}{\Theta-\trueT}-{L_\alpha\over 2}\FnormSize{}{\Theta-\trueT}^2.
\]
In particular, the above inequality also holds for $\hat \Theta\in\tP$. Therefore,
\[
\tL_\tY(\hat \Theta)\leq \tL_{\tY}(\trueT)+C_2U_\alpha \left(r_{\max}^{K-1}\sum_k d_k\right)^{1/2}  \FnormSize{}{\hat \Theta-\trueT}-{L_\alpha\over 2} \FnormSize{}{\hat \Theta-\trueT}^2.
\]
Since $\hat \Theta=\arg\max_{\Theta\in\tP}\tL_\tY(\Theta)$, $\tL_\tY(\hat \Theta)-\tL_{\tY}(\trueT)\geq 0$, which gives
\[
C_2U_\alpha \left(r_{\max}^{K-1}\sum_k d_k\right)^{1/2}  \FnormSize{}{\hat \Theta-\trueT}-{L_\alpha\over 2}\FnormSize{}{\hat \Theta-\trueT}^2\geq 0.
\]
Henceforth,
\[
{1\over \sqrt{\prod_k d_k}} \FnormSize{}{\hat \Theta-\trueT}\leq {2C_2U_\alpha \sqrt{r_{\max}^{K-1}\sum_k d_k}\over L_\alpha \sqrt{\prod_k d_k}}={2C_2U_\alpha r_{\max}^{(K-1)/2}\over L_\alpha} \sqrt{ \sum_k d_k \over \prod_k d_k}.
\]
This completes the proof.
\end{proof}

\begin{proof}[Proof of Corollary~\ref{cor:prediction}]
The result follows immediately from Theorem~\ref{thm:rate} and Lemma~\ref{lem:KL}.
\end{proof}

\subsection{Proof of Theorem~\ref{thm:minimax}}
\begin{proof}[Proof of Theorem~\ref{thm:minimax}]

Let $d_{\text{total}}=\prod_{k\in[K]}d_k$, and $\gamma\in[0,1]$ be a constant to be specified later.  Our strategy is to construct a finite set of tensors $\tX=\{\Theta_i\colon i=1,\ldots \}\subset \tP$ satisfying the properties of (i)-(iv) in Lemma~\ref{lem:construction}. By Lemma~\ref{lem:construction}, such a subset of tensors exist. For any tensor  $\Theta\in\tX$, let $\mathbb{P}_{\Theta}$ denote the distribution of $\tY|\Theta$, where $\tY$ is the ordinal tensor. In particular, $\mathbb{P}_{\mathbf{0}}$ is the distribution of $\tY$ induced by the zero parameter tensor $\mathbf{0}$, i.e., the distribution of $\tY$ conditional on the parameter tensor $\Theta=\mathbf{0}$. Based on the Remark for Lemma~\ref{lem:KL}, we have
\begin{equation}\label{eq:KLbound1}
\mathrm{KL}(\mathbb{P}_{\Theta}|| \mathbb{P}_{\mathbf{0}})\leq C \FnormSize{}{\Theta}^2,
\end{equation}
where $C={(4L-6) \dot{f}^2(0)\over  A_\alpha}>0$ is a constant independent of the tensor dimension and rank.
Combining the inequality~\eqref{eq:KLbound1} with property (iii) of $\tX$, we have
\begin{equation}\label{eq:KLbound}
\text{KL}(\mathbb{P}_{\Theta}||\mathbb{P}_{\mathbf{0}})\leq \gamma^2 r_{\max} d_{\max}.
\end{equation}
From~\eqref{eq:KLbound} and the property (i), we deduce that the condition
\begin{equation}\label{eq:totalKL}
{1\over \text{Card}(\tX)-1}\sum_{\Theta \in\tX}\text{KL}(\mathbb{P}_{\Theta}, \mathbb{P}_{\mathbf{0}})\leq \varepsilon \log\left\{\text{Card}(\tX)-1 \right\}
\end{equation}
holds for any $ \varepsilon \geq 0$ when $\gamma\in[0,1]$ is chosen to be sufficiently small depending on $\varepsilon$, e.g., $\gamma \leq \sqrt{\varepsilon\log2\over8}$. By applying Lemma~\ref{lem:Tsybakov} to~\eqref{eq:totalKL}, and in view of the property (iv), we obtain that
\begin{equation}\label{eq:final}
\inf_{\hat \Theta}\sup_{\trueT\in \tX}\mathbb{P}\left(\FnormSize{}{\hat \Theta- \trueT}\geq  {\gamma\over 8} \min\left\{ \alpha\sqrt{d_{\text{total}}}, C^{-1/2}\sqrt{ r_{\max}d_{\max}}\right\} \right)\geq {1\over 2}\left(1-2\varepsilon-\sqrt{16 \varepsilon \over r_{\max}d_{\max}\log2}\right).
\end{equation}
Note that $\textup{MSE}(\hat \Theta, \trueT)=\FnormSize{}{\hat \Theta- \trueT}^2/d_{\text{total}}$ and $\tX\subset \tP$. By taking $\varepsilon=1/10$ and $\gamma=1/11$, we conclude from~\eqref{eq:final} that
\begin{equation}\label{eq:prob}
\inf_{\hat \Theta}\sup_{\trueT\in \tP}\mathbb{P}\left(\textup{MSE}(\hat \Theta, \trueT)\geq c\min\left \{ \alpha^2,  {C^{-1}r_{\max}d_{\max}\over d_{\text{total}}}\right \}\right)\geq {1\over 2}\left({4\over 5}- \sqrt{1.6\over r_{\max}d_{\max}\log2} \right)\geq {1\over 8},
\end{equation}
where $c = {1 \over 88^2}$ and the last inequality comes from the condition for $d_{\text{max}}$.
This completes the proof.
\end{proof}

\subsection{Proof of Theorem~\ref{thm:completion}}

\begin{proof}[Proof of Theorem~\ref{thm:completion}]

For notational convenience, we use $\MFnormSize{}{\Theta}=\sum_{\omega\in\Omega}\Theta^2_\omega$ to denote the sum of squared entries over the observed set $\Omega$, for a tensor $\Theta\in\mathbb{R}^{d_1\times \cdots \times d_K}$.

Following a similar argument as in the proof of Theorem~\ref{thm:rate}, we have
\begin{equation}\label{eq:Taylor2}
\logl(\Theta)=\logl(\trueT)+\langle\Vec(\plogl),\ \Vec(\Theta-\trueT)\rangle+{1\over 2}\Vec(\Theta-\trueT)^T\pplogl(\check\Theta)\Vec(\Theta-\trueT),
\end{equation}
where
\begin{enumerate}[label=\textup{(\roman*)}]
\item $\plogl$ is a $d_1\times\cdots\times d_K$ tensor with $|\Omega|$ nonzero entries, and each entry is upper bounded by $U_\alpha>0$.
\item $\pplogl$ is a diagonal matrix of size $d_{\text{total}}$-by-$d_{\text{total}}$ with $|\Omega|$ nonzero entries, and each entry is upper bounded by $-L_{\alpha}<0$.
\end{enumerate}

Similar to~\eqref{eq:linear} and~\eqref{eq:quadratic}, we have
\begin{equation}\label{eq:linear2}
|\langle\Vec(\plogl),\ \Vec(\Theta-\trueT)\rangle|\leq C_2U_\alpha \sqrt{r_{\max}^{K-1}\sum_k d_k}\MFnormSize{}{\Theta-\trueT}
\end{equation}
and
\begin{equation}\label{eq:quadratic2}
\Vec(\Theta-\trueT)^T\fpplogl(\check\Theta)\Vec(\Theta-\trueT)\leq -L_\alpha \MFnormSize{}{\Theta-\trueT}^2.
\end{equation}

Combining~\eqref{eq:Taylor2}-\eqref{eq:quadratic2} with the fact that $\logl(\hat \Theta)\geq \logl(\trueT)$, we have
\begin{equation}\label{eq:sample}
\MFnormSize{}{\hat \Theta-\trueT}\leq {2C_2U_\alpha  r_{\max}^{(K-1)/2} \over L_\alpha} \sqrt{\sum_kd_k},
\end{equation}
with probability at least $1-\exp(-C_1 \sum_k d_k)$. Lastly, we invoke the result regarding the closeness of $\Theta$ to its sampled version $\Theta_{\Omega}$, under the entrywise bound condition. Note that $\mnormSize{}{\hat\Theta-\trueT}\leq 2\alpha$ and $\text{rank}(\hat \Theta-\trueT)\leq 2\mr$. By Lemma~\ref{lem:Mnormbound}, $\anormSize{}{\hat \Theta-\trueT}\leq 2^{(3K-1)/2} \alpha \left({\prod r_k \over r_{\max}}\right)^{3/2}$. Therefore, the condition in Lemma~\ref{lem:convexity} holds with $\beta=2^{(3K-1)/2}\alpha \left({\prod r_k \over r_{\max}}\right)^{3/2}$.
Applying Lemma~\ref{lem:convexity} to~\eqref{eq:sample} gives
\begin{align}
 \PiFnormSize{}{\hat \Theta-\trueT}^2&\leq {1\over m}\MFnormSize{}{\hat \Theta-\trueT}^2+c\beta\sqrt{\sum_k d_k\over |\Omega|}\\
 &\leq {C_2  r^{K-1}_{\max}} {\sum_k d_k \over |\Omega|}+C_1 \alpha r_{\max}^{3(K-1)/2}\sqrt{\sum_kd_k\over |\Omega|},
\end{align}
with probability at least $1-\exp(-{\sum_kd_k\over \sum_k \log d_k})$ over the sampled set $\Omega$. Here $C_1, C_2>0$ are two constants independent of the tensor dimension and rank. Therefore,
\[
 \PiFnormSize{}{\hat \Theta-\trueT}^2\to 0,\quad \text{as}\quad {|\Omega|\over \sum_kd_k}\to \infty,
\]
provided that $r_{\max}=O(1)$.
\end{proof}

\subsection{Convexity of the log-likelihood function}\label{sec:proofconvexity}
\begin{thm}\label{thm:convexity}
Define the function
\begin{equation}\label{eq:function}
 \logl(\Theta, \mb)=\sum_{\omega\in\Omega}\sum_{\ell\in[L]} \big(\mathds{1}\{y_\omega=\ell\}\log \left[f(b_\ell-\theta_\omega)-  f(b_{\ell-1}-\theta_\omega)\right]\big),
 \end{equation}
where $f(\cdot)$ satisfies Assumption~\ref{ass:link}. Then, $\logl(\Theta, \mb)$ is concave in $(\Theta,\mb)$.
\end{thm}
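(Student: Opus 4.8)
The plan is to reduce the joint concavity of $\logl(\Theta,\mb)$ to a single two-variable lemma and then invoke log-concavity of $\dot f$ through Prékopa's theorem. First I would observe that, for each fixed index $\omega$, exactly one indicator $\mathds{1}\{y_\omega=\ell\}$ is nonzero, so the double sum collapses to a sum over $\omega$ of terms of the form $\log\big[f(b_{\ell}-\theta_\omega)-f(b_{\ell-1}-\theta_\omega)\big]$ with $\ell=y_\omega$. Since a sum of concave functions is concave, and each such term is the composition of the bivariate map $h(u,v):=\log[f(u)-f(v)]$ with the \emph{linear} map $(\Theta,\mb)\mapsto(u,v)=(b_\ell-\theta_\omega,\,b_{\ell-1}-\theta_\omega)$, and concavity is preserved under affine precomposition, it suffices to prove that $h$ is jointly concave on the region $\{(u,v):u>v\}$, where $f(u)-f(v)>0$ because $f$ is strictly increasing.

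The heart of the argument is the key lemma: if $g:=\dot f$ is log-concave, then $h(u,v)=\log\int_v^u g(t)\,dt$ is jointly concave for $u>v$. To prove it I would apply Prékopa's theorem. Define $G(u,v,t)=g(t)\,\mathds{1}\{v\le t\le u\}$ on $\mathbb{R}^3$. The set $\{(u,v,t): v\le t\le u\}$ is an intersection of two half-spaces, hence convex, so its indicator is log-concave; and $g(t)$ is log-concave in $t$ by Assumption~\ref{ass:link}, hence log-concave as a function of $(u,v,t)$. A product of log-concave functions is log-concave, so $G$ is jointly log-concave. Prékopa's theorem then guarantees that the marginal $\int_{\mathbb{R}} G(u,v,t)\,dt=\int_v^u g(t)\,dt=f(u)-f(v)$ is log-concave in $(u,v)$; taking logarithms gives the concavity of $h$ on $\{u>v\}$.

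The boundary levels need a separate but easier remark: for $\ell=1$ we have $b_0=-\infty$ and the term reduces to $\log f(b_1-\theta_\omega)$, while for $\ell=L$ we have $b_L=\infty$ and the term reduces to $\log\big(1-f(b_{L-1}-\theta_\omega)\big)$. Each is a function of a single affine coordinate, and its concavity follows from the same Prékopa argument with the convex set replaced by a half-space $\{t\le u\}$ or $\{t\ge v\}$; this is the classical statement that both the CDF and the survival function of a log-concave density are log-concave. Assembling the interior and boundary terms, $\logl$ is a sum of concave functions and hence concave, which proves the theorem.

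I expect the main obstacle to be the key lemma. A direct Hessian computation for $h$ does not reduce to the pointwise inequality $\ddot f\,\dot f\le(\dot f)^2$ coming from log-concavity of $g$; instead the determinant of the $2\times 2$ Hessian mixes the values of $g$ and $\dot g$ at $u$ and $v$ with the integral $f(u)-f(v)$, so positivity genuinely requires the integral (convolution-type) structure that Prékopa supplies. I would therefore rely on that theorem rather than grinding through the Hessian. I would also note that only log-concavity of $\dot f$ is used in this proof; the symmetry assumed in Assumption~\ref{ass:link} plays no role in the concavity statement and is reserved for the modeling (palindromic) properties elsewhere.
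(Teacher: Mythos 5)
Your proposal is correct and follows essentially the same route as the paper: both reduce to the joint concavity of $(u,v)\mapsto\log\left[f(u)-f(v)\right]$ via affine precomposition, and both establish that concavity by writing $f(u)-f(v)=\int \mathds{1}\{v\le t\le u\}\,\dot f(t)\,dt$ and invoking preservation of log-concavity under marginalization (the paper cites the Brascamp--Lieb corollary, which is the Pr\'ekopa-type result you use). Your additional remarks on the boundary levels $\ell=1,L$ and on the symmetry assumption being unnecessary here are accurate but do not change the argument.
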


\begin{proof}
Define $d_{\text{total}}=\prod_k d_k$. By abuse of notation, we use $(\Theta,\mb)$ to denote the length-$(d_{\text{total}}+L-1)$-vector collecting all parameters together. Let us denote a bivariate function
\begin{align}
\lambda\colon &\mathbb{R}^2\mapsto \mathbb{R}\\
(u,v)&\mapsto\lambda(u,v) = \log \big[f(u)-  f(v)\big].
\end{align}
It suffices to show that $\lambda(u,v)$ is concave in $(u,v)$ where $u>v$.

Suppose that the claim holds (which we will prove in the next paragraph). Based on~\eqref{eq:function}, $u,v$ are both linear functions of $(\Theta,\mb)$:
\[
u = \ma_1^T(\Theta,\mb), \quad v = \ma_2^T(\Theta,\mb), \quad \text{ for some}\ \ma_1,\ma_2\in \mathbb{R}^{d_{\text{total}}+L-1}.
\]
Then, $\lambda(u,v) = \lambda(\ma_1^T(\Theta,\mb),\ \ma_2^T(\Theta,\mb))$ is concave in $(\Theta,\mb)$ by the definition of concavity. Therefore, we can conclude that $\logl(\Theta, \mb) $ is concave in $(\Theta,\mb)$ because $\logl(\Theta, \mb)$ is the sum of $\lambda(u,v)$.

Now, we prove the concavity of $\lambda(u,v)$. Note that
\begin{equation}
\lambda(u,v) = \log\big[f(u)-f(v)\big]=\log\big[\int\mathds{1}_{[u,v]}(x)f'(x)dx\big],
\end{equation}
where $\mathds{1}_{[u,v]}$ is an indicator function that equals 1 in the interval $[u,v]$, and 0 elsewhere. Furthermore, $\mathds{1}_{[u,v]}(x)$ is log-concave in $(u,v,x)$, and by Assumption~\ref{ass:link}, $f'(x)$ is log-concave in $x$. It follows that $\mathds{1}_{[u,v]}(x)f'(x)$
is a log-concave in $(u,v,x)$. By Lemma~\ref{lem:lossconvexity}, we conclude that $\lambda(u,v)$ is concave in $(u,v)$ where $u>v$.
\end{proof}

\begin{lem}[Corollary 3.5 in~\citet{brascamp2002extensions}]\label{lem:lossconvexity}
Let $F(x,y)\colon \mathbb{R}^{m+n}\rightarrow \mathbb{R}$ be an integrable function where $x\in \mathbb{R}^{m},y\in \mathbb{R}^n$. Let
\[
G(x) = \int_{\mathbb{R}^n}F(x,y)dy.
\]
If $F(x,y)$ is log concave in $(x,y)$, then $G(x)$ is log concave in $x$.
\end{lem}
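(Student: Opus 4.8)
The plan is to deduce this marginalization statement from the Pr\'ekopa--Leindler inequality, the functional Brunn--Minkowski inequality that is the canonical device for passing log-concavity through integration. Since log-concavity of $G$ means $G\geq 0$ together with
\begin{equation}
G\big(\lambda x_0+(1-\lambda)x_1\big)\ \geq\ G(x_0)^{\lambda}\,G(x_1)^{1-\lambda}
\end{equation}
for all $x_0,x_1\in\mathbb{R}^m$ and $\lambda\in(0,1)$, it suffices to establish this two-point inequality. Write $x_\lambda=\lambda x_0+(1-\lambda)x_1$ and consider the three slices $h(y)=F(x_\lambda,y)$, $f(y)=F(x_0,y)$, $g(y)=F(x_1,y)$, which are nonnegative and measurable in $y\in\mathbb{R}^n$ and satisfy $G(x_\lambda)=\int h$, $G(x_0)=\int f$, $G(x_1)=\int g$.

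First I would verify the pointwise premise of Pr\'ekopa--Leindler for $(h,f,g)$. The key algebraic observation is that the relevant argument is a single \emph{joint} convex combination in $\mathbb{R}^{m+n}$: for arbitrary $u,v\in\mathbb{R}^n$,
\begin{equation}
\big(x_\lambda,\ \lambda u+(1-\lambda)v\big)=\lambda\,(x_0,u)+(1-\lambda)\,(x_1,v).
\end{equation}
Feeding this into the assumed log-concavity of $F$ on $\mathbb{R}^{m+n}$ gives
\begin{align}
h\big(\lambda u+(1-\lambda)v\big)&=F\big(\lambda(x_0,u)+(1-\lambda)(x_1,v)\big)\\
&\geq F(x_0,u)^{\lambda}\,F(x_1,v)^{1-\lambda}=f(u)^{\lambda}\,g(v)^{1-\lambda},
\end{align}
which is exactly the hypothesis of Pr\'ekopa--Leindler. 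The inequality then yields $\int h\geq(\int f)^{\lambda}(\int g)^{1-\lambda}$, i.e.\ the desired two-point bound for $G$. The degenerate cases $G(x_0)=0$ or $G(x_1)=0$ make the right-hand side vanish and are automatic, while measurability of each slice and of $G$ follows from measurability of $F$ via Fubini--Tonelli.

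The main obstacle is not this reduction, which is essentially bookkeeping, but the Pr\'ekopa--Leindler inequality itself, which must either be invoked (as in the paper, through Corollary 3.5 of~\citet{brascamp2002extensions}) or proved from scratch. A self-contained argument runs by induction on the integration dimension $n$: the base case $n=1$ is obtained directly from the one-dimensional Pr\'ekopa--Leindler/Brunn--Minkowski inequality by comparing superlevel sets and using that the Minkowski sum of two intervals has length at least the sum of their lengths; the inductive step integrates out one coordinate at a time, with the already-log-concave partial integrals playing the roles of $f,g,h$. One technical point to flag in either route is the standing convention that log-concave functions may take the value $0$ (with $\log 0=-\infty$), which keeps the support convex and the inequalities meaningful on the sets where the slices vanish.
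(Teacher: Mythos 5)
Your argument is correct. The paper itself offers no proof of this lemma: it is imported wholesale as Corollary 3.5 of \citet{brascamp2002extensions} and used as a black box in the proof of Theorem~\ref{thm:convexity}. What you supply is the standard derivation of this marginalization theorem (often attributed to Pr\'ekopa) from the Pr\'ekopa--Leindler inequality, and the reduction is carried out correctly: the identity $(x_\lambda,\lambda u+(1-\lambda)v)=\lambda(x_0,u)+(1-\lambda)(x_1,v)$ is exactly the point that turns joint log-concavity of $F$ into the pointwise three-function hypothesis for the slices $h,f,g$, after which Pr\'ekopa--Leindler delivers $\int h\geq(\int f)^{\lambda}(\int g)^{1-\lambda}$, i.e.\ the two-point log-concavity of $G$. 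Your handling of the degenerate cases $G(x_i)=0$ and of the convention $\log 0=-\infty$ is also right, and you correctly flag that the remaining content is Pr\'ekopa--Leindler itself, which must either be cited or proved by the usual induction on the integration dimension. The trade-off between the two routes is the obvious one: the paper's citation is shorter and defers all analytic work to \citet{brascamp2002extensions} (whose own proof proceeds through essentially the same functional Brunn--Minkowski machinery), while your version makes the argument self-contained up to a single, widely known inequality and makes transparent \emph{why} integrating out variables preserves log-concavity. One small polish: nonnegativity of $G$ need not be listed as a separate requirement, since it is automatic from $F\geq 0$, which is part of the log-concavity convention you already invoke.
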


\section{Conclusions}
We have developed a low-rank tensor estimation method based on possibly incomplete, ordinal-valued observations. A sharp error bound is established, and we demonstrate the outperformance of our approach compared to other methods. The work unlocks several directions of future research. One interesting question would be the inference problem, i.e.,\ to assess the uncertainty of the obtained estimates and the imputation. Other directions include the trade-off between (non)convex optimization and statistical efficiency. While we have provided numerical evidence for the success of nonconvex approach, the full landscape of the optimization remains open. The interplay between computational efficiency and statistical accuracy in general tensor problems warrants future research.

\section*{Acknowledgements}
This research is supported in part by NSF grant DMS-1915978 and Wisconsin Alumni Research Foundation.

\clearpage
\appendix

\renewcommand{\thefigure}{{S\arabic{figure}}}%
\renewcommand{\figurename}{{Supplementary Figure}}    
\setcounter{figure}{0}   
\setcounter{table}{0}  

\section*{Appendix}

\section{Auxiliary lemmas}\label{sec:lemma}
This section collects lemmas that are useful for the proofs of the main theorems.

\begin{defn}[Atomic M-norm~\citep{ghadermarzy2019near}]
Define $T_{\pm}=\{\tT\in\{\pm 1 \}^{d_1\times \cdots \times d_K}\colon \text{rank}(\tT)=1\}$. The atomic M-norm of a tensor $\Theta\in\mathbb{R}^{d_1\times \cdots \times d_K}$ is defined as
\begin{align}
\anormSize{}{\Theta}&=\inf\{t>0\colon \Theta\in t\text{conv}(T_{\pm})\}\\
&=\inf\left\{\sum_{\tX\in T_{\pm}}c_{\tX} \colon\ \Theta=\sum_{\tX\in T_{\pm}} c_{\tX}\tX, \ c_{\tX}>0\right\}.
\end{align}
\end{defn}

\begin{defn}[Spectral norm~\citep{lim2005singular}]
The spectral norm of a tensor $\Theta\in\mathbb{R}^{d_1\times \cdots \times d_K}$ is defined as
\[
\snormSize{}{\Theta}=\sup\left\{\langle \Theta, \mx_1\otimes \cdots \otimes \mx_K\rangle \colon \vnormSize{}{\mx_k}=1,\ \mx_k\in\mathbb{R}^{d_k},\ \text{for all}\ k\in[K]\right\}.
\]
\end{defn}

\begin{defn}[Nuclear norm~\citep{friedland2018nuclear}]
The nuclear norm of a tensor $\Theta\in\mathbb{R}^{d_1\times \cdots \times d_K}$ is defined as
\[
\nnormSize{}{\Theta}=\inf
\left\{
\sum_{i\in[r]}|\lambda_i|\colon \Theta=\sum_{i=1}^r \lambda_i\mx^{(i)}_1\otimes \cdots \otimes \mx^{(i)}_K,\ \vnormSize{}{\mx^{(i)}_k}=1,\ \mx^{(i)}_k\in\mathbb{R}^{d_k},\ \text{for all}\ k\in[K],\ i\in[r]
\right\},
\]
where the infimum is taken over all $r\in\mathbb{N}$ and $\vnormSize{}{\mx^{(i)}_k}=1$ for all $i\in[r]$ and $k\in[K]$.
\end{defn}

\begin{lem}[M-norm and infinity norm~\citep{ghadermarzy2019near}]\label{lem:Mnormbound}
Let $\Theta\in\mathbb{R}^{d_1\times \cdots \times d_K}$ be an order-$K$, rank-$(r_1,\ldots,r_K)$ tensor. Then
\[
\mnormSize{}{\Theta}\leq \anormSize{}{\Theta}\leq \left(\prod_k r_k \over r_{\max}\right)^{3\over 2} \mnormSize{}{\Theta}.
\]
\end{lem}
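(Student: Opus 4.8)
The statement splits into two inequalities, of which the left one is elementary and the right one carries all the content.

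For the lower bound $\mnormSize{}{\Theta}\le \anormSize{}{\Theta}$, I would argue directly from the definition of the atomic M-norm. Every atom $\tX\in T_{\pm}$ has entries in $\{\pm1\}$, so $\mnormSize{}{\tX}=1$. Given any admissible representation $\Theta=\sum_{\tX\in T_{\pm}}c_{\tX}\tX$ with $c_{\tX}>0$, the entrywise triangle inequality gives $|\theta_\omega|\le \sum_{\tX}c_{\tX}|x_{\tX,\omega}|=\sum_{\tX}c_{\tX}$ for every index $\omega$, since $|x_{\tX,\omega}|=1$. Taking the maximum over $\omega$ and then the infimum over all representations yields $\mnormSize{}{\Theta}\le \anormSize{}{\Theta}$.

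For the upper bound, the plan is to build an explicit sign-tensor representation whose total weight is controlled by $\mnormSize{}{\Theta}$ and the rank, proceeding in two stages. First, I reduce the multilinear-rank constraint to a CP-type decomposition: a tensor of Tucker rank $(r_1,\ldots,r_K)$ admits a CP decomposition into at most $R:=\prod_k r_k/r_{\max}=\min_k\prod_{j\neq k}r_j$ rank-one terms, obtained for instance by expanding the core of an orthogonal Tucker factorization along all but the largest mode. Second, I handle each rank-one term by a randomized-rounding argument: for a rank-one tensor $T=\mu_1\otimes\cdots\otimes\mu_K$ with each factor having sup-norm at most $1$, drawing independent sign vectors $S_k\in\{\pm1\}^{d_k}$ with $\mathbb{E}S_k=\mu_k$ gives $\mathbb{E}[S_1\otimes\cdots\otimes S_K]=T$ by independence across modes, so $T\in\text{conv}(T_{\pm})$ and $\anormSize{}{T}\le\mnormSize{}{T}$. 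Combining this with the subadditivity of the norm $\anormSize{}{\cdot}$ yields $\anormSize{}{\Theta}\le\sum_{i=1}^{R}\mnormSize{}{T_i}$, where $T_1,\ldots,T_R$ are the rank-one pieces.

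The main obstacle is the final bookkeeping: controlling $\sum_{i=1}^R \mnormSize{}{T_i}$ in a dimension-free manner, which is exactly where the exponent $3/2$ enters. The naive route of bounding each component by its Frobenius or $\ell_1$ mass reintroduces the ambient dimensions $d_k$, since $\FnormSize{}{\Theta}$ can be of order $\sqrt{\prod_k d_k}\,\mnormSize{}{\Theta}$ and the factor $\ell_1$ norms can be of order $\sqrt{d_k}$. The fix is to seek a decomposition that respects the infinity-norm constraint: one normalizes the all-but-largest-mode factors to have unit columns (hence sup-norm at most $1$) and aims to bound the magnitude of each of the $R$ components by $\sqrt{R}\,\mnormSize{}{\Theta}$, so that $\anormSize{}{\Theta}\le R\cdot\sqrt{R}\,\mnormSize{}{\Theta}=(\prod_k r_k/r_{\max})^{3/2}\mnormSize{}{\Theta}$. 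Establishing this per-component bound without dimension dependence, rather than through a crude Frobenius or $\ell_1$ estimate, is the delicate step; since the statement is quoted from \citet{ghadermarzy2019near}, I would either invoke their construction directly or reproduce this norm comparison as the single nontrivial computation.
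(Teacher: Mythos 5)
First, note that the paper does not actually prove this lemma: it is imported verbatim from \citet{ghadermarzy2019near} and used as a black box, so there is no in-paper argument to compare against. Judged on its own terms, your proposal gets the easy direction right ($\mnormSize{}{\tX}=1$ for every atom, triangle inequality, infimum over representations), and the two structural ingredients you identify for the upper bound are both correct: a Tucker rank-$(r_1,\ldots,r_K)$ tensor does have CP rank at most $R=\prod_k r_k/r_{\max}$ (expand the core along all modes but the one of largest rank), and the independent randomized-rounding argument does show $\anormSize{}{\lambda\,\mu_1\otimes\cdots\otimes\mu_K}\leq|\lambda|$ whenever $\mnormSize{}{\mu_k}\leq 1$ for all $k$.

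However, there is a genuine gap at exactly the point you flag: the claim that the $R$ rank-one components can be arranged so that each has weight at most $\sqrt{R}\,\mnormSize{}{\Theta}$ is never established, and it does not follow from anything preceding it. The natural decomposition coming from an orthogonal Tucker factorization, $\Theta=\sum_{i_2,\ldots,i_K}\bigl(\mM_1\,\tC(:,i_2,\ldots,i_K)\bigr)\otimes(\mM_2 e_{i_2})\otimes\cdots\otimes(\mM_K e_{i_K})$, controls the total weight only by $\sqrt{R}\,\FnormSize{}{\Theta}$ via Cauchy--Schwarz on the core fibers, and $\FnormSize{}{\Theta}$ can be as large as $\sqrt{\prod_k d_k}\,\mnormSize{}{\Theta}$ --- precisely the dimension dependence the lemma must avoid. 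Removing it requires the additional structural argument in \citet{ghadermarzy2019near} (iterated use of a bounded-factorization result for flat low-rank matrices, which is where the $3/2$ exponent is actually generated), and your writeup explicitly defers to that reference rather than supplying it. So what you have is a correct skeleton whose single load-bearing step is missing; as a self-contained proof it is incomplete, though as a reduction of the lemma to the cited norm-comparison it is accurate.
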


\begin{lem}[Nuclear norm and F-norm] \label{lem:nuclear}
Let $\tA\in\mathbb{R}^{d_1\times\cdots\times d_K}$ be an order-$K$ tensor with Tucker $\text{rank}(\tA)=(r_1,\ldots,r_K)$. Then
\[
\nnormSize{}{\tA} \leq \sqrt{\prod_k r_k\over \max_k r_k} \FnormSize{}{\tA},
\]
where $\nnormSize{}{\cdot}$ denotes the nuclear norm of the tensor.
\end{lem}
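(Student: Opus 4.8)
The plan is to reduce the bound to a statement about the Tucker core and then exploit the mode of largest rank. First I would invoke the orthogonal Tucker decomposition $\tA = \tC \times_1 \mM_1 \times_2 \cdots \times_K \mM_K$, where $\tC \in \mathbb{R}^{r_1 \times \cdots \times r_K}$ is the core and each $\mM_k \in \mathbb{R}^{d_k \times r_k}$ has orthonormal columns. Since mode multiplication by a matrix with orthonormal columns preserves the Frobenius norm, one has $\FnormSize{}{\tA} = \FnormSize{}{\tC}$, so it suffices to control $\nnormSize{}{\tA}$ by $\FnormSize{}{\tC}$. Note that reducing to the core is essential: slicing $\tA$ itself along a mode produces $\prod_{k} d_k / d_K$ fibers, far too many, whereas the core fibers are indexed only by the small dimensions $r_k$.

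Next I would transfer the nuclear-norm estimate from $\tA$ down to $\tC$. Given any representation $\tC = \sum_i \lambda_i\, \mx_1^{(i)} \otimes \cdots \otimes \mx_K^{(i)}$ into rank-one terms with $\vnormSize{}{\mx_k^{(i)}}=1$, applying the factors gives $\tA = \sum_i \lambda_i\, (\mM_1 \mx_1^{(i)}) \otimes \cdots \otimes (\mM_K \mx_K^{(i)})$, and since $\vnormSize{}{\mM_k \mx_k^{(i)}} = \vnormSize{}{\mx_k^{(i)}} = 1$, this is an admissible rank-one decomposition of $\tA$ with identical coefficients. Taking the infimum over decompositions of $\tC$ then yields $\nnormSize{}{\tA} \le \nnormSize{}{\tC}$.

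The crux is to show $\nnormSize{}{\tC} \le \sqrt{\prod_k r_k / \max_k r_k}\,\FnormSize{}{\tC}$, and here the gain comes from reserving the longest mode. Without loss of generality (both norms are invariant under permutation of modes) assume $r_K = \max_k r_k$. Rather than fully expanding $\tC$ into $\prod_k r_k$ standard rank-one terms, I would slice it along mode $K$: writing $\bm{c}_{j_1\cdots j_{K-1}} \in \mathbb{R}^{r_K}$ for the mode-$K$ fiber indexed by $(j_1,\ldots,j_{K-1})$, one has
\[
\tC = \sum_{j_1,\ldots,j_{K-1}} \vnormSize{}{\bm{c}_{j_1\cdots j_{K-1}}}\; \bm{e}_{j_1} \otimes \cdots \otimes \bm{e}_{j_{K-1}} \otimes \hat{\bm{c}}_{j_1\cdots j_{K-1}},
\]
where $\hat{\bm{c}}$ is the normalized fiber, a sum of only $\prod_{k<K} r_k = \prod_k r_k / \max_k r_k$ unit rank-one tensors. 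Hence $\nnormSize{}{\tC} \le \sum_{j_1,\ldots,j_{K-1}} \vnormSize{}{\bm{c}_{j_1\cdots j_{K-1}}}$, and Cauchy--Schwarz over these $\prod_{k<K} r_k$ terms, combined with $\sum_{j_1,\ldots,j_{K-1}} \vnormSize{}{\bm{c}_{j_1\cdots j_{K-1}}}^2 = \FnormSize{}{\tC}^2$, delivers exactly $\sqrt{\prod_k r_k / \max_k r_k}\,\FnormSize{}{\tC}$. Chaining the three inequalities completes the argument.

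The main obstacle is precisely this last step. A direct Cauchy--Schwarz on the full entrywise expansion ranges over all $\prod_k r_k$ coefficients and loses a factor of $\sqrt{\max_k r_k}$, giving only the weaker bound $\sqrt{\prod_k r_k}\,\FnormSize{}{\tC}$. The key idea is to keep the maximal-rank mode intact as a vector-valued amplitude so that the Cauchy--Schwarz step sees $\prod_k r_k / \max_k r_k$ terms instead of $\prod_k r_k$; the detail to verify carefully is that the sliced representation is a legitimate nuclear-norm decomposition, i.e.\ that each factor has unit norm after normalizing the fibers.
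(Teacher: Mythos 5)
Your proof is correct, but it takes a genuinely different and more self-contained route than the paper. The paper's own argument is a two-step citation: it first invokes an external result (Corollary 4.11 of the cited operator-norm paper, together with the invariance between a tensor and its Tucker core) to get $\nnormSize{}{\tA} \leq \sqrt{\prod_{k\geq 2} r_k / \max_{k\geq 2} r_k}\,\nnormSize{}{\tA_{(1)}}$ for the mode of \emph{minimal} rank, and then applies the elementary matrix inequality $\nnormSize{}{\tA_{(1)}}\leq \sqrt{r_1}\FnormSize{}{\tA_{(1)}}=\sqrt{r_1}\FnormSize{}{\tA}$. You instead prove everything from first principles: the reduction $\nnormSize{}{\tA}\leq\nnormSize{}{\tC}$ via pushing a nuclear decomposition of the core through the orthonormal factors, and then the fiber-slicing of $\tC$ along the maximal-rank mode followed by Cauchy--Schwarz over the remaining $\prod_k r_k/\max_k r_k$ indices. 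Each step checks out: mode products with orthonormal-column matrices preserve both unit norms of the rank-one factors and the Frobenius norm of the core, the sliced representation is a legitimate nuclear-norm decomposition (zero fibers are simply dropped), and $\sum\vnormSize{}{\bm{c}_{j_1\cdots j_{K-1}}}^2=\FnormSize{}{\tC}^2$. What the paper's route buys is brevity given the cited machinery, and it makes transparent that the loss relative to the matrix case is governed by the matricization along the smallest-rank mode; what your route buys is a fully elementary, citation-free proof whose key insight --- keep the maximal-rank mode as a vector-valued amplitude so Cauchy--Schwarz runs over fewer terms --- is exactly the mechanism hidden inside the cited corollary. Either is acceptable; yours would arguably serve a reader better as a standalone appendix proof.
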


\begin{proof}
Without loss of generality, suppose $r_1=\min_k r_k$. Let $\tA_{(k)}$ denote the mode-$k$ matricization of $\tA$ for all $k\in[K]$. By \citet[Corollary 4.11]{wang2017operator}, and the invariance relationship between a tensor and its Tucker core~\citep[Section 6]{jiang2017tensor}, we have
\begin{equation}\label{eq:norminequality}
\nnormSize{}{\tA} \leq \sqrt{\prod_{k\geq 2} r_k \over \max_{k\geq 2} r_k} \nnormSize{}{\tA_{(1)}},
\end{equation}
where $\tA_{(1)}$ is a $d_1$-by-$\prod_{k\geq 2}d_k$ matrix with matrix rank $r_1$. Furthermore, the relationship between the matrix norms implies that $\nnormSize{}{\tA_{(1)}}\leq \sqrt{r_1}\FnormSize{}{\tA_{(1)}}=\sqrt{r_1}\FnormSize{}{\tA}$. Combining this fact with the inequality~\eqref{eq:norminequality} yields the final claim.
\end{proof}

\begin{lem} \label{lem:inq}
Let $\tA, \; \tB$ be two order-$K$ tensors of the same dimension. Then
\[
|\langle \tA,\tB\rangle| \leq \snormSize{}{\tA}   \nnormSize{}{\tB}.
\]
\end{lem}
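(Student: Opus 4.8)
The plan is to prove this via a standard norm-duality argument, exploiting that the nuclear norm is defined as an infimum over rank-one decompositions while the spectral norm controls the inner product against any single rank-one term. First I would fix an arbitrary rank-one decomposition of $\tB$, say $\tB=\sum_i \lambda_i\, \mx^{(i)}_1\otimes \cdots \otimes \mx^{(i)}_K$ with $\vnormSize{}{\mx^{(i)}_k}=1$ for all $i,k$, as guaranteed by the definition of $\nnormSize{}{\tB}$. Expanding the inner product by multilinearity gives $\langle \tA,\tB\rangle = \sum_i \lambda_i\, \langle \tA,\ \mx^{(i)}_1\otimes \cdots \otimes \mx^{(i)}_K\rangle$, and the triangle inequality yields
\[
|\langle \tA,\tB\rangle| \leq \sum_i |\lambda_i|\,\big|\langle \tA,\ \mx^{(i)}_1\otimes \cdots \otimes \mx^{(i)}_K\rangle\big|.
\]

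The crucial step is to bound each term $\big|\langle \tA,\ \mx^{(i)}_1\otimes \cdots \otimes \mx^{(i)}_K\rangle\big|$ by $\snormSize{}{\tA}$. Since each $\mx^{(i)}_k$ is a unit vector, the definition of the spectral norm directly bounds the \emph{unsigned} inner product; to handle the absolute value, I would note that replacing $\mx^{(i)}_1$ by $-\mx^{(i)}_1$ (still a unit vector) flips the sign of the inner product, so the supremum in the definition of $\snormSize{}{\tA}$ in fact dominates $\big|\langle \tA,\ \mx_1\otimes \cdots \otimes \mx_K\rangle\big|$ over all unit-norm $K$-tuples. Substituting this bound gives $|\langle \tA,\tB\rangle| \leq \snormSize{}{\tA}\sum_i |\lambda_i|$.

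Finally, I would take the infimum of the right-hand side over all admissible rank-one decompositions of $\tB$. Because the left-hand side does not depend on the chosen decomposition, and each decomposition satisfies $|\langle \tA,\tB\rangle|\leq \snormSize{}{\tA}\sum_i|\lambda_i|$, passing to the infimum over $\sum_i|\lambda_i|$ yields $|\langle \tA,\tB\rangle| \leq \snormSize{}{\tA}\,\nnormSize{}{\tB}$, which is the claim.

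This argument is essentially a Hölder/duality inequality, and there is no serious analytical obstacle. The only point requiring a modicum of care is the sign handling in the spectral-norm bound, since the spectral norm in the excerpt is defined as a supremum of a signed inner product rather than its absolute value; the sign-flipping observation resolves this cleanly. A minor secondary point is ensuring the infimum step is legitimate, which follows immediately because the bound holds uniformly for every decomposition.
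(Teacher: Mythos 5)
Your proof is correct and follows essentially the same route as the paper's: expand $\tB$ in a rank-one decomposition, apply the triangle inequality, and bound each rank-one term by $\snormSize{}{\tA}$. The only (immaterial) difference is that the paper invokes the existence of a decomposition attaining $\nnormSize{}{\tB}$ (citing Proposition~3.1 of Friedland and Lim), whereas you avoid needing attainment by passing to the infimum at the end --- both are fine, and your sign-flipping remark correctly handles the fact that the spectral norm is defined via a signed supremum.
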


\begin{proof}
By~\citet[Proposition 3.1]{friedland2018nuclear}, there exists a nuclear norm decomposition of $\tB$, such that
\[
\tB=\sum_{r} \lambda_r \ma^{(1)}_r\otimes \cdots\otimes \ma^{(K)}_r,\quad \ma_r^{(k)}\in\mathbf{S}^{d_k-1}(\mathbb{R}),\quad \text{for all }k\in[K],
\]
and $\nnormSize{}{\tB}=\sum_{r}|\lambda_r|$. Henceforth we have
\begin{align*}
|\langle \tA,\tB\rangle|&=| \langle \tA, \sum_{r} \lambda_r \ma^{(1)}_r\otimes \cdots\otimes \ma^{(K)}_r \rangle|\leq \sum_r |\lambda_r| |\langle \tA, \ma^{(1)}_r \otimes \cdots\otimes \ma^{(K)}_r \rangle|\\
&\leq \sum_{r}|\lambda_r| \snormSize{}{\tA}= \snormSize{}{\tA}\nnormSize{}{\tB},
\end{align*}
which completes the proof.
\end{proof}

The following lemma provides the bound on the spectral norm of random tensors. The result was firstly presented in~\citet{nguyen2015tensor}, and we adopt the version from~\citet{tomioka2014spectral}.
\begin{lem}[Spectral norm of random tensors~\citep{tomioka2014spectral}]\label{lem:tensor}
Suppose that $\tS=\entry{s_{\omega}}\in\mathbb{R}^{d_1\times \cdots \times d_K}$ is an order-$K$ tensor whose entries are independent random variables that satisfy
\[
\mathbb{E}(s_{\omega})=0,\quad \text{and} \quad\mathbb{E}(e^{ts_{\omega}})\leq e^{t^2L^2/2}.
\]
Then the spectral norm $\snormSize{}{\tS}$ satisfies that,
\[
\snormSize{}{\tS}\leq \sqrt{{8L^2} \log (12K) \sum_k d_k +\log (2/\delta)},
\]
with probability at least $1-\delta$.
\end{lem}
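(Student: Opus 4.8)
The plan is to bound $\snormSize{}{\tS}$ by a standard covering ($\epsilon$-net) argument over the product of unit spheres, combined with a sub-Gaussian tail bound applied at each fixed net point. Recall that $\snormSize{}{\tS}=\sup\{\langle \tS,\mx_1\otimes\cdots\otimes\mx_K\rangle\colon \vnormSize{}{\mx_k}=1,\ k\in[K]\}$. The starting observation is that, for any fixed unit vectors $\mx_1,\ldots,\mx_K$, the coefficient tensor $\mx_1\otimes\cdots\otimes\mx_K$ has unit Frobenius norm, since $\FnormSize{}{\mx_1\otimes\cdots\otimes\mx_K}=\prod_k\vnormSize{}{\mx_k}=1$. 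Writing $\omega=(i_1,\ldots,i_K)$, the form $\langle \tS,\mx_1\otimes\cdots\otimes\mx_K\rangle=\sum_\omega s_\omega\prod_k (\mx_k)_{i_k}$ is therefore a linear combination of the independent, mean-zero, sub-Gaussian entries $s_\omega$ with a coefficient vector of unit Euclidean norm. By the assumed moment generating function bound and independence, this form is itself sub-Gaussian with variance proxy $L^2$, so $\mathbb{P}(|\langle \tS,\mx_1\otimes\cdots\otimes\mx_K\rangle|>t)\leq 2\exp(-t^2/(2L^2))$ for every $t>0$.

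First I would discretize each sphere $\mathbf{S}^{d_k-1}$ by an $\epsilon$-net $N_k$ with $|N_k|\leq(1+2/\epsilon)^{d_k}$, so that the product net $N=N_1\times\cdots\times N_K$ satisfies $|N|\leq(1+2/\epsilon)^{\sum_k d_k}$. A union bound of the pointwise tail over all points of $N$ gives that, with probability at least $1-2(1+2/\epsilon)^{\sum_k d_k}\exp(-t^2/(2L^2))$, one has $\max_{N}|\langle\tS,\cdot\rangle|\leq t$. Setting the failure probability equal to $\delta$ yields $t^2=2L^2\bigl(\log(2/\delta)+(\textstyle\sum_k d_k)\log(1+2/\epsilon)\bigr)$.

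Next I would pass from the net back to the supremum over the spheres. Given arbitrary unit vectors $\mx_k$, pick net points $\tilde{\mx}_k\in N_k$ with $\vnormSize{}{\mx_k-\tilde{\mx}_k}\leq\epsilon$ and telescope:
\[
\langle \tS,\mx_1\otimes\cdots\otimes\mx_K\rangle-\langle \tS,\tilde{\mx}_1\otimes\cdots\otimes\tilde{\mx}_K\rangle=\sum_{k=1}^K\langle \tS,\tilde{\mx}_1\otimes\cdots\otimes\tilde{\mx}_{k-1}\otimes(\mx_k-\tilde{\mx}_k)\otimes\mx_{k+1}\otimes\cdots\otimes\mx_K\rangle.
\]
Each summand is bounded by $\epsilon\snormSize{}{\tS}$ (normalizing the factor $\mx_k-\tilde{\mx}_k$ and invoking the definition of the spectral norm), so after taking the supremum over $\mx_1,\ldots,\mx_K$ we get $\snormSize{}{\tS}\leq\max_N|\langle\tS,\cdot\rangle|+K\epsilon\snormSize{}{\tS}$, i.e.\ $\snormSize{}{\tS}\leq(1-K\epsilon)^{-1}\max_N|\langle\tS,\cdot\rangle|$ once $K\epsilon<1$. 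Choosing $\epsilon$ of order $1/K$ (e.g.\ $\epsilon=1/(2K)$) keeps the approximation factor $(1-K\epsilon)^{-1}$ bounded by a constant while making $\log(1+2/\epsilon)=O(\log K)$; combining this with the displayed expression for $t^2$ and absorbing universal constants reproduces a bound of the stated form $\snormSize{}{\tS}\leq\sqrt{8L^2\log(12K)\sum_k d_k+\log(2/\delta)}$.

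The main obstacle is the net-to-sphere approximation step for multilinear forms: unlike the matrix case, one must telescope over all $K$ modes and control the accumulated error $K\epsilon\snormSize{}{\tS}$, which forces the net resolution to scale like $\epsilon\sim1/K$. This coupling between the approximation factor and the covering number is exactly what produces the $\log(12K)$ factor, and tracking it carefully—rather than the routine sub-Gaussian concentration at a fixed point—is where the real care lies.
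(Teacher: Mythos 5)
Your argument is correct. Note that the paper does not actually prove this lemma; it is imported verbatim from \citet{tomioka2014spectral} (with the remark that it first appeared in \citet{nguyen2015tensor}), and the $\epsilon$-net argument you give --- pointwise sub-Gaussian concentration of the multilinear form $\langle \tS,\mx_1\otimes\cdots\otimes\mx_K\rangle$ with unit Frobenius-norm coefficient tensor, a union bound over product nets of the spheres, and the telescoping net-to-sphere comparison forcing $\epsilon\sim 1/K$ (whence the $\log(12K)$ factor) --- is precisely the standard proof underlying the cited result, with your constants matching the stated bound after absorbing universal factors.
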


\begin{lem} \label{lem:noisytensor}
Suppose that $\tS=\entry{s_{\omega}}\in\mathbb{R}^{d_1\times \cdots \times d_K}$ is an order-$K$ tensor whose entries are independent random variables that satisfy
\[
\mathbb{E}(s_{\omega})=0,\quad \text{and}\quad |s_{\omega}|\leq U.
\]
Then we have
\[
\mathbb{P}\left(\snormSize{}{\tS}\geq C_2 U\sqrt{\sum_k d_k} \right)\leq \exp\left(-C_1  \log K \sum_k d_k\right)
\]
where $C_1>0$ is an absolute constant, and $C_2>0$ is a constant that depends only on $K$.
\end{lem}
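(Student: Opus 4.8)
The plan is to deduce this bounded-entry bound from the sub-Gaussian version already established in Lemma~\ref{lem:tensor}. The only gap between the two statements lies in the form of the tail hypothesis: Lemma~\ref{lem:tensor} requires the moment generating function bound $\mathbb{E}(e^{t s_\omega}) \leq e^{t^2 L^2/2}$, whereas here we are given only boundedness $|s_\omega|\leq U$ together with $\mathbb{E}(s_\omega)=0$. First I would close this gap by invoking Hoeffding's lemma: a mean-zero random variable supported on $[-U,U]$ satisfies $\mathbb{E}(e^{t s_\omega}) \leq e^{t^2(2U)^2/8} = e^{t^2 U^2/2}$ for every $t\in\mathbb{R}$. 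Hence each entry $s_\omega$ is sub-Gaussian with proxy parameter $L=U$, and since the entries are independent, the tensor $\tS$ satisfies the hypotheses of Lemma~\ref{lem:tensor} with $L=U$.

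Applying Lemma~\ref{lem:tensor} with $L=U$ then gives, with probability at least $1-\delta$,
\[
\snormSize{}{\tS}\leq \sqrt{8 U^2 \log(12K)\sum_k d_k + \log(2/\delta)}.
\]
The remaining task is a matter of choosing $\delta$ so as to match the two ingredients of the claim: a fixed threshold of the form $C_2 U\sqrt{\sum_k d_k}$ and an exponentially small failure probability. I would pick $\delta$ so that $\log(2/\delta)$ is of the same order as the first term under the square root, e.g.\ $\log(2/\delta)=8U^2\log(12K)\sum_k d_k$, which makes the right-hand side equal to $4U\sqrt{\log(12K)\sum_k d_k}$; one then reads off $C_2 = 4\sqrt{\log(12K)}$, a constant depending only on $K$. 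Substituting this choice of $\delta$ yields a failure probability of the form $2\exp(-8U^2\log(12K)\sum_k d_k)$, which can be absorbed into $\exp(-C_1\log K\sum_k d_k)$ for an absolute constant $C_1$, using $\log(12K)\geq \log K$.

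The individual steps are routine, so I expect the only delicate point to be the constant bookkeeping in the last step: ensuring that $C_2$ genuinely depends on $K$ alone (it enters solely through the $\log(12K)$ factor inherited from Lemma~\ref{lem:tensor}) while the exponent is written in the advertised form $\log K\sum_k d_k$ with an absolute $C_1$. Passing from the $U^2\log(12K)$ appearing in the exponent to $\log K$ requires treating the proxy parameter $U$ and the order $K$ as dimension-independent constants, as is the case throughout the paper (e.g.\ $U=U_\alpha$ or $U=C_{\alpha,\beta,\Delta}$, which are positive and bounded away from zero under Assumption~\ref{ass:link}); the essential, dimension-sensitive feature, namely the $\sum_k d_k$ scaling inside the exponent, is produced directly by the $\sum_k d_k$ term in Lemma~\ref{lem:tensor}.
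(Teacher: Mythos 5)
Your route is the same as the paper's: verify the sub-Gaussian moment condition for bounded, mean-zero entries and then invoke Lemma~\ref{lem:tensor}. The paper does this by first normalizing, observing that $U^{-1}s_\omega$ is mean-zero and supported on $[-1,1]$, hence sub-Gaussian with parameter $1$, and applying Lemma~\ref{lem:tensor} to $U^{-1}\tS$; you instead apply Hoeffding's lemma directly to $s_\omega$ and use the lemma with proxy parameter $L=U$. These are equivalent up to rescaling, and your Hoeffding computation is correct.

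There is, however, one genuine slip in your final bookkeeping. You choose $\log(2/\delta)=8U^2\log(12K)\sum_k d_k$, which yields a failure probability $2\exp\bigl(-8U^2\log(12K)\sum_k d_k\bigr)$. This exponent carries a factor of $U^2$, so it cannot be absorbed into $\exp\bigl(-C_1\log K\sum_k d_k\bigr)$ with an \emph{absolute} constant $C_1$ unless $U$ is bounded below by an absolute constant --- and the lemma, as stated, makes no such assumption (indeed the event $\{\snormSize{}{\tS}\geq C_2U\sqrt{\sum_k d_k}\}$ is invariant under simultaneous rescaling of $\tS$ and $U$, so its probability cannot depend on $U$ at all). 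Your closing remark that $U$ can be ``treated as a dimension-independent constant'' papers over this: it is true in the paper's applications, but it is not what the lemma asserts. The fix is exactly the paper's normalization: work with $U^{-1}\tS$ (sub-Gaussian with parameter $1$), choose $\delta=\exp\bigl(-C_1\log K\sum_k d_k\bigr)$ with $C_1$ absolute, and absorb the resulting $\log(2/\delta)=C_1\log K\sum_k d_k+\log 2$ term into the $K$-dependent constant $C_2$ under the square root, so that $U$ appears only as the final multiplicative factor and never enters the probability.
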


\begin{proof}  Note that the random variable $U^{-1}s_{\omega}$ is zero-mean and supported on $[-1,1]$. Therefore, $U^{-1}s_{\omega}$ is sub-Gaussian with parameter ${1-(-1)\over 2}=1$, i.e.
\[
\mathbb{E}(U^{-1}s_{\omega})=0,\quad \text{and}\quad \mathbb{E}(e^{tU^{-1}s_{\omega}})\leq e^{t^2/2}.
\]
It follows from Lemma~\ref{lem:tensor} that, with probability at least $1-\delta$,
\[
\snormSize{}{U^{-1}\tS}\leq \sqrt{\left(c_0\log K+c_1\right) \sum_k d_k +\log (2/\delta)},
\]
where $c_0, c_1>0$ are two absolute constants. Taking $\delta=\exp (-C_1\log K \sum_k d_k)$ yields the final claim, where $C_2=c_0\log K+c_1+1>0$ is another constant.
\end{proof}

\begin{lem}\label{lem:KLentry} Let $X,\; Y$ be two discrete random variables taking values on $L$ possible categories, with point mass probabilities $\{p_\ell\}_{\ell\in[L]}$ and $\{q_\ell\}_{\ell\in[L]}$, respectively.  Suppose $p_\ell$, $q_\ell>0$ for all $\ell\in[L]$. Then, the Kullback-Leibler (KL) divergence satisfies that
\[
\mathrm{KL}(X||Y)\stackrel{\text{def}}{=}-\sum_{\ell\in[L]}\mathbb{P}_X(\ell)\log\left\{{\mathbb{P}_Y(\ell)\over \mathbb{P}_X(\ell)}\right\} \leq \sum_{\ell \in [L]}{(p_\ell-q_\ell)^2 \over q_\ell}.
\]
\end{lem}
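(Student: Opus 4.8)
The plan is to bound the KL divergence by the chi-square divergence using a single elementary inequality together with the fact that both $X$ and $Y$ are genuine probability distributions. Writing $p_\ell=\mathbb{P}_X(\ell)$ and $q_\ell=\mathbb{P}_Y(\ell)$, I would first recast the summand by factoring $q_\ell$ out of the ratio, i.e.\ $p_\ell/q_\ell = 1 + (p_\ell-q_\ell)/q_\ell$, so that
\[
\mathrm{KL}(X\|Y) = \sum_{\ell\in[L]} p_\ell \log\!\left(1 + \frac{p_\ell-q_\ell}{q_\ell}\right).
\]
This rewriting is legitimate precisely because the hypothesis $p_\ell,q_\ell>0$ guarantees $(p_\ell-q_\ell)/q_\ell > -1$ for every $\ell$, keeping every logarithm well-defined.

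Next I would apply the elementary inequality $\log(1+t)\le t$, valid for all $t>-1$, to each term with $t=(p_\ell-q_\ell)/q_\ell$, which yields
\[
\mathrm{KL}(X\|Y) \leq \sum_{\ell\in[L]} p_\ell\,\frac{p_\ell-q_\ell}{q_\ell}.
\]
The remaining step is a purely algebraic rearrangement. Substituting $p_\ell = (p_\ell-q_\ell) + q_\ell$ into the numerator gives $p_\ell(p_\ell-q_\ell) = (p_\ell-q_\ell)^2 + q_\ell(p_\ell-q_\ell)$, hence
\[
\sum_{\ell\in[L]} \frac{p_\ell(p_\ell-q_\ell)}{q_\ell} = \sum_{\ell\in[L]} \frac{(p_\ell-q_\ell)^2}{q_\ell} + \sum_{\ell\in[L]} (p_\ell-q_\ell).
\]
Because $\sum_\ell p_\ell = \sum_\ell q_\ell = 1$, the final sum vanishes, leaving exactly $\sum_\ell (p_\ell-q_\ell)^2/q_\ell$ and completing the claimed bound.

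There is no substantive obstacle in this argument; it is self-contained and elementary. The only points meriting care are (i) verifying that the strict positivity assumption is exactly what both licenses the logarithm and places every $t=(p_\ell-q_\ell)/q_\ell$ inside the range $t>-1$ on which $\log(1+t)\le t$ holds, and (ii) remembering to invoke that both point-mass sequences sum to one, since that normalization is precisely what cancels the stray linear term $\sum_\ell(p_\ell-q_\ell)$ and produces the clean chi-square expression on the right-hand side.
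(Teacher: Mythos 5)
Your argument is correct and is essentially identical to the paper's proof: the inequality $\log(1+t)\le t$ is just $\log x\le x-1$ in disguise, and the subsequent algebraic split that isolates $\sum_\ell(p_\ell-q_\ell)=0$ is the same rearrangement the paper uses. No gaps; the handling of positivity and normalization is exactly as needed.
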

\begin{proof} Using the fact $\log x\leq x-1$ for $x>0$, we have that
\begin{align}\label{eq:KL}
\text{KL}(X||Y)&=\sum_{\ell\in[L]}p_\ell\log{p_\ell\over q_\ell}\\
&\leq \sum_{\ell\in[L]} {p_\ell\over q_\ell}(p_\ell-q_\ell)\\
&=\sum_{\ell\in[L]} \left({p_\ell\over q_\ell }- 1\right)(p_\ell-q_\ell)+ \sum_{\ell\in[L]} (p_\ell-q_\ell).
\end{align}
Note that $\sum_{\ell\in[L]}(p_\ell-q_\ell)=0$. Therefore,
\[
\text{KL}(X||Y)\leq \sum_{\ell\in[L]}\left( {p_\ell\over q_\ell}-1\right)\left(p_\ell-q_\ell\right)=\sum_{\ell\in[L]}{(p_\ell-q_\ell)^2\over q_\ell}.
\]
\end{proof}

\begin{lem}[KL divergence and F-norm]~\label{lem:KL}
Let $\tY\in[L]^{d_1\times \cdots \times d_K}$ be an ordinal tensor generated from the model~\eqref{eq:model} with the link function $f$ and parameter tensor $\Theta$. Let $\mathbb{P}_{\Theta}$ denote the joint categorical distribution of $\tY|\Theta$ induced by the parameter tensor $\Theta$, where $\mnormSize{}{\Theta}\leq \alpha$. Define
\begin{equation}\label{eq:ass}
A_\alpha=\min_{\ell\in[L], |\theta|\leq \alpha}\left[f(b_\ell-\theta)-f(b_{\ell-1}-\theta)\right].
\end{equation}
Then, for any two tensors $\Theta,\; \Theta^*$ in the parameter spaces, we have
\[
\mathrm{KL}(\mathbb{P}_{\Theta}|| \mathbb{P}_{\Theta^*})\leq {2(2L-3)\over A_\alpha} \dot{f}^2(0)\FnormSize{}{\Theta-\Theta^*}^2.
\]
\end{lem}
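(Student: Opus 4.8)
The plan is to exploit the conditional independence of the entries to reduce the claim to a one-dimensional estimate, then control that estimate by a careful telescoping argument. First I would note that, under the model~\eqref{eq:model}, the joint law $\mathbb{P}_\Theta$ factorizes as a product of the per-entry categorical laws $P_{\theta_\omega}$ with point masses $p_\ell(\theta_\omega)=f(b_\ell-\theta_\omega)-f(b_{\ell-1}-\theta_\omega)$, so that $\mathrm{KL}(\mathbb{P}_\Theta\|\mathbb{P}_{\Theta^*})=\sum_\omega \mathrm{KL}(P_{\theta_\omega}\|P_{\theta^*_\omega})$. Applying Lemma~\ref{lem:KLentry} to each summand gives the per-entry bound $\mathrm{KL}(P_{\theta_\omega}\|P_{\theta^*_\omega})\le \sum_{\ell\in[L]}(p_\ell(\theta_\omega)-p_\ell(\theta^*_\omega))^2/p_\ell(\theta^*_\omega)$. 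Since $\Theta^*$ lies in the parameter space, $|\theta^*_\omega|\le\alpha$, and the definition~\eqref{eq:ass} of $A_\alpha$ yields the uniform lower bound $p_\ell(\theta^*_\omega)\ge A_\alpha$ on every denominator, reducing the task to bounding the numerator $\sum_\ell (p_\ell(\theta_\omega)-p_\ell(\theta^*_\omega))^2$ by a multiple of $(\theta_\omega-\theta^*_\omega)^2$.

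For the numerator I would introduce the telescoping differences $\Delta_\ell=f(b_\ell-\theta_\omega)-f(b_\ell-\theta^*_\omega)$ for $\ell=0,1,\ldots,L$, using the conventions $f(-\infty)=0$ and $f(\infty)=1$ so that the boundary terms vanish: $\Delta_0=\Delta_L=0$. Then $p_\ell(\theta_\omega)-p_\ell(\theta^*_\omega)=\Delta_\ell-\Delta_{\ell-1}$. By the mean value theorem each difference satisfies $|\Delta_\ell|\le \dot f(0)\,|\theta_\omega-\theta^*_\omega|$, where the key inequality $\dot f(\xi)\le \dot f(0)$ for all $\xi$ follows from Assumption~\ref{ass:link}(ii): a function that is log-concave and symmetric about the origin attains its maximum at the origin.

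Finally, the sharp constant comes from counting the $L$ telescoping terms with the boundary conditions in mind. Writing $D=\dot f(0)|\theta_\omega-\theta^*_\omega|$, the two extreme terms collapse to $(\Delta_1-\Delta_0)^2=\Delta_1^2\le D^2$ and $(\Delta_L-\Delta_{L-1})^2=\Delta_{L-1}^2\le D^2$, while each of the $L-2$ interior terms obeys $(\Delta_\ell-\Delta_{\ell-1})^2\le(|\Delta_\ell|+|\Delta_{\ell-1}|)^2\le 4D^2$; summing gives $\sum_\ell(\Delta_\ell-\Delta_{\ell-1})^2\le 2D^2+4(L-2)D^2=(4L-6)\dot f^2(0)(\theta_\omega-\theta^*_\omega)^2$. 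Dividing by $A_\alpha$ and summing over $\omega$ then yields exactly $\tfrac{2(2L-3)}{A_\alpha}\dot f^2(0)\FnormSize{}{\Theta-\Theta^*}^2$. The main obstacle is not the factorization but obtaining the tight constant $2(2L-3)=4L-6$ rather than the crude $4(L-1)$: this hinges on recognizing that the Dirichlet-type boundary conditions $\Delta_0=\Delta_L=0$ halve the contribution of the first and last terms, and on the global maximality of $\dot f$ at $0$, which is precisely what the symmetry-plus-log-concavity hypothesis buys.
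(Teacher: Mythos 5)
Your proposal is correct and follows essentially the same route as the paper's proof: factorize the KL over entries, invoke Lemma~\ref{lem:KLentry}, lower-bound the denominators by $A_\alpha$, and apply the mean value theorem together with the fact that the symmetric, log-concave density $\dot f$ peaks at $0$, with the boundary conventions $f(-\infty)=0$, $f(\infty)=1$ yielding the count $2+4(L-2)=4L-6$. Your telescoping notation $\Delta_\ell$ is only a cosmetic repackaging of the paper's Taylor-expansion bookkeeping.
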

\begin{proof} Suppose that the distribution over the ordinal tensor $\tY=\entry{y_\omega}$ is induced by $\Theta=\entry{\theta_\omega}$. Then, based on the generative model~\eqref{eq:model},
\[
\mathbb{P}(y_\omega=\ell | \theta_\omega)=f(b_{\ell}-\theta_\omega)-f(b_{\ell-1}-\theta_\omega),
\]
for all $\ell\in[L]$ and $\omega\in[d_1]\times \cdots \times [d_K]$. For notational convenience, we suppress the subscribe in $\theta_\omega$ and simply write $\theta$ (and respectively, $\theta^*$). Based on Lemma~\ref{lem:KLentry} and Taylor expansion,
\begin{align}
\text{KL}(\theta|| \theta^*) & \leq \sum_{\ell\in[L]} {\left[f(b_{\ell}-\theta)-f(b_{\ell-1}-\theta)-f(b_{\ell}-\theta^*)+f(b_{\ell-1}-\theta^*)\right]^2\over f(b_{\ell}-\theta^*)-f(b_{\ell-1}-\theta^*)}\\
 &\leq \sum_{\ell=2}^{L-1} {\left[\dot{f}(b_\ell-\eta_\ell)-\dot{f}(b_{\ell-1}-\eta_{\ell-1})\right]^2 \over f(b_\ell-\theta^*)-f(b_{\ell-1}-\theta^*)} \left(\theta-\theta^*\right)^2+{\dot{f}^2(b_1-\eta_1) \over f(b_1-\theta^*)} (\theta-\theta^*)^2\\
 &\quad \quad\quad \quad  +{\dot{f}^2(b_{L-1}-\eta_{L-1})\over 1-f(b_{L-1}-\theta^*)}(\theta-\theta^*)^2,
\end{align}
where $\eta_\ell$ and $\eta_{\ell-1}$ fall between $\theta$ and $\theta^*$. Therefore,
\begin{equation}\label{eq:entrywise}
\text{KL}(\theta|| \theta^*) \leq \left({4(L-2)\over A_\alpha}+ {2\over A_\alpha}\right)\dot{f}^2(0)(\theta-\theta^*)^2={2(2L-3)\over A_\alpha} \dot{f}^2(0)(\theta-\theta^*)^2,
\end{equation}
where we have used Taylor expansion, the bound~\eqref{eq:ass}, and the fact that $\dot{f}(\cdot)$ peaks at zero for an unimodal and symmetric function. Now summing~\eqref{eq:entrywise} over the index set $\omega\in[d_1]\times \cdots \times [d_K]$ gives
\[
\text{KL}(\mathbb{P}_{\Theta}|| \mathbb{P}_{\Theta^*}) =\sum_{\omega\in[d_1]\times \cdots \times[d_K]} \text{KL}(\theta_\omega || \theta^*_\omega) \leq {2(2L-3)\over A_\alpha} \dot{f}^2(0)\FnormSize{}{\Theta-\Theta^*}^2.
\]
\end{proof}

\begin{rmk} In particular, let $\mathbb{P}_{\bf{0}}$ denote the distribution of $\tY|\bf{0}$ induced by the zero parameter tensor. Then we have
\[
\text{KL}(\mathbb{P}_{\Theta}||\mathbb{P}_{\bf{0}})\leq {2(2L-3)\over A_\alpha}  \dot{f}^2(0)\FnormSize{}{\Theta}^2.
\]
\end{rmk}

\begin{lem}\label{lem:construction}
Assume the same setup as in Theorem~\ref{thm:minimax}. Without loss of generality, suppose $d_1=\max_kd_k$. Define $R=\max_k r_k$ and $d_{\text{total}}=\prod_{k\in[K]} d_k$. For any constant $0\leq \gamma \leq 1$, there exist a finite set of tensors $\tX=\{\Theta_i: i=1,\ldots\}\subset \tP$ satisfying the following four properties:
\begin{enumerate}[label=\textup{(\roman*)}]
\item $\text{Card}(\tX)\geq 2^{Rd_1/8}+1$, where $\text{Card}$ denotes the cardinality;
\item $\tX$ contains the zero tensor $\mathbf{0}\in\mathbb{R}^{d_1\times \cdots\times d_K}$;
\item $\mnormSize{}{\Theta}\leq \gamma \min\left\{ \alpha ,\ C^{-1/2}\sqrt{Rd_1\over d_{\text{total}}} \right\} $ for any element $\Theta\in\tX$;
\item $\FnormSize{}{\Theta_i-\Theta_j}\geq {\gamma\over 4} \min\left\{ \alpha\sqrt{d_{\text{total}}},\ C^{-1/2}\sqrt{Rd_1}\right\}$ for any two distinct elements $\Theta_i,\; \Theta_j\in\tX$,
\end{enumerate}
Here $C=C(\alpha,L,f,\mb)={(4L-6)\dot{f}^2(0)\over A_\alpha }>0$ is a constant independent of the tensor dimension and rank.
\end{lem}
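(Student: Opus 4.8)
The plan is to establish existence of the packing set $\tX$ by combining the Varshamov--Gilbert bound with a \emph{dense, rank-preserving inflation} of a binary seed matrix, and then to pick a single scale $\delta$ that enforces (iii) and (iv) at once. The cardinality (i) and the inclusion of $\mathbf 0$ (ii) will come essentially for free from the combinatorial seed; the real work is in the geometry of the inflation.

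First I would invoke the Varshamov--Gilbert lemma: since $Rd_1 = r_{\max}d_{\max}\ge d_{\max}\ge 8$ by the hypothesis $d_{\max}\ge 8$, there is a set $\{\omega^{(0)}=\mathbf 0,\omega^{(1)},\dots,\omega^{(M)}\}\subset\{0,1\}^{Rd_1}$ with $M\ge 2^{Rd_1/8}$ whose elements are pairwise at Hamming distance at least $Rd_1/8$. Reshaping each $\omega^{(i)}$ into a binary matrix $B^{(i)}\in\{0,1\}^{d_1\times R}$ gives the combinatorial core of the construction. The zero seed will produce the zero tensor required by (ii), and $M+1\ge 2^{Rd_1/8}+1$ yields (i) immediately (with $d_{\max}\ge 8$ being exactly what makes Varshamov--Gilbert applicable).

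Next I would inflate each seed into a tensor $\Theta_i=\delta\,\mathcal T(B^{(i)})$, where $\mathcal T$ places $B$ into the plane spanned by the max-dimension mode and a maximal-rank mode and then \emph{replicates} its $R$ column-patterns across the remaining modes in a balanced way, so that each of the $R$ patterns is repeated about $d_{\text{total}}/(Rd_1)$ times. Two facts must be checked. (a) The Tucker rank of $\mathcal T(B)$ stays within $\mr$: each mode that $\mathcal T$ touches exposes at most $R$ distinct column-patterns, and the remaining modes are constant along their fibers. (b) $\mathcal T(B)$ is \emph{dense}, so a Hamming difference $h$ in $B$ inflates to roughly $h\,d_{\text{total}}/(Rd_1)$ differing tensor entries; with $h\ge Rd_1/8$ this gives $\FnormSize{}{\mathcal T(B^{(i)})-\mathcal T(B^{(j)})}^2\ge d_{\text{total}}/8$, i.e.\ the Frobenius separation scales like $\sqrt{d_{\text{total}}}$ rather than the sparse $\sqrt{Rd_1}$. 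Setting
\[
\delta=\gamma\min\Bigl\{\alpha,\ C^{-1/2}\sqrt{Rd_1/d_{\text{total}}}\Bigr\}
\]
gives $\mnormSize{}{\Theta_i}=\delta$, which is exactly (iii); and then
\[
\FnormSize{}{\Theta_i-\Theta_j}\ \ge\ \tfrac{1}{\sqrt8}\,\gamma\min\Bigl\{\alpha\sqrt{d_{\text{total}}},\ C^{-1/2}\sqrt{Rd_1}\Bigr\}\ \ge\ \tfrac{\gamma}{4}\min\Bigl\{\alpha\sqrt{d_{\text{total}}},\ C^{-1/2}\sqrt{Rd_1}\Bigr\},
\]
which is (iv) since $1/\sqrt8\ge 1/4$. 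The second term in $\delta$ is chosen precisely so that the KL step of Theorem~\ref{thm:minimax} closes: $\FnormSize{}{\Theta_i}^2\le d_{\text{total}}\,\mnormSize{}{\Theta_i}^2\le \gamma^2C^{-1}Rd_1$, so $C\FnormSize{}{\Theta_i}^2\le\gamma^2 Rd_1$.

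The main obstacle is the tension in the inflation step between \emph{density} and the \emph{per-mode rank constraint}. A sparse embedding of $B$ (say a single slice) keeps all ranks trivially below $\mr$ but yields separation only of order $\sqrt{Rd_1}\,\delta$, which fails (iv) as soon as $K\ge 3$; filling the tensor densely is therefore mandatory, but then the replication risks pushing the rank above $r_k$ in the modes used to spread $B$. The delicate point is designing $\mathcal T$ so that every mode still exposes at most $r_k$ distinct patterns of $B$ while no entry is left zero, which is why one pairs the block with the max-dimension and a maximal-rank mode and replicates along the rest; when the maximal rank and maximal dimension are carried by different modes one must assign the factors carefully (and, in the most unbalanced rank profiles, replace the binary seed by a finer packing of the relevant factor space so that the required $2^{Rd_1/8}$ separated points still exist). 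The remaining constant bookkeeping ($1/\sqrt8$ versus $1/4$, the balancedness of the replication counts, and the $d_{\max}\ge 8$ threshold) is routine once $\mathcal T$ is fixed.
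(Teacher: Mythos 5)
Your construction is essentially identical to the paper's: a Varshamov--Gilbert packing of binary $d_1\times R$ seeds, replicated across the remaining modes so the tensor is filled densely, then scaled by $\delta=\gamma\min\{\alpha,\ C^{-1/2}\sqrt{Rd_1/d_{\text{total}}}\}$ so that (iii) holds by construction and the replication converts the Hamming separation $Rd_1/8$ into the Frobenius separation of (iv). The paper realizes your map $\mathcal{T}$ concretely as $\Theta=(\mM|\cdots|\mM|\mO)\otimes \mathbf{1}_{d_3}\otimes\cdots\otimes\mathbf{1}_{d_K}$, with the balancedness bookkeeping absorbed into $\lfloor d_2/R\rfloor\geq d_2/(2R)$ (whence its constant $\gamma/4$ versus your $\gamma/\sqrt{8}$), and it bounds every mode's rank by $R=\max_k r_k$ rather than worrying, as you do, about which mode carries the maximal rank.
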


\begin{proof}
Given a constant $0\leq \gamma \leq 1$, we define a set of matrices:
\[
\tC=\left\{\mM=(m_{ij})\in\mathbb{R}^{d_1\times R}: a_{ij}\in \left\{ 0,\gamma \min\left\{ \alpha , C^{-1/2}\sqrt{Rd_1\over d_{\text{total}}} \right\}\right\} ,\  \forall (i,j)\in[d_1]\times[R]\right\}.
\]
We then consider the associated set of block tensors:
\begin{align}
\tB=\tB(\tC)=\big\{\Theta\in\mathbb{R}^{d_1\times \cdots \times d_K}\colon& \Theta=\mA\otimes \mathbf{1}_{d_3}\otimes \cdots \otimes \mathbf{1}_{d_K}, \\
& \text{where}\ \mA=(\mM|\cdots|\mM|\mO) \in\mathbb{R}^{d_1\times d_2},\ \mM\in\tC\big\},
\end{align}
where $\mathbf{1}_d$ denotes a length-$d$ vector with all entries 1, $\mO$ denotes the $d_1\times (d_2-R\lfloor d_2/R \rfloor)$ zero matrix, and $\lfloor d_2/ R \rfloor$ is the integer part of $d_2/R$. In other words, the subtensor $\Theta(\mI, \mI,i_3, \ldots,i_K)\in\mathbb{R}^{d_1\times d_2}$ are the same for all fixed $(i_3,\ldots,i_K)\in[d_3]\times \cdots \times [d_K]$, and furthermore, each subtensor $\Theta(\mI,\mI, i_3,\ldots,i_K)$ itself is filled by copying the matrix $\mM\in\mathbb{R}^{d_1\times R}$ as many times as would fit.

By construction, any element of $\tB$, as well as the difference of any two elements of $\tB$, has Tucker rank at most $\max_k r_k\leq R$, and the entries of any tensor in $\tB$ take values in $[0,\alpha]$. Thus, $\tB\subset\tP$. By Lemma~\ref{lem:VGbound}, there exists a subset $\tX\subset \tB$ with cardinality $\text{Card}(\tX)\geq 2^{Rd_1/8}+1$ containing the zero $d_1\times \cdots \times d_K$ tensor, such that, for any two distinct elements $\Theta_i$ and $\Theta_j$ in $\tX$,
\[
\FnormSize{}{\Theta_i-\Theta_j}^2 \geq {Rd_1\over 8} \gamma^2\min\left\{ \alpha^2, {C^{-1}Rd_1 \over d_{\text{total}}}\right\} \Big\lfloor {d_2\over R}\Big \rfloor \prod_{k\geq 3}d_k\geq {\gamma^2\min\left\{ \alpha^2 d_{\text{total}}, C^{-1}Rd_1\right\}  \over 16}.
\]
In addition, each entry of $\Theta\in\tX$ is bounded by $\gamma \min\left\{ \alpha , C^{-1/2}\sqrt{Rd_1\over d_{\text{total}}}\right\} $. Therefore the Properties (i) to (iv) are satisfied.
\end{proof}

\begin{lem}[Varshamov-Gilbert bound]\label{lem:VGbound}
Let $\Omega=\{(w_1,\ldots,w_m)\colon w_i\in\{0,1\}\}$. Suppose $m>8$. Then there exists a subset $\{w^{(0)},\ldots,w^{(M)}\}$ of $\Omega$ such that $w^{(0)}=(0,\ldots,0)$ and
\[
\zeronormSize{}{w^{(j)}-w^{(k)}}\geq {m\over 8},\quad \text{for } \ 0\leq j<k\leq M,
\]
where $\zeronormSize{}{\cdot}$ denotes the Hamming distance, and $M\geq 2^{m/8}$.
\end{lem}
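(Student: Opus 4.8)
This is the classical Varshamov--Gilbert bound, and I would establish it by a greedy packing (covering) argument on the Boolean cube $\{0,1\}^m$. The plan is to build the set $\{w^{(j)}\}$ one point at a time, always preserving a pairwise Hamming distance of at least $m/8$, and then to lower bound the number of points packable in this way by a volumetric estimate. Define the Hamming ball volume $V=\sum_{j=0}^{t}\binom{m}{j}$, where $t=\lceil m/8\rceil-1$ is the largest integer strictly below $m/8$; by definition $V$ is exactly the number of points of $\{0,1\}^m$ lying at Hamming distance strictly less than $m/8$ from any fixed center.

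First I would run the greedy construction. Set $w^{(0)}=(0,\ldots,0)$, and having chosen $w^{(0)},\ldots,w^{(i)}$, select any remaining $w\in\{0,1\}^m$ with $\zeronormSize{}{w-w^{(j)}}\geq m/8$ for all $j\leq i$, stopping when no such point exists. Suppose the procedure terminates after choosing $N$ points. Termination means that every element of $\{0,1\}^m$ lies within Hamming distance strictly less than $m/8$ of at least one chosen point; equivalently, the $N$ Hamming balls of radius $t$ centered at the chosen points cover the whole cube. Since each such ball contains exactly $V$ points, comparing cardinalities gives $NV\geq 2^m$, i.e.\ $N\geq 2^m/V$. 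By construction the chosen points satisfy the required pairwise separation and include $\mathbf{0}$, so it remains only to verify that $N-1\geq 2^{m/8}$.

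The key quantitative step is to bound the ball volume $V$ from above. Writing $H(\lambda)=-\lambda\log_2\lambda-(1-\lambda)\log_2(1-\lambda)$ for the binary entropy, the standard tail estimate for binomial coefficients yields $V\leq 2^{mH(1/8)}$, valid since $1/8<1/2$. Hence
\[
N\geq 2^{m}/V\geq 2^{m(1-H(1/8))}.
\]
A direct evaluation gives $H(1/8)=\tfrac{3}{8}+\tfrac{7}{8}\log_2(8/7)<0.55$, so $1-H(1/8)>0.45>1/8$. Therefore $N\geq 2^{0.45m}$, which for $m>8$ comfortably exceeds $2^{m/8}+1$; setting $M=N-1$ then gives $M\geq 2^{m/8}$ and completes the argument.

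The main obstacle is purely the volumetric accounting: one must confirm that the gap $1-H(1/8)$ strictly exceeds the target exponent $1/8$, which is precisely what makes the greedy packing large enough. Everything else (the greedy construction and the covering inequality $NV\geq 2^m$) is routine. As an alternative one could argue probabilistically by drawing the remaining $M$ points i.i.d.\ uniformly on the cube, noting that each pairwise distance is $\mathrm{Binomial}(m,1/2)$ and applying Hoeffding's inequality $\mathbb{P}(\zeronormSize{}{w-w'}<m/8)\leq \exp(-9m/32)$, then expurgating the few violating pairs via a first-moment bound; this yields the same conclusion, but the counting argument above is cleaner and fully explicit.
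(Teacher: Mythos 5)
Your proposal is correct. Note that the paper does not actually prove this lemma: it is stated as the classical Varshamov--Gilbert bound and used as an off-the-shelf tool (the companion result, Lemma~\ref{lem:Tsybakov}, is explicitly cited from Tsybakov's book, where the Varshamov--Gilbert bound appears with exactly these constants as Lemma~2.9). So there is no in-paper argument to compare against, and what you have written is a valid self-contained substitute. Your route --- greedy packing plus the covering inequality $NV\ge 2^m$ plus the entropy bound $V\le 2^{mH(1/8)}$ --- differs from the textbook derivation, which obtains the specific constants $m/8$ and $2^{m/8}$ via a more delicate iterative pruning of the cube together with a Hoeffding/Chernoff bound on the binomial tail. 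Your version is looser (you get exponent $1-H(1/8)>0.45$ where only $1/8$ is needed) but simpler, and the slack is exactly what absorbs the $+1$ needed to reserve $w^{(0)}=\mathbf{0}$; the numerical checks ($t=\lceil m/8\rceil-1$ being the largest integer below $m/8$, $H(1/8)<0.55$, and $2^{0.45m}\ge 2^{m/8}+1$ for $m>8$) all hold. The only point worth flagging is the closing probabilistic aside: the expurgation step there is not fully specified (one must choose the initial number of random points and verify that removing one endpoint of each violating pair still leaves $2^{m/8}+1$ points), but since that is offered only as an alternative and not as the proof, it does not affect the correctness of the main argument.
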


\begin{lem}[Theorem 2.5 in~\citet{tsybakov2008introduction}]\label{lem:Tsybakov}
Assume that a set $\tX$ contains element $\Theta_0, \Theta_1, \ldots,\Theta_M$ ($M\geq 2$) such that
\begin{enumerate}[label=\textup{(\roman*)}]
\item $d(\Theta_j,\ \Theta_k)\geq 2s>0$, $\forall 0\leq j\leq k\leq M$;
\item $\mathbb{P}_0$ is absolutely continuous with respect to $\mathbb{P}_j$, $\forall j=1,\ldots,M$, and
\[
{1\over M}\sum_{j=1}^M \mathrm{KL}(\mathbb{P}_j||\mathbb{P}_0)\leq \alpha \log M
\]
where $d\colon \tX\times \tX\mapsto [0,+\infty]$ is a semi-distance function, $0<\alpha<{1/8}$ and $\mathbb{P}_j=\mathbb{P}_{\Theta_j}$, $j=0,1\ldots,M$.
\end{enumerate}
Then
\[
\inf_{\hat \Theta}\sup_{\Theta\in\tX} \mathbb{P}_{\Theta}(d(\hat \Theta, \Theta)\geq s)\geq {\sqrt{M}\over 1+\sqrt{M}}\left(1-2\alpha-\sqrt{2\alpha\over \log M} \right)>0.
\]
\end{lem}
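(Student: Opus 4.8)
The plan is to prove this as a standard minimax lower bound via reduction to a multiple hypothesis test, following the two-step ``Le Cam / Fano'' scheme. The statement is a restatement of Theorem~2.5 in \citet{tsybakov2008introduction}, so the argument is self-contained and does not rely on any earlier result in this paper; I sketch it for completeness.

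First I would reduce the estimation problem to an $(M{+}1)$-ary testing problem. Given an arbitrary estimator $\hat\Theta$, define the minimum-distance test $\psi=\arg\min_{0\le j\le M} d(\hat\Theta,\Theta_j)$. If $\psi=k\neq j$ while the truth is $\Theta_j$, then $d(\hat\Theta,\Theta_k)\le d(\hat\Theta,\Theta_j)$, and the triangle inequality for the semi-distance $d$ together with condition (i) gives $2s\le d(\Theta_j,\Theta_k)\le d(\hat\Theta,\Theta_j)+d(\hat\Theta,\Theta_k)\le 2\,d(\hat\Theta,\Theta_j)$. Hence $\{\psi\neq j\}\subseteq\{d(\hat\Theta,\Theta_j)\ge s\}$, so that $\mathbb{P}_j(d(\hat\Theta,\Theta_j)\ge s)\ge \mathbb{P}_j(\psi\neq j)$. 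Taking the supremum over $j$ and the infimum over $\hat\Theta$ yields
\[
\inf_{\hat\Theta}\sup_{\Theta\in\tX}\mathbb{P}_\Theta\big(d(\hat\Theta,\Theta)\ge s\big)\ \ge\ p_{e,M}:=\inf_{\psi}\max_{0\le j\le M}\mathbb{P}_j(\psi\neq j),
\]
where the infimum runs over all tests $\psi$ valued in $\{0,\ldots,M\}$. It therefore suffices to lower bound the minimax testing error $p_{e,M}$.

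The heart of the argument is the information-theoretic lower bound on $p_{e,M}$. I would bound the minimax error below by the Bayes (average) error under the uniform prior, $p_{e,M}\ge 1-\max_{\{A_j\}}\frac{1}{M+1}\sum_{j=0}^M\mathbb{P}_j(A_j)$, the maximum being over all disjoint decision regions $A_j=\{\psi=j\}$, and then control $\mathbb{P}_j(A_j)$ through the likelihood ratios $v_j=d\mathbb{P}_j/d\mathbb{P}_0$, which are well defined by the absolute-continuity hypothesis (ii). Truncating at a threshold $c\asymp\sqrt M$ gives $\mathbb{P}_j(A_j)\le c\,\mathbb{P}_0(A_j)+\mathbb{P}_j(\log v_j>\log c)$; summing over $j$ and using disjointness $\sum_j\mathbb{P}_0(A_j)\le1$ isolates the factor $\frac{\sqrt M}{1+\sqrt M}$, while the remaining probabilities are controlled \emph{on average} by condition (ii). Since $\mathbb{E}_{\mathbb{P}_j}[\log v_j]=\mathrm{KL}(\mathbb{P}_j\|\mathbb{P}_0)$ and $\frac{1}{M}\sum_j\mathrm{KL}(\mathbb{P}_j\|\mathbb{P}_0)\le\alpha\log M$, a Markov/Chebyshev deviation bound for the averaged log-likelihood ratio, with the truncation level balanced against $\alpha\log M$, produces exactly the correction terms $2\alpha$ and $\sqrt{2\alpha/\log M}$. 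Positivity of the bound then follows from $0<\alpha<1/8$ and $M\ge2$.

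I expect this last, quantitative step to be the main obstacle. The delicate point is that $\log v_j$ has the correct mean $\mathrm{KL}(\mathbb{P}_j\|\mathbb{P}_0)$ but is two-sided, so a naive one-sided Markov bound on its positive part is not directly controlled by the KL divergence alone; the averaging over the $M$ hypotheses together with the precise choice of truncation level is what lets condition (ii) close the estimate and yields the sharp constants. Recovering the exact factors $\tfrac{\sqrt M}{1+\sqrt M}$, $2\alpha$, and $\sqrt{2\alpha/\log M}$ (rather than a cruder Fano-type bound of the form $1-\alpha-\log2/\log M$) is the crux, and is precisely the content of Tsybakov's refinement.
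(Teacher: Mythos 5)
This lemma is imported verbatim from \citet{tsybakov2008introduction} (Theorem 2.5) and the paper offers no proof of it --- it is used as a black box in the proof of Theorem~\ref{thm:minimax} --- so there is no in-paper argument to compare against. Your sketch correctly reproduces the canonical proof from Tsybakov's book: the reduction to an $(M{+}1)$-ary test via the minimum-distance selector and the triangle inequality for the semi-distance, followed by the bound on the minimax testing error through the Bayes error under the uniform prior, truncation of the likelihood ratios at level $\asymp 1/\sqrt{M}$ to extract the factor $\sqrt{M}/(1+\sqrt{M})$, and control of the truncation remainder on average via the KL condition (Tsybakov's Proposition~2.3 together with his Lemma~2.10). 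You leave the quantitative core --- the deviation bound for the averaged log-likelihood ratio that produces the exact terms $2\alpha$ and $\sqrt{2\alpha/\log M}$ --- at the level of a plan rather than carrying it out, but you correctly identify it as the crux and name the right tools, which is a reasonable stopping point for a textbook result cited by the paper. One small remark: as stated in the paper, condition (ii) asserts $\mathbb{P}_0\ll\mathbb{P}_j$, whereas the argument (and Tsybakov's original hypothesis) needs $\mathbb{P}_j\ll\mathbb{P}_0$ so that $v_j=d\mathbb{P}_j/d\mathbb{P}_0$ exists and $\mathrm{KL}(\mathbb{P}_j\|\mathbb{P}_0)$ is finite; you implicitly use the correct direction, so this is a typo in the paper's restatement rather than a gap in your argument.
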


\begin{lem}[Lemma 28 in~\citet{ghadermarzy2019near}]\label{lem:convexity}
Define $\mathbb{B}_{M}(\beta)=\{\Theta\in \mathbb{R}^{d_1\times \cdots \times d_K}\colon \anormSize{}{\Theta}\leq \beta \}$.  Let $\Omega\subset[d_1]\times\cdots \times [d_K]$ be a random set with $m=|\Omega|$, and assume that each entry in $\Omega$ is drawn with replacement from $[d_1]\times\cdots\times[d_K]$ using probability $\Pi$. Define
\[
\PiFnormSize{}{\Theta}^2={1\over m}\mathbb{E}_{\Omega\in\Pi}\MFnormSize{}{\Theta}^2.
\]
Then, there exists a universal constant $c>0$, such that, with probability at least $1-\exp\left(-{\sum_kd_k \over \sum_k \log d_k} \right)$ over the sampled set $\Omega$,
\[
{1\over m}\MFnormSize{}{\Theta}^2 \geq \PiFnormSize{}{\Theta}^2-c\beta\sqrt{\sum_k d_k\over m}
\]
holds uniformly for all $\Theta\in\mathbb{B}_M(\beta)$.
\end{lem}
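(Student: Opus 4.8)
The quantity to control is the one-sided uniform deviation of the empirical squared norm from its expectation. Writing $m=|\Omega|$ and recalling $\PiFnormSize{}{\Theta}^2=\mathbb{E}_{\Omega\sim\Pi}\big[\tfrac1m\MFnormSize{}{\Theta}^2\big]$, the claim is equivalent to the high-probability bound $Z\le c\beta\sqrt{\sum_kd_k/m}$ for
\[
Z\;=\;\sup_{\Theta\in\mathbb{B}_M(\beta)}\Big(\PiFnormSize{}{\Theta}^2-\tfrac1m\MFnormSize{}{\Theta}^2\Big).
\]
The plan is to bound $\mathbb{E}Z$ by standard empirical-process machinery (symmetrization, contraction, and atomic-norm duality) and then upgrade to a high-probability statement by a concentration-of-measure step.

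First I would symmetrize. Introducing i.i.d.\ Rademacher signs $\xi_1,\dots,\xi_m$ independent of the sampled indices $\omega_1,\dots,\omega_m$, the usual symmetrization inequality gives $\mathbb{E}Z\le \tfrac2m\,\mathbb{E}\sup_{\Theta\in\mathbb{B}_M(\beta)}\sum_{i=1}^m\xi_i\Theta_{\omega_i}^2$. The crucial structural input is Lemma~\ref{lem:Mnormbound}: on the M-norm ball we have $\mnormSize{}{\Theta}\le\anormSize{}{\Theta}\le\beta$, so every coordinate $\Theta_{\omega_i}$ lies in $[-\beta,\beta]$. Since $t\mapsto t^2$ vanishes at $0$ and is $2\beta$-Lipschitz on $[-\beta,\beta]$, the Ledoux--Talagrand contraction principle linearizes the squares:
\[
\mathbb{E}\sup_{\Theta\in\mathbb{B}_M(\beta)}\sum_{i=1}^m\xi_i\Theta_{\omega_i}^2\;\le\;2\beta\,\mathbb{E}\sup_{\Theta\in\mathbb{B}_M(\beta)}\sum_{i=1}^m\xi_i\Theta_{\omega_i}.
\]

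Next I would pass to atomic-norm duality. Collecting the signed multiplicities into a tensor $\tS=\entry{s_\omega}$ with $s_\omega=\sum_{i:\omega_i=\omega}\xi_i$, we have $\sum_i\xi_i\Theta_{\omega_i}=\langle\Theta,\tS\rangle$, and because $\anormSize{}{\cdot}$ is the gauge of $\mathrm{conv}(T_\pm)$ its support function is attained on the atoms, giving $\sup_{\anormSize{}{\Theta}\le\beta}\langle\Theta,\tS\rangle=\beta\max_{\tX\in T_\pm}\langle\tS,\tX\rangle$. For each fixed sign atom $\tX=\mathbf u_1\otimes\cdots\otimes\mathbf u_K$ with $\mathbf u_k\in\{\pm1\}^{d_k}$, the inner product $\langle\tS,\tX\rangle=\sum_{i=1}^m\xi_i\prod_k(\mathbf u_k)_{\omega_i^{(k)}}$ is a Rademacher sum of $m$ unit-bounded terms, hence sub-Gaussian with variance proxy $m$. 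A maximal inequality over the $2^{\sum_kd_k}$ atoms then yields $\mathbb{E}\max_{\tX\in T_\pm}\langle\tS,\tX\rangle\lesssim\sqrt{m\sum_kd_k}$. Combining the three displays, the contraction factor ($\le\mnormSize{}{\Theta}\le\beta$) and the ball radius ($\anormSize{}{\Theta}\le\beta$) each contribute, so $\mathbb{E}Z\lesssim\beta\sqrt{\sum_kd_k/m}$ once the absolute constant absorbs the entrywise magnitude scale.

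It remains to convert the in-expectation bound into the stated high-probability guarantee. Viewing $Z$ as a function of the $m$ i.i.d.\ sampled indices, resampling a single index perturbs $\tfrac1m\MFnormSize{}{\Theta}^2$ by at most $2\beta^2/m$ uniformly in $\Theta$, so a bounded-differences (McDiarmid) argument gives exponential concentration of $Z$ about $\mathbb{E}Z$. The hard part is pinning down the precise failure probability $\exp(-\sum_kd_k/\sum_k\log d_k)$: because the indices are drawn with replacement, the tensor $\tS$ carries random multiplicities, and one must balance the cardinality $2^{\sum_kd_k}$ of the sign atoms against the sub-Gaussian tails of $\langle\tS,\tX\rangle$ (equivalently, control the metric entropy of the atomic set) rather than apply a crude union bound. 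This delicate trade-off, carried out in~\citet{ghadermarzy2019near}, is where the $\sum_k\log d_k$ factor enters and is the main technical obstacle.
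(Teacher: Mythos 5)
First, a point of context: the paper does not prove this lemma at all --- it is imported verbatim, with attribution, as Lemma~28 of \citet{ghadermarzy2019near} and used as a black box in the proof of Theorem~\ref{thm:completion}. So there is no in-paper argument to compare against; your outline reconstructs the standard strategy behind results of this type (symmetrization, Ledoux--Talagrand contraction, duality of the atomic M-norm against the finite set of sign atoms $T_{\pm}$, then a maximal inequality), which is indeed the route taken in the cited source, and each of those structural steps is individually sound.

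There are, however, two concrete gaps. The first concerns the scaling in $\beta$. Your chain of inequalities gives
\[
\mathbb{E}Z \;\le\; \frac{2}{m}\cdot \Big(2\sup_{\Theta\in\mathbb{B}_M(\beta)}\mnormSize{}{\Theta}\Big)\cdot \beta\cdot \mathbb{E}\max_{\tX\in T_{\pm}}\langle \tS,\tX\rangle \;\lesssim\; \beta\,\Big(\sup_{\Theta\in\mathbb{B}_M(\beta)}\mnormSize{}{\Theta}\Big)\,\sqrt{{\textstyle\sum_k d_k}/m},
\]
where the middle factor is the Lipschitz constant of $t\mapsto t^2$ and the factor $\beta$ comes from the radius of the atomic-norm ball. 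The only control on $\mnormSize{}{\Theta}$ supplied by the hypothesis $\anormSize{}{\Theta}\le\beta$ is $\mnormSize{}{\Theta}\le\beta$ (Lemma~\ref{lem:Mnormbound}), so the argument as written yields $c\beta^{2}\sqrt{\sum_k d_k/m}$, not $c\beta\sqrt{\sum_k d_k/m}$ with a universal $c$: $\beta$ is a parameter of the statement, not an ``entrywise magnitude scale'' that an absolute constant can absorb. To reach the stated linear-in-$\beta$ bound you need a separate entrywise hypothesis $\mnormSize{}{\Theta}\le\alpha$ with $\alpha$ treated as fixed --- which is exactly the regime in which the lemma is invoked in Theorem~\ref{thm:completion}, where $\mnormSize{}{\hat\Theta-\trueT}\le 2\alpha$ while $\beta$ carries the rank dependence --- and you should state that explicitly rather than fold it into $c$. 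The second gap is that the high-probability upgrade is not actually carried out: you defer the derivation of the failure probability $\exp(-\sum_k d_k/\sum_k\log d_k)$ to the reference, and the McDiarmid route you sketch cannot produce it, since bounded differences of size $2\beta^{2}/m$ give a deviation exponent of order $mt^{2}/\beta^{4}$, which at $t\asymp\beta\sqrt{\sum_k d_k/m}$ still depends on $\beta$, whereas the stated probability does not. The tail bound has to come from running the union bound over the $2^{\sum_k d_k}$ sign atoms at the deviation level (or from Talagrand-type concentration for the supremum), which is precisely the piece you have outsourced. As a self-contained proof the proposal is therefore incomplete on the two points that determine the quantitative form of the conclusion.
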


\section{Additional explanations of HCP analysis}\label{sec:additionalHCP}
We perform clustering analyses based on the Tucker representation of the estimated signal tensor $\hat\Theta$. The procedure is motivated from the higher-order extension of Principal Component Analysis (PCA) or Singular Value Decomposition (SVD). Recall that, in the matrix case, we perform clustering on an $m\times n$ (normalized) matrix $\mX$ based on the following procedure. First, we factorize $X$ into
\begin{equation}
\mX = \mU\mSigma \mV^T,
\end{equation}
where $\mSigma$ is a diagonal matrix and $\mU,\mV$ are factor matrices with orthogonal columns. Second, we take each column of $\mV$ as a principal axis and each row in $\mU\mSigma$ as principal component. A subsequent multivariate clustering method (such as $K$-means) is then applied to the $m$ rows of $\mU\mSigma$.

We apply a similar clustering procedure to the estimated signal tensor $\hat\Theta$. We factorize $\hat \Theta$ based on Tucker decomposition.
\begin{equation}\label{eq:Tuckerest}
\hat \Theta = \hat \tC\times_1\hat \mM_1\times_2\cdots\times_K\hat \mM_K,
\end{equation}
where $\hat \tC\in\mathbb{R}^{r_1\times \cdots \times r_K}$ is the estimated core tensor, $\hat \mM_k\in\mathbb{R}^{d_k\times r_k}$ are estimated factor matrices with orthogonal columns, and $\times_k$ denotes the tensor-by-matrix multiplication~\citep{kolda2009tensor}. The mode-$k$ matricization of~\eqref{eq:Tuckerest} gives
\begin{equation}
\hat \Theta_{(k)} = \hat \mM_k\hat \tC_{(k)}\left(\hat \mM_K\otimes\cdots\otimes\hat \mM_1\right),
\end{equation}
where $\hat \Theta_{(k)} , \hat \tC_{(k)}$ denote the mode-$k$ unfolding of $\hat \Theta$ and $\hat \tC$, respectively. We conduct clustering on this  the mode-$k$ unfolded signal tensor. We take columns in $\left(\hat \mM_K\otimes\cdots\otimes\hat \mM_1\right)$ as principal axes and rows in $\hat \mM_k\hat \tC_{(k)}$ as principal components. Then, we apply $K$-means clustering method to the $d_k$ rows of the matrix $\hat \mM_k\hat \tC_{(k)}$.

We perform a clustering analysis on the 68 brain nodes using the procedure described above. Our ordinal tensor method outputs the estimated parameter tensor $\hat\Theta\in\mathbb{R}^{68\times 68\times136}$ with rank $(23,23,8)$. We apply $K$-means to the mode-1 principal component matrix of size $68\times 184$ ($184=23\times8$). The elbow method suggests 11 clusters among the 68 nodes (see Figure~\ref{figure:elbow}).
The clustering result is presented in Section \ref{sec:dataapplication}.

\begin{figure}[ht]
\begin{center}
\includegraphics[width=0.4\textwidth]{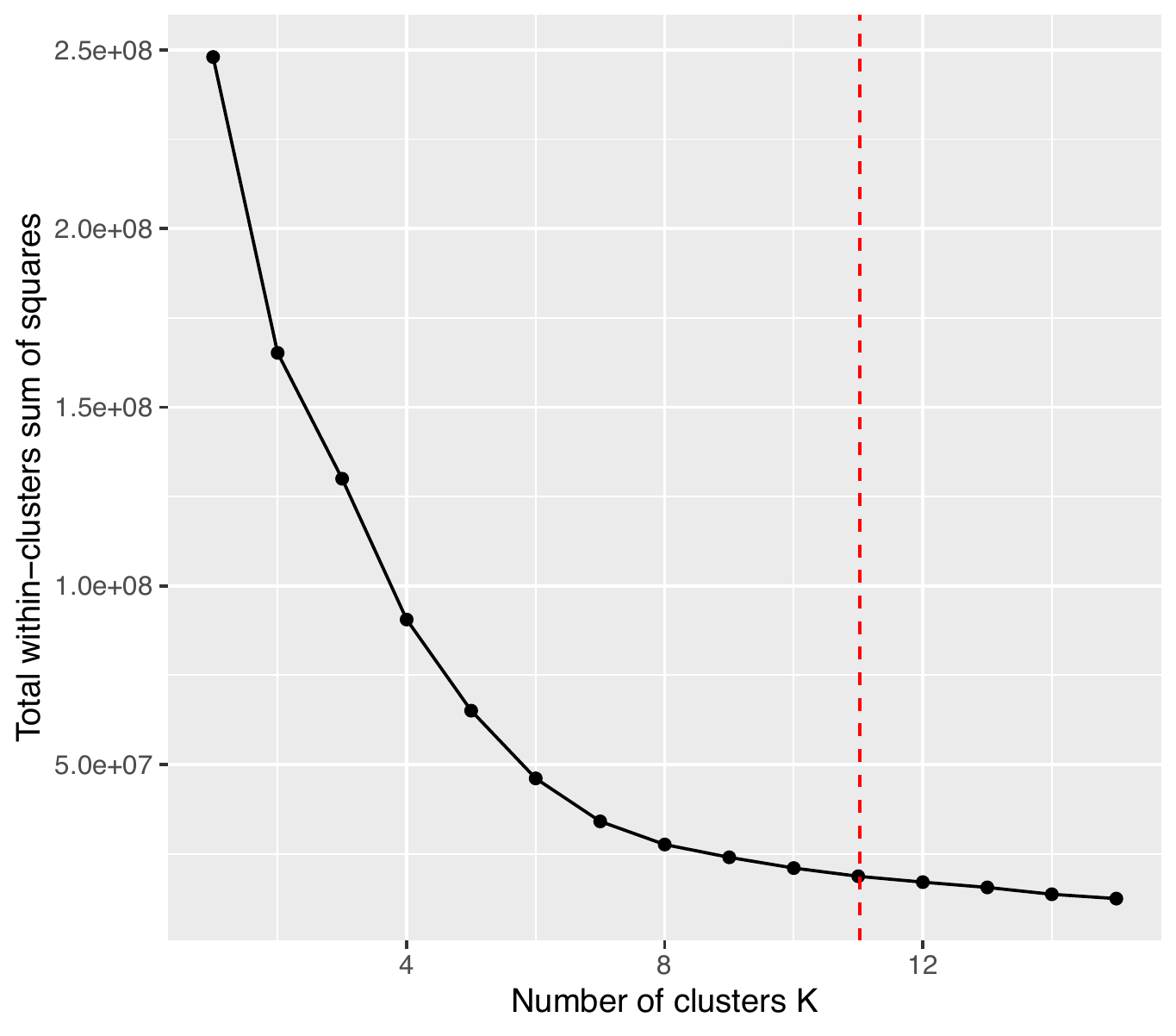}
\end{center}
\caption{Elbow plot for determining the number of clusters in $K$-means.}\label{figure:elbow}
\end{figure}

\clearpage
\bibliography{tensor_wang}
\bibliographystyle{apalike}

\end{document}